\documentclass{article}

\PassOptionsToPackage{dvipsnames}{xcolor}
\usepackage[accepted]{icml2026}

\usepackage{doi}
\usepackage{xurl}
\usepackage{hyperref}

\usepackage[utf8]{inputenc}
\usepackage[T1]{fontenc}

\usepackage{amsmath,amssymb,amsfonts,amsthm,wasysym}
\usepackage{mathtools}

\usepackage{graphicx}
\usepackage{subcaption}   
\usepackage{booktabs}
\usepackage{wrapfig}

\usepackage{microtype}
\usepackage{nicefrac}
\usepackage{csquotes}

\usepackage{enumitem}
\usepackage{titletoc}
\usepackage{etoc}

\usepackage{xcolor}

\usepackage{listings}
\usepackage{code_style}

\usepackage{amsmath,amsfonts,bm}

\def\eqref#1{equation~\ref{#1}}

\def\1{\bm{1}}

\DeclareMathAlphabet{\mathsfit}{\encodingdefault}{\sfdefault}{m}{sl}
\SetMathAlphabet{\mathsfit}{bold}{\encodingdefault}{\sfdefault}{bx}{n}

\newcommand{\E}{\mathbb{E}}

\newcommand{\R}{\mathbb{R}}

\newcommand{\Var}{\mathrm{Var}}

\DeclareMathOperator{\Tr}{Tr}

\usepackage{hyperref}
\hypersetup{breaklinks=true}
\MakeOuterQuote{"}

\newtheorem{cor}{Corollary}
\newenvironment{IEEEproof}{\begin{proof}}{\end{proof}}
\definecolor{rosy}{RGB}{220,20,120 }
\hypersetup{colorlinks=true, allcolors=NavyBlue}

\renewcommand{\v}{\mathbf{v}}
\newcommand{\x}{\mathbf{x}}
\newcommand{\y}{\mathbf{y}}
\newcommand{\z}{\mathbf{z}}

\newcommand{\uvec}{\mathbf{u}}
\newcommand{\vvec}{\mathbf{v}}
\newcommand{\W}{\mathbf{W}}
\newcommand{\I}{\mathbf{I}}
\newcommand{\Sig}{\bm{\Sigma}}
\newcommand{\N}{\mathcal{N}}
\newcommand{\tr}{\operatorname{tr}}
\let\origmu\mu
\renewcommand{\mu}{\bm{\origmu}}

\DeclareMathOperator{\df}{df}

\usepackage[capitalize,noabbrev]{cleveref}

\theoremstyle{plain}
\newtheorem{theorem}{Theorem}[section]
\newtheorem{proposition}[theorem]{Result}
\newtheorem{lemma}[theorem]{Lemma}

\theoremstyle{definition}

\theoremstyle{remark}
\newtheorem{remark}[theorem]{Remark}

\usepackage[textsize=tiny]{todonotes}

\icmltitlerunning{RMT of Diffusion Consistency}

\begin{document}

\twocolumn[
  \icmltitle{A Random Matrix Theory Perspective on the Consistency of Diffusion Models}

  \icmlsetsymbol{equal}{*}

  \begin{icmlauthorlist}
    \icmlauthor{Binxu Wang}{kempner}
    \icmlauthor{Jacob A. Zavatone-Veth}{sof,cbs}
    \icmlauthor{Cengiz Pehlevan}{kempner,cbs,seas}
  \end{icmlauthorlist}

  \icmlaffiliation{kempner}{Kempner Institute, Harvard University, Boston, MA, USA}
  \icmlaffiliation{sof}{Society of Fellows, Harvard University, Cambridge, MA, USA}
  \icmlaffiliation{seas}{John A.\ Paulson School of Engineering and Applied Sciences (SEAS), Harvard University, Cambridge, MA, USA}
  \icmlaffiliation{cbs}{Center for Brain Science, Harvard University, Cambridge, MA, USA}

  \icmlcorrespondingauthor{Binxu Wang}{binxu\_wang@hms.harvard.edu}
  \icmlcorrespondingauthor{Jacob Zavatone-Veth}{jzavatoneveth@fas.harvard.edu}
  \icmlcorrespondingauthor{Cengiz Pehlevan}{cpehlevan@seas.harvard.edu}

  \icmlkeywords{Machine Learning, ICML}

  \vskip 0.3in
]

\printAffiliationsAndNotice{}  

\begin{abstract}
Diffusion models trained on different, non-overlapping subsets of a dataset often produce strikingly similar outputs when given the same noise seed. We trace this consistency to a simple linear effect: the shared Gaussian statistics across splits already predict much of the generated images. To formalize this, we develop a random matrix theory (RMT) framework that quantifies how finite datasets shape the expectation and variance of the learned denoiser and sampling map in the linear setting. For expectations, sampling variability acts as a renormalization of the noise level through a self-consistent relation $\sigma^2 \mapsto \kappa(\sigma^2)$, explaining why limited data overshrink low-variance directions and pull samples toward the dataset mean. For fluctuations, our variance formulas reveal three key factors behind cross-split disagreement: \textit{anisotropy} across eigenmodes, \textit{inhomogeneity} across inputs, and overall scaling with dataset size. Extending deterministic-equivalence tools to fractional matrix powers further allows us to analyze entire sampling trajectories. The theory sharply predicts the behavior of linear diffusion models, and we validate its predictions on UNet and DiT architectures in their non-memorization regime, identifying where and how samples deviate across training data split. This provides a principled baseline for reproducibility in diffusion training, linking spectral properties of data to the stability of generative outputs.
\end{abstract}

\addtocontents{toc}{\protect\setcounter{tocdepth}{-1}}

\begin{figure*}[bt]
    \centering
    \vspace{-5pt}
\includegraphics[width=0.88\linewidth]{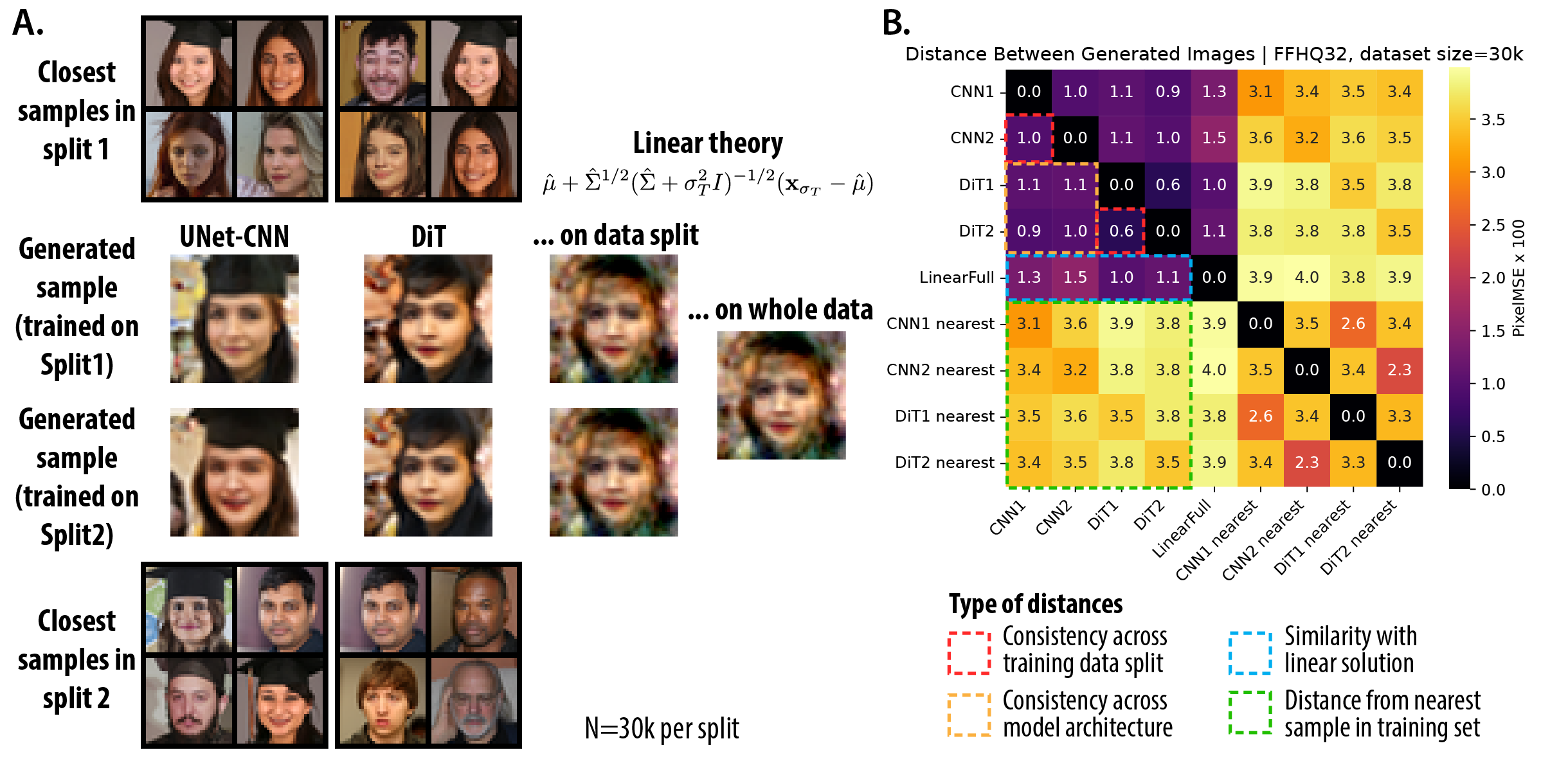}
    \caption{\textbf{Motivating observation and the linear theory}. \textbf{A.} Diffusion models trained on non-overlapping data splits generate similar images from the same initial noise, even with different neural network architectures, consistent with results in \cite{kadkhodaie2024generalization,zhang2024emergence}. 
    Notably, generated samples from both splits are visually similar to the prediction from the Gaussian linear theory \citep{wang_unreasonable_2024}.
    \textbf{B.} Quantification of \textbf{A} by paired image distances (MSE) averaging from 512 initial noises. The low-MSE block structure of the four DNNs and linear solution emphasize that this consistency effect is related to the linear structure. CNN1 denotes the CNN trained on split1, similar for CNN2, DiT1, DiT2; CNN1 nearest denotes the set of closest training set sample for the 512 generated image.
    We hide results for linear predictor of two splits since their samples are nearly identical with the linear predictor for the full dataset.
    Similar analysis for FFHQ64 is shown in Fig.~\ref{suppfig:motivation_schematics}.
    }
    \vspace{-10pt}
    \label{fig:motivation_schematics}
\end{figure*}

\section{Introduction}

Diffusion models and their relatives such as flow matching have become the dominant generative modeling paradigm across diverse domains, including images, video, and proteins. By learning a time-dependent vector field, these models transform Gaussian noise into structured samples through an ordinary differential equation (ODE) or its stochastic variants \citep{song2021scorebased,albergo2023stochasticinterpolants}.

A distinctive feature of diffusion models is their striking \emph{consistency across training runs} (Figure \ref{fig:motivation_schematics}).
When trained on the same distribution, even with disjoint datasets, different architectures, or repeated initializations, diffusion models often map the same noise seed to highly similar outputs under the deterministic probability flow \citep{kadkhodaie2024generalization,zhang2024emergence}.
This phenomenon contrasts with other generative modeling frameworks including GANs and VAEs, where the isotropic Gaussian latent space admits arbitrary rotations, leading to run-to-run variability in the mapping from latent codes to data \citep{Martinez2022ReproducibleRVAE}.

\paragraph{Why does consistency matter?}
Consistency across non-overlapping data splits suggests that diffusion models recover aspects of the underlying \emph{data manifold} that are insensitive to the specific training set.
This raises fundamental questions about how such models generalize beyond their training samples, to what extent they memorize idiosyncratic data, and whether their outputs reflect universal statistical regularities of the distribution.
These issues connect to emerging theoretical and empirical debates on generalization, memorization, and creativity in diffusion models \citep{kamb2024creative, niedoba2024towards, kadkhodaie2024generalization, chen2025interpolationEffectScoreSmoothing, vastola2025generalization, bonnaire2025diffusionRMTRegularize}; see also further discussion in App.~\ref{app:ext_related_works}.

\paragraph{Our approach.}
We analyze this phenomenon through the lens of random matrix theory (RMT), beginning with the observation that the consistency effect can already be predicted by a linear Gaussian model (Fig.~\ref{fig:motivation_schematics}).
Building on the linear denoiser framework, we develop a precise RMT analysis of how finite-sample variability in the empirical covariance affects both the expectation and fluctuation of denoisers and sampling maps (Fig.~\ref{fig:noise_renorm_eff}\textbf{A}).
We then validate these theoretical predictions against deep diffusion models (CNNs and DiTs), showing that the same RMT principles still govern their inhomogeneity of consistency across data splits. Our \textbf{main contributions} are as follows:

\begin{itemize}[leftmargin=15pt, itemsep=0pt, topsep=2pt]
    \item \textbf{Linear origin of consistency:} We show that shared Gaussian statistics i.e. linear denoiser already predict cross-split agreement.
    \item \textbf{Finite-sample RMT:} We show that randomness enters through a \textit{renormalized} noise scale $\sigma^2\!\mapsto\!\kappa(\sigma^2)$, explaining overshrinkage of low-variance modes.
    \item \textbf{Variance law:} We derive a factorized form for fluctuation—explaining structures in cross-split deviation: anisotropy across eigenmodes, inhomogeneity across inputs, and global scaling with training set size.
    \item \textbf{Fractional-power equivalence:} We use integral formulae for fractional matrix powers to derive deterministic equivalents (DE) for full sampling trajectories. 
    \item \textbf{Deep-net validation:} We qualitatively confirm overshrinkage, anisotropy, and inhomogeneity phenomenon in UNet and DiT models beyond the linear regime.
\end{itemize}
\section{Notation and Set up}

\paragraph{Score-based Diffusion Models.}
Let $p_0(\x)$ be the target data distribution. For each noise scale $\sigma>0$, define the noised distribution as
\(
  p(\x;\sigma)
  = \bigl(p_0 * \mathcal{N}(0,\sigma^2\mathbf I)\bigr)(\x)
  = \int p_0(\mathbf{y})\,\mathcal{N}(\x\mid\mathbf{y},\sigma^2\mathbf I)\,d\mathbf{y}.
\)
The corresponding \emph{score function} is \(\nabla_{\mathbf x}\log p(\x;\sigma),\)
i.e.\ the gradient of the log–density. 
In the EDM formulation \citep{karras2022elucidatingDesignSp}, the probability flow ODE (\ref{eq:probflow_ode_edm}-ODE) reads,
\begin{equation}\label{eq:probflow_ode_edm}
  \frac{d\x}{d\sigma}
  = -\,\sigma\,\nabla_{\mathbf x}\log p(\x;\sigma).\tag{PF}
\end{equation}
This ODE transports samples from $p(\,\cdot\,;\sigma_2)$ to $p(\,\cdot\,;\sigma_1)$ when integrating $\sigma$ from $\sigma_2$ to $\sigma_1$. In particular, by starting from Gaussian noise $\mathcal N(0,\sigma_T^2 I)$ and integrating the PF-ODE from a sufficiently large $\sigma_T$ down to $\sigma=0$, one recovers clean samples from $p_0$.
We adopt the EDM parametrization for its notational simplicity; other common diffusion formalisms are equivalent up to simple rescalings of time and space \citep{karras2022elucidatingDesignSp}.

To estimate the score function of $p_0(\x)$, we minimize the denoising score matching (DSM) objective \citep{vincent2011connection}.
Parametrizing the denoiser $\mathbf{D}_\theta$ with a function approximator, then at noise level $\sigma$ the DSM objective reads
\small
\begin{align}\label{eq:dsm-obj}
    \mathcal{L}_\sigma \;=\; \E_{\x_0\sim p_0,\; \z \sim \N(0,\I)} \Bigl\|\mathbf{D}_\theta (\x_0 + \sigma \z;\sigma)\;-\;\x_0\Bigr\|_2^2.\tag{DSM}
\end{align}
\normalsize
Using Tweedie's formula \cite{robbins1992empiricalTweedies}, the score function can be obtained via the optimized `denoiser' $\mathbf{s}_\theta(\x,\sigma)=(\mathbf{D}_\theta(\x,\sigma)-\x)/\sigma^2$.
In practice \citep{karras2022elucidatingDesignSp}, diffusion models balance these scale-specific objectives with a weighting function $w(\sigma)$, yielding the overall training loss
$\mathcal{L}=\int_\sigma d\sigma\,\, w(\sigma)\, \mathcal{L}_\sigma$.

\paragraph{Data distribution.}
Consider a ground truth data distribution $p_0(\x),\ \x\in\R^d$, with population mean $\mu$ and covariance $\Sig$. From this underlying distribution, we construct an empirical distribution $\{\x_i\}$ with $n$ samples, stacked as $X\in \R^{n\times d}$, then we denote the empirical mean $\hat\mu$ and covariance $\hat\Sig$.

Here we are interested in the effect of the number of samples $n$, and different realizations of $X$ on the expectation (mean) and fluctuation (variance) of learned diffusion model. More specifically, we will study how randomness in the empirical covariance $\hat \Sig$ drives variability in the denoiser, relative to the population covariance $\Sig$.

\paragraph{Linear Denoiser.}
A tractable setting for analytical study is the linear denoiser
\begin{align}
    \mathbf D(\x;\sigma)=\W_\sigma\,\x+\mathbf b_\sigma,
\end{align}
which is an affine function of the noised state, independent across noise scales.
As in linear regression, the training data enters the learned denoiser only through their first two moments \citep{Hastie2019Surprises}. More explicitly, minimizing \ref{eq:dsm-obj} $\mathcal{L}_\sigma$ for the empirical dataset $p_0=\{\x_i\}${\footnotemark} yields the optimal empirical linear denoiser, depending on $\hat\mu,\hat\Sig$.
\footnotetext{With $n$ samples, we average over infinite noise draws, so each sample is reused infinitely.}
\begin{align}\label{eq:empir_gauss_optim_denoiser}
    \mathbf{D}_{\hat \Sig}^*(\x;\sigma) \;=&\; \hat\mu + (\hat \Sig + \sigma^2\I)^{-1}\hat \Sig(\x - \hat\mu).
\end{align}
For simplicity, we will later set $\hat\mu=\mu$ to isolate the effect of the empirical covariance $\hat \Sig$.

\paragraph{Sampling trajectory and sampling map.}
Given an initial noise pattern $\x_{\sigma_{T}}\sim\mathcal N(0,\sigma_T^2I)$, the \ref{eq:probflow_ode_edm} -ODE evolves it to a final sample $\x_0$.
We refer to this mapping from $\x_{\sigma_{T}}$ to $\x_0$ as the \emph{sampling map}; the phenomenon of consistency is precisely about the stability of this mapping across different realizations of training data.
When the denoiser is linear and optimal at each noise scale, the PF-ODE can be solved in closed-form by projecting onto the eigenbasis of the data, yielding the analytic sampling trajectory \citep{wang_unreasonable_2024,pierret2024diffusion4Gaussian}.
\begin{align}\label{eq:PFODE_sol_gauss}
&\x_{\hat\Sig}(\x_{\sigma_{T}},{\sigma}) \\
 & =\hat\mu+(\hat\Sig+\sigma^{2}I)^{1/2}(\hat\Sig+\sigma_{T}^{2}I)^{-1/2}(\x_{\sigma_{T}}-\hat\mu)\notag
\end{align}
Taking $\sigma\to0$ recovers the Wiener filter with Gaussian prior \citep{wiener1964extrapolation}, which has been shown to be a strong predictor of the sampling map of the learned diffusion networks \citep{wang_unreasonable_2024,li2024understanding,Lukoianov2025Locality}.
In the linear case, the mapping remains affine in the initial state, with the matrix $\hat\Sig^{1/2}(\hat\Sig+\sigma_{T}^{2}I)^{-1/2}$ emerging as the central object of analysis.

\section{Motivating Empirical Observation}\label{sec:motivation}

We begin with a simple experiment illustrating the consistency phenomenon.
We train UNet-CNN~\citep{song2019generative} and DiT~\citep{Peebles_2023_ICCV} diffusion models under the EDM framework~\citep{karras2022elucidatingDesignSp}, each on two non-overlapping splits of FFHQ32 (30k images each; details in App.~\ref{app:DNN_method}).
When sampling from the same noise seed with a deterministic solver, the outputs are visually similar across both splits and architectures (Fig.~\ref{fig:motivation_schematics}A).
Quantification via pixel MSE confirms this effect: generated images are more similar across splits than to their nearest neighbors in the training set (Fig.~\ref{fig:motivation_schematics}B), ruling out memorization~\citep{kadkhodaie2024generalization,zhang2024emergence}.

Strikingly, the linear Gaussian predictor (Wiener filter)~\citep{wang_unreasonable_2024} already accounts for much of this behavior.
Using the empirical mean and covariance $(\hat\mu,\hat\Sig)$ of each split in Eq.~\ref{eq:PFODE_sol_gauss}, the linear predictor yields nearly identical outputs across splits, also sharing visual similarities with CNN and DiT results (Fig.~\ref{fig:motivation_schematics}A,B).
This suggests that consistency arises because different data splits share nearly identical Gaussian statistics, the only feature the linear diffusion can absorb \citep{Wang2025AnalySpec}. Pointwise, samples closer to the Gaussian solution are also more consistent across splits (Pearson $r=0.244, p=5\times 10^{-15}$), suggesting convergence toward the Gaussian predictor underlies consistency. More visual examples and quantitative comparisons for other datasets (CIFAR10, CIFAR100, FFHQ at 32 and 64 pixels, LSUN church and bedroom dataset at 32 and 64 pixels) are provided in Appendix \ref{app:extended_visual_motivation} and \href{https://animadversio.github.io/diffusion-consistency-rmt/}{our website}.

In summary, (i) diffusion models trained on independent splits converge to nearly identical sampling maps, (ii) this property holds across architectures, and (iii) a simple Gaussian predictor already captures much of the effect.
While linear diffusion is more consistent than deep networks—which can exploit higher-order statistics—it provides a necessary baseline: \textit{if Gaussian statistics differ, deep models may not yield consistent samples}.
To test this directly, we perform a counterfactual experiment in which the training data are intentionally partitioned to induce mismatches in mean and/or variance by stratifying samples along a chosen principal component. Diffusion models trained on these statistically misaligned splits produce markedly less consistent generations (Fig.~\ref{suppfig:counterfactual_visual}, \ref{suppfig:counterfactual_quantitative}).
These observations motivate our random matrix theory analysis of finite-sample effects.

\begin{figure*}[bt]
    \centering
    \vspace{-5pt}
    \includegraphics[width=\linewidth]{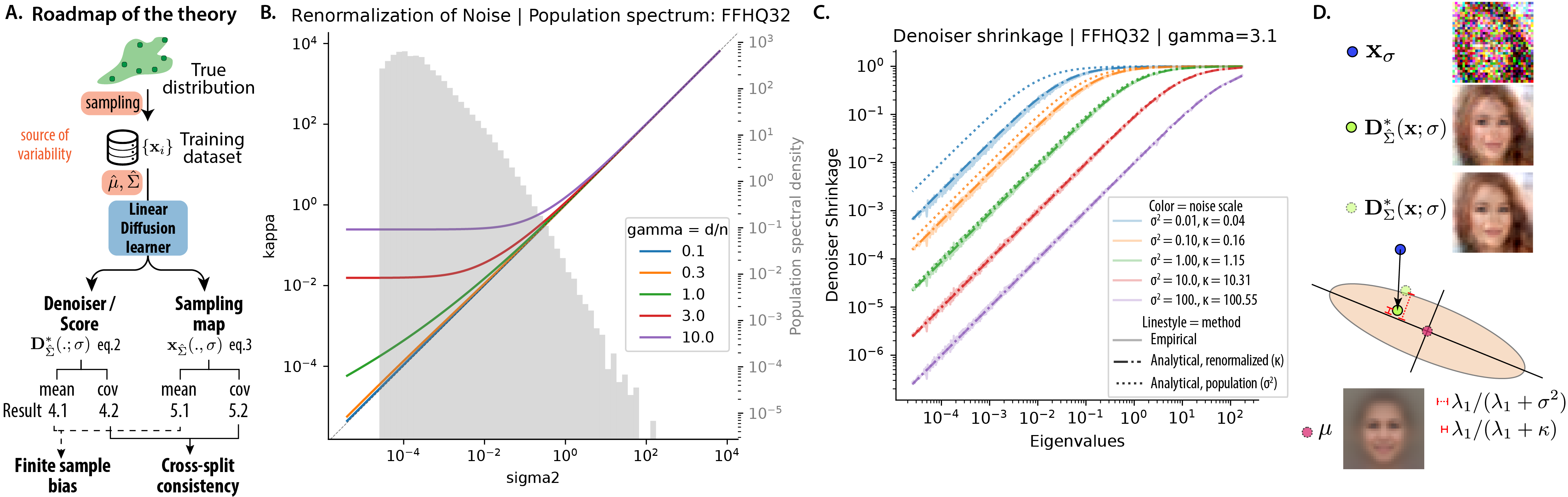}
    \vspace{-15pt}
    \caption{\textbf{Renormalization of noise and its effect on expectation of linear denoiser}. \textbf{A.} Roadmap of our theory. \textbf{B.} The relationship between the renormalized and raw noise variance $\kappa(\sigma^2)$ as a function of $\gamma=d/n$, using the empirical spectrum of FFHQ32 as the limiting spectrum (plot underneath). See \ref{app:numerical_methods} for numerical methods.
    \textbf{C.} Shrinkage factor of linear denoiser along population eigenvectors at different noise scales. Empirical shows $\v^\top\hat\Sig (\hat\Sig+\sigma^2 I)^{-1} \v$ , when $\v=\uvec_k$ population PCs, at dataset size $n=1000,\gamma\approx3.1$.
    \textbf{D.} Schematics showing the overshrinking effect at lower eigenspaces, using linear denoiser outcome of faces as example.
    }
    \vspace{-7pt}
    \label{fig:noise_renorm_eff}
\end{figure*}

\section{Theory of Diffusion Consistency Across Independent Data}
The goal of the theoretical analysis is to calculate the expectation and covariance of various quantities in diffusion model under independent instantiation of dataset (Fig.~\ref{fig:noise_renorm_eff}\textbf{A}).

\subsection{Self consistency equation and renormalized noise scale}

\paragraph{Deterministic equivalence of sample covariance.}
Our central technical tool is deterministic equivalence \citep{Potters2020First,Bun2015Rotational}, which allows random matrices to be replaced by deterministic surrogates—an approximation that becomes exact in the large-dimensional limit. In particular, we rely on the deterministic equivalence relation for the empirical covariance matrix $\hat\Sig$ \citep{atanasov2024scalingRenormalization,bach2024highDimRMT_doubleDescent},
\begin{equation}\label{eq:DE_cov_equiv}
    \hat\Sig(\hat\Sig+\lambda I)^{-1}\asymp \Sig(\Sig+\kappa(\lambda) I)^{-1} \tag{DE}
\end{equation}
where $\kappa$ is the unique positive solution to the self-consistent equation \citep{silverstein1995strongConvergence,Marcenko1967Distribution}.
\begin{align}\label{eq:silverstein_self_consistent}
    \kappa(\lambda)-\lambda&=\gamma\kappa(\lambda)\int_{0}^{\infty}\frac{sd\nu(s)}{\kappa(\lambda)+s} \nonumber \\
&=\gamma\kappa(\lambda)\tr[\Sig(\Sig+\kappa(\lambda)I)^{-1}],
\end{align}
where $\gamma=d/n$ is the aspect ratio, and $\nu$ is the (limiting) spectral measure of $\Sig$.\footnote{We write $A_n \asymp B_n$ for \emph{deterministic equivalence}: for any sequence of deterministic matrices $C_n$ with uniformly bounded spectral norm,
\(
\tr\!\left[C_n (A_n - B_n)\right] \to 0 \quad \text{as } d,n\to\infty,\ d/n\to\gamma.
\). For the case of rank-one observables formed of sequences of bounded-norm vectors $\mathbf{u}_{n}$ and $\mathbf{v}_{n}$, the corresponding equivalence statement is that $\mathbf{u}_{n}^{\top} (A_{n}-B_{n}) \mathbf{v}_{n} \to 0$. 
Equivalences of scalar expressions are denoted similarly with $\asymp$.
We state the detailed technical assumptions of our results in Appendix~\ref{app:deterministic_equivalence}.}
Note we use $\tr$ to denote the \textit{normalized trace}, such that $\tr[I]=1$, and $\Tr$ the unnormalized one.
More elaborate two-point deterministic equivalences \citep{bach2024highDimRMT_doubleDescent,Atanasov2024Risk,Atanasov2025TwoPoint} are required to derive the variance results in the paper, which can be found in Appendix \ref{app:deterministic_equivalence}.

\paragraph{Property of renormalized noise $\kappa(\sigma^2)$.}
As Eq.~\ref{eq:DE_cov_equiv} suggests, with trace-like measurement, the stochastic effects of sample covariance $\hat\Sig$ can be absorbed into the scalar $\kappa(\lambda)$ leaving the population covariance $\Sig$ otherwise unchanged,  similar to the renormalization of self-energy in field theory \citep{Azee2010quantumnutshell,atanasov2024scalingRenormalization,Hastie2019Surprises,bach2024highDimRMT_doubleDescent}.
In our context, $\lambda$ usually corresponds to noise variance $\sigma^2$, so we could understand $\kappa$ as the renormalized noise variance.
To build intuition, we numerically evaluate this nonlinear mapping using the spectrum of natural images (FFHQ) (Fig.~\ref{fig:noise_renorm_eff}\textbf{B}, Method in \ref{app:numerical_methods}).
The renormalization effect $\kappa(\sigma^2)$ is most pronounced at low noise scales, and when the sample number is much fewer than the data dimension ($\gamma=d/n\gg1$).
\paragraph{Notation}
We define the degrees-of-freedom functions
\begin{align}
\begin{split}
    \df_{1}(\lambda) &:=\Tr[\Sig(\Sig+\lambda I)^{-1}],
    \\
    \df_{2}(\lambda) &:=\Tr[\Sig^{2}(\Sig+\lambda I)^{-2}],
    \\
    \df_{2}(\lambda,\lambda^\prime)	&:=\Tr[\Sig^{2}(\Sig+\lambda I)^{-1}(\Sig+\lambda^\prime  I)^{-1}].
\end{split}
\end{align}
We have $0 \leq \df_{2}(\lambda) \leq \df_{1}(\lambda) \leq \operatorname{rank}(\Sig) \leq d$, and $\df_{2}(\lambda,\lambda') \leq \sqrt{\df_{2}(\lambda) \df_{2}(\lambda')}$. Moreover, on the solution of the Silverstein equation for $\lambda > 0$, we have the strict inequality $\df_{1}(\kappa) < n$.

\subsection{Expectation: Finite Data Renormalize Noise Scales}
Next we apply these tools to compute the expectation and fluctuation of the denoiser under dataset realizations. The form of Eq.~\ref{eq:empir_gauss_optim_denoiser} naturally suggests the deterministic equivalence in Eq.~\ref{eq:DE_cov_equiv}, leading to the following result.
\begin{proposition}[Deterministic equivalent of the denoiser expectation]
 \label{prop:denoiser_exp_DE}
 Assuming $\hat\mu=\mu$, and given a fixed probe vector $\v\in \R^d$, then the optimal empirical linear denoiser has the following deterministic equivalent. (Derivation in App.~\ref{app:proof_denoiser_exp_de}).
 \small
\begin{align}\label{eq:denoiser_onestep_mean_DE}
\mathbb{E}_{\hat{\Sig}}\Big[\vvec^{\top}\mathbf{D}_{\hat{\Sig}}^{*}(\x;\sigma)\Big] & \asymp\vvec^{\top}\mathbf{D}_{\Sig}^{*}(\x;\sqrt{\kappa(\sigma^{2})})\\
 &=\vvec^{\top}\Big[\mu+\Sig(\Sig+\kappa(\sigma^{2})I)^{-1}(\x-\mu)\Big]\notag.
\end{align}
\end{proposition}

\paragraph{Interpretation}
In expectation, finite data act by renormalizing the noise scale, $\sigma^{2} \to \kappa(\sigma^{2})$, in the population denoiser.
This is equivalent to adding an adaptive Ridge penalty to the ~\ref{eq:dsm-obj} objective.
Compared to the population solution $\mathbf{D}_{\Sig}^{*}$, the finite-sample denoiser shrinks low-variance directions more aggressively, treating them as noise and pulling outputs toward the dataset mean (Fig.~\ref{fig:noise_renorm_eff}\textbf{D}).
Numerically, deviations are indeed most pronounced in the lower spectrum and at lower noise levels, where the renormalization effect is the strongest (Fig.~\ref{fig:noise_renorm_eff}\textbf{C}).
Since smaller noise scale is associated with generation of high frequency details in image, this result suggests these detail eigenmodes take more samples to be learned correctly, which we will confirm in next section.


\subsection{Fluctuation: Anisotropic and Inhomogeneity of Denoiser Consistency}

Next, we tackle the fluctuation due to dataset realizations, which addresses the consistency of diffusion models trained on independent data splits.
We derive the following equivalence using two-point and one-point deterministic equivalence identities (Eq. \ref{eq:DE_mod_resolv_quad_two_z},\ref{eq:DE_mod_resolv_lin_one_z}, \citet{bach2024highDimRMT_doubleDescent}).

\begin{proposition}[Deterministic equivalent of the denoiser variance]
\label{prop:denoiser_var_DE}
Assuming $\hat\mu=\mu$, across dataset realizations of size $n$, the covariance of the optimal empirical linear denoiser is denoted as $\mathcal{S}_{D}$; at point $\x$, along direction $\vvec$, the variance is given by $\vvec^\top \mathcal{S}_{D}(\x) \vvec$, which admits the following deterministic equivalence.
\begin{align}\label{eq:denoiser_var_deter_equiv}
    &\vvec^{\top}\mathcal{S}_{D}(\x)\vvec	:=\Var_{\hat{\Sig}}[\vvec^{\top}\mathbf{D}_{\hat{\Sig}}^{*}(\x;\sigma)] \\
	&\asymp {\frac{\kappa(\sigma^{2})^{2}}{n-\df_{2}\bigl(\kappa(\sigma^{2})\bigr)}}
    \underbrace{\Diamond(\vvec,\kappa(\sigma^{2}),\Sig)}_{\text{anisotropy}}\
    \underbrace{\Diamond(\x-\mu,\kappa(\sigma^{2}),\Sig)}_{\text{inhomogeneity}} \notag
\end{align}
where $\Diamond(\uvec,\kappa,\Sig):=\uvec^{\top}\bigl(\Sig+\kappa I\bigr)^{-2}\Sig\uvec$. Derivation in App. \ref{app:proof_denoiser_var_de}.\footnote{We note that this result assumes $\vvec$ and $\x-\mu$ are generic. When they are strongly aligned, additional exchange terms proportional to $[\vvec^{\top}\Sig (\Sig+\kappa I)^{-2} (\x-\mu)]^{2}$ can contribute at leading order; see Appendix~\ref{app:proof_denoiser_var_de} for details.}
\end{proposition}

\begin{figure*}[bt]
    \centering
    \vspace{-5pt}
    \includegraphics[width=\linewidth]{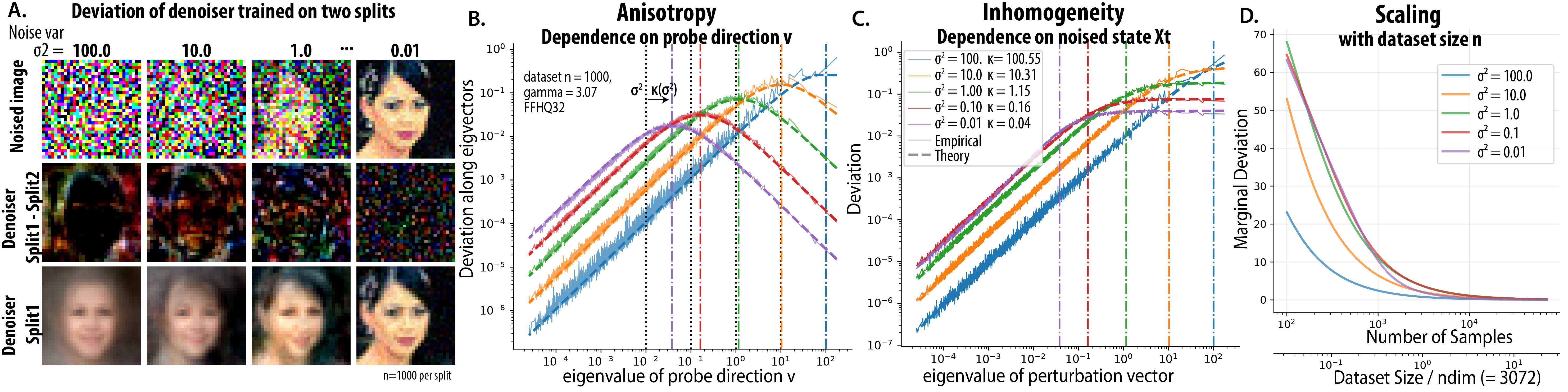}
    \vspace{-16pt}
    \caption{\textbf{Structure of denoiser deviation across dataset splits.}
    \textbf{A.} Visual examples of linear denoisers trained on two disjoint splits of FFHQ32 as noise variance $\sigma^2$ decreases, $n=1000$. \textbf{Top}, $\x_t$ noised sample; \textbf{Bottom}, output of linear denoiser (trained on split 1) $\mathbf D_{\hat \Sig_1}(\x_t,\sigma)$; \textbf{Middle}, deviation between two denoisers (normalized) $\mathbf D_{\hat \Sig_1}(\x_t,\sigma)-\mathbf D_{\hat \Sig_2}(\x_t,\sigma)$.
    At high noise, denoisers diverge on global, low-frequency content; at low noise, they deviate at specular details.
    \textbf{B.} \emph{Anisotropy:} variance depends on probe direction $\mathbf v$; deviation is maximized when the eigenvalue $\lambda_k$ of $\mathbf v$ matches the renormalized noise $\kappa(\sigma^2)$, in agreement with theory.
    \textbf{C.} \emph{Inhomogeneity:} variance depends on probe location $\mathbf x_t$; samples displaced along high-variance eigenmodes induce larger deviations.
    \textbf{D.} \emph{Global scaling:} marginal deviation decays with dataset size $n$, vanishing in the infinite-sample limit.
    }
    \label{fig:denoiser_var_anisotropy_scaling}
\end{figure*}

\paragraph{Interpretation.}
The variance of denoiser across dataset realizations factorizes into three interpretable components: a dependence on probe direction (\emph{anisotropy}), a dependence on noised sample location (\emph{inhomogeneity}), and an overall scale with $n$ and $\sigma$ (\emph{global scaling}). Note, given the relation between score and denoiser, the covariance of score is $\sigma^{-4}\mathcal{S}_{D}(\x)$, \textit{i.e.} all results translate by scaling. 

\paragraph{Anisotropy in probe direction.}
The anisotropy of consistency is governed by
$\Diamond(\vvec,\kappa,\Sig)$.
When the probe $\vvec$ aligns with a principal component (PC) $\mathbf u_k$ of $\Sig$ with eigenvalue $\lambda_k$, this reduces to $\chi(\lambda_k,\kappa):=\lambda_k/(\lambda_k+\kappa)^2$.
The function $\chi(\lambda,\kappa)$ is bell-shaped in $\lambda$, uniquely maximized at $\lambda=\kappa$ with value $1/(4\kappa)$. Thus, for each noise scale, the directions of greatest uncertainty are precisely those whose variances match the renormalized noise $\kappa(\sigma^2)$  (Fig.~\ref{fig:denoiser_var_anisotropy_scaling}\textbf{B}).
This effect is evident visually. For linear denoisers trained on non-overlapping splits of human face dataset (FFHQ), their differences follow the spectral structure of natural images \citep{ruderman1994statNatImage}: at high noise the deviations appear as low-frequency facial envelopes, while at low noise they shift to high-frequency specular patterns (Fig.~\ref{fig:denoiser_var_anisotropy_scaling}\textbf{B}).
Quantitatively, the MSE between two denoisers along each PC matches the variance prediction of Eq. \ref{eq:denoiser_var_deter_equiv}, with the expected factor of two from independent sampling
 (Lemma \ref{lemma:dev_double_var})\footnotemark.
 \footnotetext{Note that the empirical analyses in our paper, as well as in prior works, examine consistency across diffusion models trained on \textit{non-overlapping} data splits. This differs slightly from \textit{independently sampled} subsets when the full dataset is finite. Nevertheless, the agreement between theory and linear denoiser suggests this correction is negligible.}

\paragraph{Inhomogeneity in input location.}
The inhomogeneity of denoiser variance across input space is governed by $\Diamond(\x-\mu,\kappa,\Sig)$. While structurally similar to the anisotropy factor, here $\x-\mu$ is drawn from the noised data distribution rather than a unit probe. Approximating $\x-\mu$ as lying on the ellipsoidal shell of $\mathcal{N}(0,\Sig+\sigma^2I)$, its displacement along eigenvector $\mathbf u_k$ has typical radius $\sqrt{\sigma^2+\lambda_k}$.
Substituting gives
$\Diamond(\sqrt{\sigma^2+\lambda_k}\,\mathbf u_k,\kappa,\Sig)
= (\sigma^2+\lambda_k)\,\chi(\lambda_k,\kappa)$.
Unlike the pure anisotropy factor, this expression grows monotonically with $\lambda_k$. Thus, denoiser variability is amplified for inputs displaced along high-variance modes, yielding larger uncertainty for such locations (Fig.~\ref{fig:denoiser_var_anisotropy_scaling}\textbf{C}), which agree quantitatively with numerical results. Based on this factor, denoiser consistency can be predicted for noisy images point by point (e.g. Pearson $r=0.94$ across noised images, at $\sigma^2=1$, $n=1000$, Fig. \ref{suppfig:inhomogeneity_theory_vs_linear_denoiser}).


\paragraph{Global scaling with sample size.}
Finally, marginalizing over all directions and noised samples yields a closed-form expression for the overall denoiser variance (Eq.~\ref{eq:denoiser_var_overall_scaling}, Fig.~\ref{fig:denoiser_var_anisotropy_scaling}\textbf{D}). At large $n$ limit, denoiser variance scale inversely with sample number $n^{-1}$, reminiscent of classic statistical laws; while at smaller $n$, the renormalization effects modify the scaling.

\paragraph{Summary.}
In sum, the variance structure reveals three key effects. \textit{Anisotropy}: uncertainty is maximized along eigenmodes whose variance $\lambda_k$ is comparable to the renormalized noise $\kappa(\sigma^2)$. \textit{Inhomogeneity}: noised points displaced along high-variance directions experience larger uncertainty. \textit{Scaling}: the overall variance shrinks with dataset size $n$, recovering the population model in the large-sample limit. 
Together, these predictions yield a detailed spatial and spectral map of where denoisers trained on different data splits are most likely to disagree.

\begin{figure*}[bt]
    \centering
    \vspace{-5pt}
    \includegraphics[width=\linewidth]{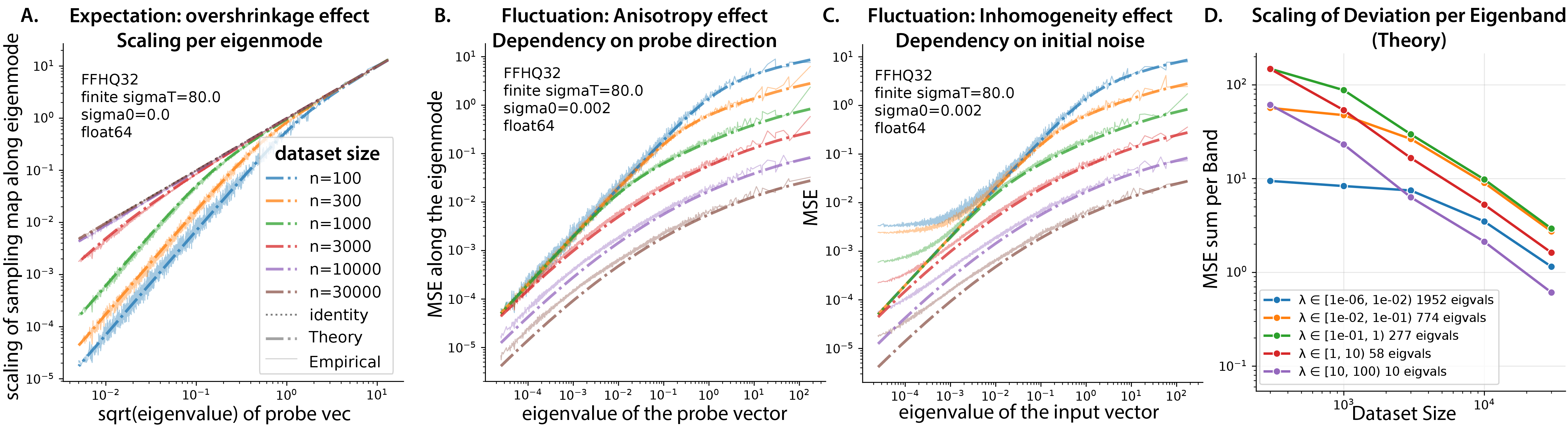}
    \vspace{-15pt}
\caption{\textbf{Finite sample effect on diffusion sampling map.}
\textbf{A.} \textbf{Overshrinkage of expectation}. Expected scaling along eigenmode of the empirical sampling map $\uvec_k^\top\hat\Sig^{1/2}\uvec_k$ compared to the ideal $\sqrt{\lambda_k}$, showing overshrinking along lower eigenmodes. Line color denotes dataset size, shared across \textbf{A,B,C}. 
\textbf{B.} \textbf{Anisotropy of consistency}. Cross-split MSE depends on probe direction $\vvec$, with larger deviation on top eigenspaces.
\textbf{C.} \textbf{Inhomogeneity of consistency.} Cross-split MSE depends on input location $\bar\x$; samples displaced along high-variance modes exhibit larger disagreement. 
\textbf{D.} \textbf{Scaling of consistency by eigenband.} Decomposition of MSE across eigenbands shows that lower-variance modes require substantially more samples before cross-split MSE decays.
See also Fig.~\ref{suppfig:diff_sample_RMT_expectation_variance}.
    }
    \label{fig:diff_sample_RMT_expectation_variance}
\end{figure*}

\section{Consistency of Diffusion Samples for Linear Denoisers}
Beyond the consistency of single-step denoiser output or score, we are interested in the final diffusion sample from the same initial noise seed $\x_{\sigma_T}$.
For linear denoisers, sampling map from initial noise to  generated sample is captured by Wiener filter (Eq. \ref{eq:PFODE_sol_gauss}, $\sigma=0$).
However, unlike one-step denoiser, this mapping involves fractional power of covariances $\Sig^{1/2}(\Sig+\sigma^2 I)^{-1/2}$, for which the deterministic equivalence is not readily available.
Here, we leveraged the integral representation of fractional power (\citet{Balakrishnan1960FractionalPower}'s formula)
and deterministic equivalence, and arrived at a few novel equivalence of these matrices (Result~\ref{prop_app:deter_equiv_cov_mat_half}, \ref{prop_app:deter_equiv_cov_half_resolv}, Derivation in App. \ref{app:Balakrishnan_identity}). Using these developments, we can calculate the expectation and fluctuation of sampling map and, consequently, of samples generated from a fixed seed.


\subsection{Expectation of diffusion sample: over-shrinkage to the mean}
We note that when the initial noise scale $\sigma_T$ is large, the sampling map admits the approximation
\begin{align}
    \x_{\hat\Sig}(\x_{\sigma_{T}},0)&=\mu+\hat{\Sig}^{1/2}(\hat{\Sig}+\sigma_{T}^{2}I)^{-1/2}(\x_{\sigma_{T}}-\mu)\notag \\
	&\approx\mu+\hat{\Sig}^{1/2}\bar{\x},
\end{align}
where we denote the shift and normalized noise $\bar{\x}:=\frac{\x_{\sigma_{T}}-\mu}{\sigma_{T}}$. At the $\sigma_T\to\infty$ limit, this approximation becomes exact, and $\bar{\x}\sim\mathcal N(0,I)$.
For clarity, we present results under this infinite-$\sigma_T$ approximation; the expressions accounting for finite $\sigma_T$ effects are provided in App.~\ref{app:proof_samp_mapping_exp_full}.
\begin{proposition}[Deterministic equivalence for expectation of diffusion sampling map]
\label{prop:gen_sample_exp_DE}
The sample generated from initial state $\x_{\sigma_{T}}$ has the following deterministic equivalence. Derivation in App. \ref{app:proof_samp_mapping_exp_approx}.  
\begin{align}\label{eq:gen_sample_exp_DE}
   \mathbb{E}_{\hat{\Sig}}[\x_{\hat \Sig}(\x_{\sigma_{T}},0)]	&\approx\mu+\mathbb{E}_{\hat{\Sig}}\big[\hat{\Sig}^{1/2}\big]\frac{\x_{\sigma_{T}}-\mu}{\sigma_{T}}\\
	&\asymp\mu+\frac{2}{\pi}\int_{0}^{\infty}\Sig\Big(\Sig+\kappa(u^{2})I\Big)^{-1}\bar{\x}du.\notag
\end{align}
\end{proposition}
\paragraph{Interpretation}
This expression mirrors the deterministic equivalence of denoisers (Eq.~\ref{eq:denoiser_onestep_mean_DE}), but with an integration over effective noise scales. Comparing to the population sampling map, where $\kappa(u^2)$ reduces to $u^2$, the finite data case integrates over a stronger shrink factor $\Sig(\Sig+\kappa I)^{-1}$ (since $\kappa(u^2)>u^2$), especially on the lower eigenmodes. This effect is confirmed with numerics of empirical covariance (Fig. \ref{fig:diff_sample_RMT_expectation_variance}\textbf{A}).
This leads to a systematic overshrinkage toward the dataset mean along these modes, reducing the generated variance along lower-variance directions
\footnote{Though the sample covariance $\hat\Sig$ is an unbiased estimator of the population covariance $\Sig$, taking the square root introduces finite sample bias:
$\Sig = \mathbb{E}[\hat \Sig]=\mathbb{E}[\hat \Sig^{1/2}\hat \Sig^{1/2}]\neq(\mathbb{E}[\hat\Sig^{1/2}])^2$. }.


\subsection{Variance of diffusion sample: Anisotropy and inhomogeneity}

\begin{proposition}[Deterministic equivalence for variance of diffusion sampling map]\label{prop:gen_sample_var_DE}
Due to dataset realization, the variance of generated sample starting from initial state $\x_{\sigma_T}$, along vector $\vvec$ admits the following deterministic equivalence,
\small
\begin{align}\label{eq:gen_sample_var_DE}
&\Var_{\hat\Sig}\!\big[\mathbf v^{\top}\x_{\hat{\Sig}}(\x_{\sigma_T},0)\big]
= \Var_{\hat\Sig}\!\big[\mathbf v^{\top}\hat{\Sig}^{1/2}\bar{\mathbf x}\big] \\
&\asymp \frac{4}{\pi^{2}}\!\int_{0}^{\infty}\!\!\!\int_{0}^{\infty}
\frac{\kappa\,\kappa'}{n-\df_{2}(\kappa,\kappa')}\,
\underbrace{\pentagon(\mathbf v;\kappa,\kappa',\Sig)}_{\text{anisotropy}}
\underbrace{\pentagon(\bar{\mathbf x};\kappa,\kappa',\Sig)}_{\text{inhomogeneity}}\notag
\,du\,dv
\end{align}
\normalsize
where
$\pentagon(\mathbf a;\kappa,\kappa',\Sig)
:= \mathbf a^{\top}\,\Sig\,(\Sig+\kappa I)^{-1}(\Sig+\kappa' I)^{-1}\,\mathbf a$
, and $\kappa:=\kappa(u^{2}),\kappa^{\prime}:=\kappa(v^{2})$ are variables to be integrated over. Derivation in App. \ref{app:proof_samp_mapping_var_approx}.
\end{proposition}

\paragraph{Interpretation}
The variance of sampling map Eq.~\ref{eq:gen_sample_var_DE} simplifies to a double integral of the denoiser-variance (Eq.~\ref{eq:denoiser_var_deter_equiv}). The integrand factorizes into a direction-dependent term (\textit{anisotropy}), an initial noise-dependent term (\textit{inhomogeneity}), and a scaling term. Note the anisotropy and inhomogeneity factors rely on the same $\pentagon(.;\kappa,\kappa',\Sig)$ function, showing that dependency on $\vvec$ and $\bar\x$ has the same spectral structure. As in Result~\ref{prop:denoiser_var_DE}, this result is the generic factorized contribution to the variance, which dominates for high-dimensional Gaussian $\bar{\x}$ when the data resolvent has high effective rank. 

We resort to numerical simulation to provide more intuition.
We note that integrals in Eqs.~\ref{eq:gen_sample_exp_DE},\ref{eq:gen_sample_var_DE} are nontrivial to evaluate; we describe our numerical scheme in App.~\ref{app:numerical_methods}. Using this procedure, the theoretical predictions align closely with direct computations of linear diffusion (Fig.~\ref{fig:diff_sample_RMT_expectation_variance}).
\textbf{Inhomogeneity.} Spatially, when initial noise $\bar\x$ deviates more along the top eigenspace of $\Sig$, there will be larger uncertainty (Fig. \ref{fig:diff_sample_RMT_expectation_variance}\textbf{C}), this enables us to predict the sample difference point by point.
\textbf{Anisotropy.} Directionally, the dependency on $\vvec$ has the same structure, in absolute term, the deviation is larger at higher eigenspace (Fig. \ref{fig:diff_sample_RMT_expectation_variance}\textbf{B}).
Note that when scaling up the dataset size, the variance in the top eigenspace decay immediately from small sample size;
while the deviation in lower eigenspace will stay put and start decaying only later at larger dataset size  (Fig. \ref{fig:diff_sample_RMT_expectation_variance}\textbf{D}).
This shows that the fine detail of the samples needs a larger dataset size to be consistency across training.






\begin{figure*}[tb]
    \centering
    \vspace{-5pt}
    \includegraphics[width=0.9\linewidth]{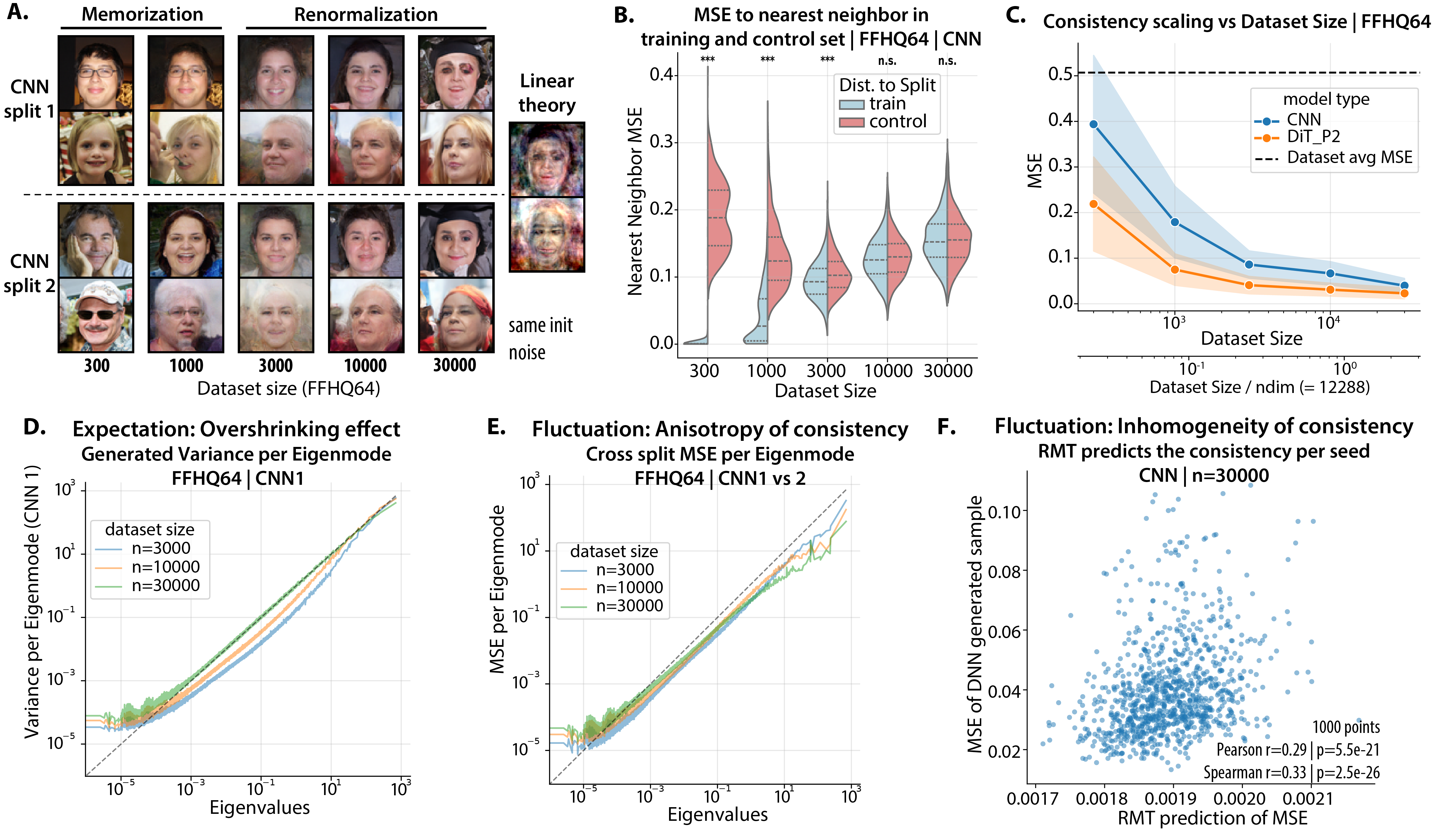}
    \vspace{-5pt}
    \caption{\textbf{DNN validation of the theory.}
\textbf{A.} Samples generated by UNet (same two seeds) across training set sizes and splits (FFHQ64); similarity increases with $n$, and increasingly matches the population linear predictor  (right).
\textbf{B.} Nearest-neighbor MSE in training vs. control sets reveals memorization at small $n$, $n>3000$ shows no statistical difference between the splits.
\textbf{C.} Overall consistency improve as a function of dataset size, with DiT more consistent than UNet at each $n$ (cross split MSE, mean$\pm$std).
\textbf{D.} Variance of generated samples per eigenmode highlight insufficient variance (\textit{overshrinkage}) in mid-to-low eigenmodes with limited dataset size.
\textbf{E.} Cross-split MSE per eigenmode shows \textit{anisotropy} of consistency (Fig.~\ref{fig:diff_sample_RMT_expectation_variance}\textbf{B}). Further, increasing dataset size, deviations in top eigenmodes decrease the most.
\textbf{F.} In the non-memorization regime ($n=30$k), RMT predictions of seed-wise consistency correlate with empirical deviations. 
    }
    \label{fig:DNN_validation_results}
\end{figure*}

\section{Validating Predictions on Deep Networks}

Finally, given that linear diffusion behavior is well captured by our random matrix theory (RMT), we test the applicability of its prediction to practical deep diffusion networks.

\paragraph{Setup.}
We trained UNet- and DiT-based denoisers under the EDM framework on FFHQ64, FFHQ32, AFHQ32~\citep{choi2020starganv2}, LSUN church and bedroom at 32 and 64 pixels~\citep{yu2015lsun}, CIFAR10, and CIFAR100 (UNet on all; DiT on a subset).
For each dataset we trained on two non-overlapping splits at sizes $n \in \{300,\,1000,\,3000,\,10^4,\,3\!\cdot\!10^4\}$ (10 runs total per architecture).
Sampling was performed with the same random seed using the Heun solver~\citep{karras2022elucidatingDesignSp}.
We train for 50,000 steps with Adam optimizer, further details are provided in App.~\ref{app:DNN_method}.

\paragraph{Expectation: from memorization to renormalization.}
We observe a clear two-phase behavior as dataset size increases.
\emph{Memorization phase} ($n\leq1000$): models largely reproduce training samples (Fig.~\ref{fig:DNN_validation_results}\textbf{A,B}), and samples are much closer to the nearest neighbor in their training split than the control split, consistent with prior observations.
This regime is outside the scope of linear theory, since linear score models cannot memorize individual points~\citep{Wang2025AnalySpec}. 
The same transition occurs across architectures, though the dataset size $n$ at which it occurs depends on model capacity and image resolution (Fig.~\ref{suppfig:DNN_val_NN_AFHQ},\ref{suppfig:DNN_val_NN_FFHQ32},\ref{suppfig:DNN_val_NN_CIFAR}, see also App.~\ref{app:nearest_neighbor_across_dataset_size}). 
\emph{Renormalization phase} ($n \geq 3000$):
the samples have comparable distance to the neighbor in the training split and control split, showing generalization. Further, as $n$ grows, samples increasingly resemble and approach the linear predictors (Fig.~\ref{suppfig:DNN_approach_Gaussian}) ~\citep{li2024understanding}.
In this regime, the overshrinkage predicted by Result~\ref{prop:gen_sample_exp_DE} becomes visible: generated face samples resemble the average face~\citep{langlois1994avgface}, with smoother textures and background (Fig.~\ref{fig:DNN_validation_results}\textbf{A}, $n=3000$).
Quantitatively, we observe reduced variance along low- and mid-spectrum eigenmodes of the generated samples (Fig.~\ref{fig:DNN_validation_results}\textbf{D}, see also App.~\ref{app:spectral_anisotropy_var_consistency}). This bias decreases as dataset size increases, and vanishes when learned and population spectra coincide at $n\sim30000$.

\paragraph{Fluctuations: inhomogeneity of consistency.}
Within the renormalization phase, RMT further predicts which initial noise and direction exhibit the largest discrepancies across data splits, due to their alignment with data covariance (Result~\ref{prop:gen_sample_var_DE}).
Spectrally, measuring the cross-split deviation along population eigenbases, we can see the characteristic anisotropy profile. Further the decrease of MSE majorly occurs in top eigenspace, while the middle or lower eigenspace remains unchanged or becomes less consistent when sample size increases (Fig.~\ref{fig:DNN_validation_results}\textbf{E}). This is consistent with the prediction of the theory that lower eigenmodes needs more training samples to be consistent (Fig.~\ref{fig:diff_sample_RMT_expectation_variance}\textbf{B}, see also App.~\ref{app:spectral_anisotropy_var_consistency}).
Spatially, the inhomogeneity effect is borne out:
RMT predictions correlate with observed cross-split deviations for each initial noise  noise; e.g., UNets trained on FFHQ64 with $n=30000$ achieve a Spearman correlation of $0.33$ ($p=2.5\times10^{-26}$) over 1000 seeds (Fig.~\ref{fig:DNN_validation_results}\textbf{F}).
Remarkably, the prediction requires only the population covariance and dataset size, with no knowledge of split identities or network architecture.
The absolute deviation magnitudes, however, are much larger in deep networks than predicted by linear theory, reflecting nonlinear source of idiosyncrasy across splits.
As controls, correlations collapse in the memorization regime (Fig. \ref{suppfig:DNN_val_inhomogeneity_FFHQ32},\ref{suppfig:DNN_val_inhomogeneity_FFHQ64}, see App.~\ref{app:anisotropy_init_noise}) and disappear when mismatched noise seeds are used.

\vspace{-1pt}
\paragraph{Summary.}
Across architectures and datasets, the predictions of our linear RMT framework extend to deep diffusion models: limited data induce overshrinkage toward the mean, and the variance structure across splits exhibits inhomogeneity and anisotropy predicted by theory.
\vspace{-7pt}
\section{Discussion}
Our analysis shows that much of the consistency in diffusion models across training data is already captured by Gaussian statistics, 
which inspires an in-depth analysis of consistency in linear diffusion.
Random matrix theory sharpens this picture by showing that finite data act through a renormalized noise scale $\sigma^2 \mapsto \kappa(\sigma^2)$, and that fluctuations across splits factor into anisotropy over eigenmodes, inhomogeneity across inputs, and a global scaling with dataset size. These results extend deterministic-equivalence tools to fractional matrix powers, allowing closed-form predictions for both denoisers and sampling trajectories, and align qualitatively with deep networks in terms of where deviation accentuates, even if nonlinear effects amplify the magnitudes of deviation.

\paragraph{Why are diffusion models special?}
Previous work suggests that consistency and reproducibility are distinctive features of diffusion and flow-based models with deterministic samplers \cite{zhang2023emergenceRepro}. Why should this be the case?
In diffusion models, the function approximator learns a score vector field $\nabla \log p(\x;\sigma)$, which is uniquely determined by the data distribution\footnote{For finite data, the underlying population distribution is unknown, so there remains ambiguity in how the model generalizes beyond the empirical distribution; nevertheless, the score of the empirical distribution itself is uniquely defined.}.
The lowest-order approximation to this score is a linear vector field determined by the Gaussian statistics of the data, namely its mean and covariance.
Our random matrix analysis shows that these Gaussian statistics, and observables derived from them (e.g. denoiser and sampling map), are highly stable across independent data splits.
Moreover, at high noise levels, the linear Gaussian score approximation is often accurate; since the high-variance, low-frequency aspects of samples are determined primarily at these high-noise scales, even nonlinear score networks can inherit substantial stability from this shared Gaussian structure.
This provides a mechanism by which independently trained diffusion models can share a common mapping from initial noise to generated samples.

This mechanism is less directly applicable to other generative modeling paradigms.
In autoregressive models, randomness typically enters through sequential sampling from conditional token distributions at nonzero temperature, so repeated generation depends on a chain of stochastic discrete choices rather than a deterministic flow from a fixed initial noise.
In VAEs and GANs, samples are generated by a learned map from a usually lower-dimensional latent space, $x=f(z)$ with $z\sim\mathcal{N}(0,I)$.
This latent space is not identifiable: composing $f$ with an orthogonal transformation of $z$ preserves the latent and generated distributions, but changes which latent vector corresponds to which generated sample.
Thus, two independently trained models may learn rotated latent coordinate systems, so the same latent vector need not produce similar outputs.
One could in principle align the latent spaces of VAEs or GANs by fixing this orthogonal freedom.
For example, latent vectors can be expressed in a meaningful local basis, such as the right singular vector basis of the averaged generator Jacobian $\partial f/\partial z$. 
Previous work showed that, in this basis, axes of different GANs trained on human face datasets can have relatively consistent semantics, such as face orientation or background changes \citep[Fig.~10]{wang2021aGANGeom}. A full analysis of consistency in VAEs or GANs is left for future investigation.

\paragraph{Limitations}
At the same time, our framework has limitations. Linear surrogates underestimate variability in expressive models and do not capture architecture-specific inductive biases. Extending the theory to random-feature models would better explain the transition from memorization to renormalization \citep{bonnaire2025diffusionRMTRegularize,george2025diffusionRandFeatRMT}, and help quantify how model capacity shifts the required dataset size for consistency.
Another promising direction is to study the \textit{anisotropy of the initial noise space} and its alignment with the data manifold. The seemingly unstructured noise space is already anchored by the data covariance before generation. Manipulating noise components along dataset principal-component subspaces leads to interpretable changes in the sampled outputs, such as suppressing background structure or reducing specific facial variations (Fig.~\ref{suppfig:anisotropic_structure_initial_noise_top50PC}, \ref{suppfig:anisotropic_structure_initial_noise_top200PC}).
Our analysis shows that different directions in the noise space contribute unequally to cross-split deviations (Result~\ref{prop:gen_sample_var_DE}); consequently, the alignment between the initial noise and the covariance eigenframe predicts the degree of generation consistency across dataset splits.
Such anisotropic structures might explain why certain “magic” random seeds may consistently yield better generations \citep{xu2025goodseed}. 
This echoes anisotropic effects observed in GANs' latent space, where noise vectors aligned strongly with top eigenspaces of Jacobian can lead to degraded generations \citep{wang2021aGANGeom}. Such connections suggest that spectral geometry of the input space deserves closer attention as a unifying factor across generative models.

\section*{Impact Statement}
This paper presents work whose goal is to advance the field of Machine
Learning. There are many potential societal consequences of our work, none
which we feel must be specifically highlighted here.

\subsection*{Acknowledgments}
B.W. was supported by the Kempner Fellowship from the Kempner Institute at Harvard.
J.Z.-V. was supported by a Junior Fellowship from the Harvard Society of Fellows.
C.P. is supported by an NSF CAREER Award (IIS-2239780), DARPA grants DIAL-FP-038 and AIQ-HR00112520041, the Simons Collaboration on the Physics of Learning and Neural Computation, and the William F. Milton Fund from Harvard University.
We thank David Alvarez-Melis for his insightful suggestions on the counterfactual moment manipulation experiments.
This work has been made possible in part by a gift from the Chan Zuckerberg Initiative Foundation to establish the Kempner Institute for the Study of Natural and Artificial Intelligence, and by the generous computing resources provided by the Kempner Institute and Harvard Research Computing.

\bibliography{main}
\bibliographystyle{icml2026}

\newpage
\appendix
\onecolumn

\tableofcontents

\addtocontents{toc}{\protect\setcounter{tocdepth}{3}}

\setlength{\abovedisplayskip}{2pt}
\setlength{\belowdisplayskip}{2pt}

\clearpage

\section{Extended Related Work}\label{app:ext_related_works}

\paragraph{Consistency and Reproducibility in Diffusion}
As a motivating observation, \cite{kadkhodaie2024generalization} found that diffusion models trained on non-overlapping splits of training data could produce visually highly similar images. The seminal paper studying this effect is \cite{zhang2024emergence}; there, the authors found that models trained on the same dataset have a consistent mapping from noise to sample across architectures (transformer vs.\ UNet), objectives, training runs, samplers, and noising kernels, provided that an ODE deterministic sampler is used. In their Appendix B, they also discuss the lack of reproducibility in VAEs and GANs.
The consistency studied in our paper is more closely related to reproducibility in the generalization regime.

\paragraph{Hidden Linear Score Structure in Diffusion Models}
Recent work has shown that, for much of diffusion time (\textit{i.e.}, across a broad range of signal-to-noise ratios), the learned neural score is closely approximated by the linear score of a Gaussian fit to the data, which is usually the best linear approximation \citep{Binxu_2023_hidden,li2024understanding}. Crucially, this Gaussian linear score admits a closed-form solution to the probability‐flow ODE, which can be exploited to accelerate sampling and improve its quality \citep{WangVastola_unreasonable}. Moreover, this same linear structure has been linked to the generalization–memorization transition in diffusion models \citep{li2024understanding}. In sum, across many noise levels, the Gaussian linear approximation captures many salient aspects of the learned score. Here, we leverage it to explain the observed consistency across splits and as a tractable setup for random matrix theory analysis.

\paragraph{Memorization, Generalization and Creativity in Diffusion}
The question of when diffusion models can generate genuinely novel samples matters both scientifically and for mitigating data leakage.
From the score-matching perspective, if the learned score exactly matches that of the empirical data distribution, then the reverse process reproduces that empirical distribution, and thus does not create new samples beyond the training set \citep{kamb2024creative,li2024goodscoredoeslead,wang_unreasonable_2024}. Yet high-quality diffusion models routinely generate images that are not identical copies of images from the training set. \citet{kamb2024creative} take an important step toward reconciling this: when the score network is a simple CNN, its inductive biases (locality and translation equivariance) favor patch wise composition, enabling global samples that are novel while remaining locally consistent “mosaics.”
Similarly, \citet{Wang2025AnalySpec} observed that score networks with different architectural constraints learn different approximations of the dataset and therefore generalize differently: e.g., linear networks learn the Gaussian approximation, and circular convolutional networks learn the stationary Gaussian process approximation.
\citet{finn2025originscreativity} provided evidence that adding a final self-attention layer promotes global consistency across distant regions, organizing locally plausible features into coherent layouts that move beyond purely patch-level mosaics. This result is consistent with preliminary observations by \citet{kamb2024creative} regarding cases in which their purely convolutional models fail to generate coherent images, while models including attention succeed. Related theoretical work further probes why well-trained diffusion models can generalize despite apparent memorization pressures \citep{bonnaire2025diffusionRMTRegularize,vastola2025generalization,chen2025interpolationEffectScoreSmoothing}. These results suggest that departures from exact empirical-score fitting—mediated by inductive biases (both architectural and training dynamics) can explain how diffusion models avoid pure memorization while maintaining visual plausibility \citep{ambrogioni2023searchdispersedmemoriesgenerative}.



\clearpage
\section{Extended Results and Figures}

\subsection{Extended visual examples for the motivating observation}\label{app:extended_visual_motivation}
\begin{figure}[!htp]
    \centering
    \includegraphics[width=0.99\linewidth]{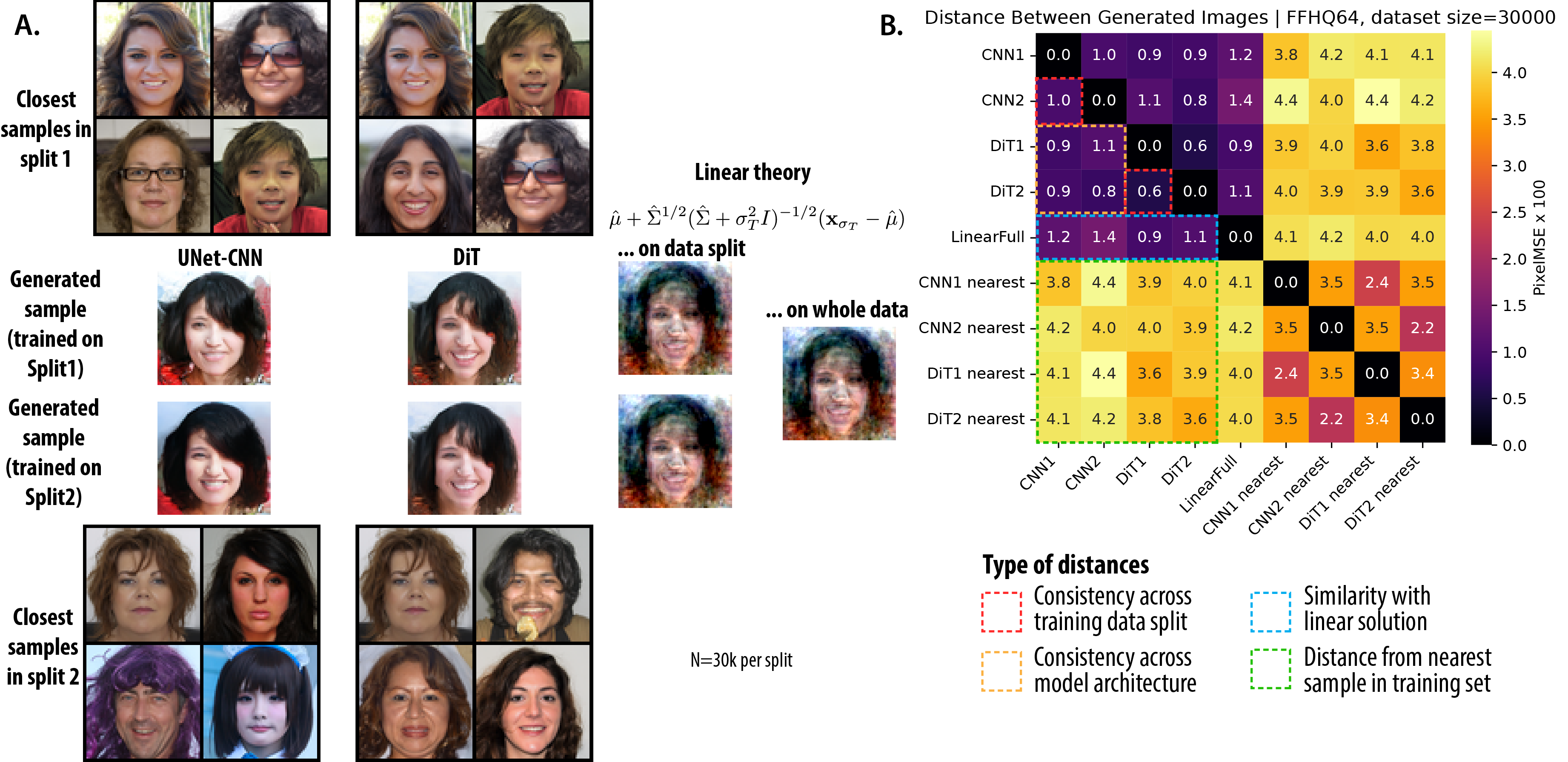}
    \caption{\textbf{Motivating observation and the linear theory for FFHQ64 dataset}. Similar format to Fig.~\ref{fig:motivation_schematics}, but for FFHQ64 dataset.
    \textbf{A.} Examples of generated samples from the same noise seed, for UNet, DiT, and a linear denoiser on split 1 and split 2 of the data, each with 30k non-overlapping samples. The closest 4 samples in the corresponding training set are shown above and below the generated sample. One can appreciate the visual similarity of samples generated from models trained on separate splits, even with different neural architectures, and also with the linear denoiser on each split.
    Admittedly, the generated outcomes of linear denoisers at 64-pixel resolution look worse, especially around edges, showing signatures of non-Gaussian statistics, as \cite{wang_unreasonable_2024} has pointed out.
    \textbf{B.} Quantification of \textbf{A}, paired image distances (MSE) averaging from 512 initial noises. }
    \label{suppfig:motivation_schematics}
\end{figure}

\begin{figure}[!htp]
    \centering
    \includegraphics[width=0.99\linewidth]{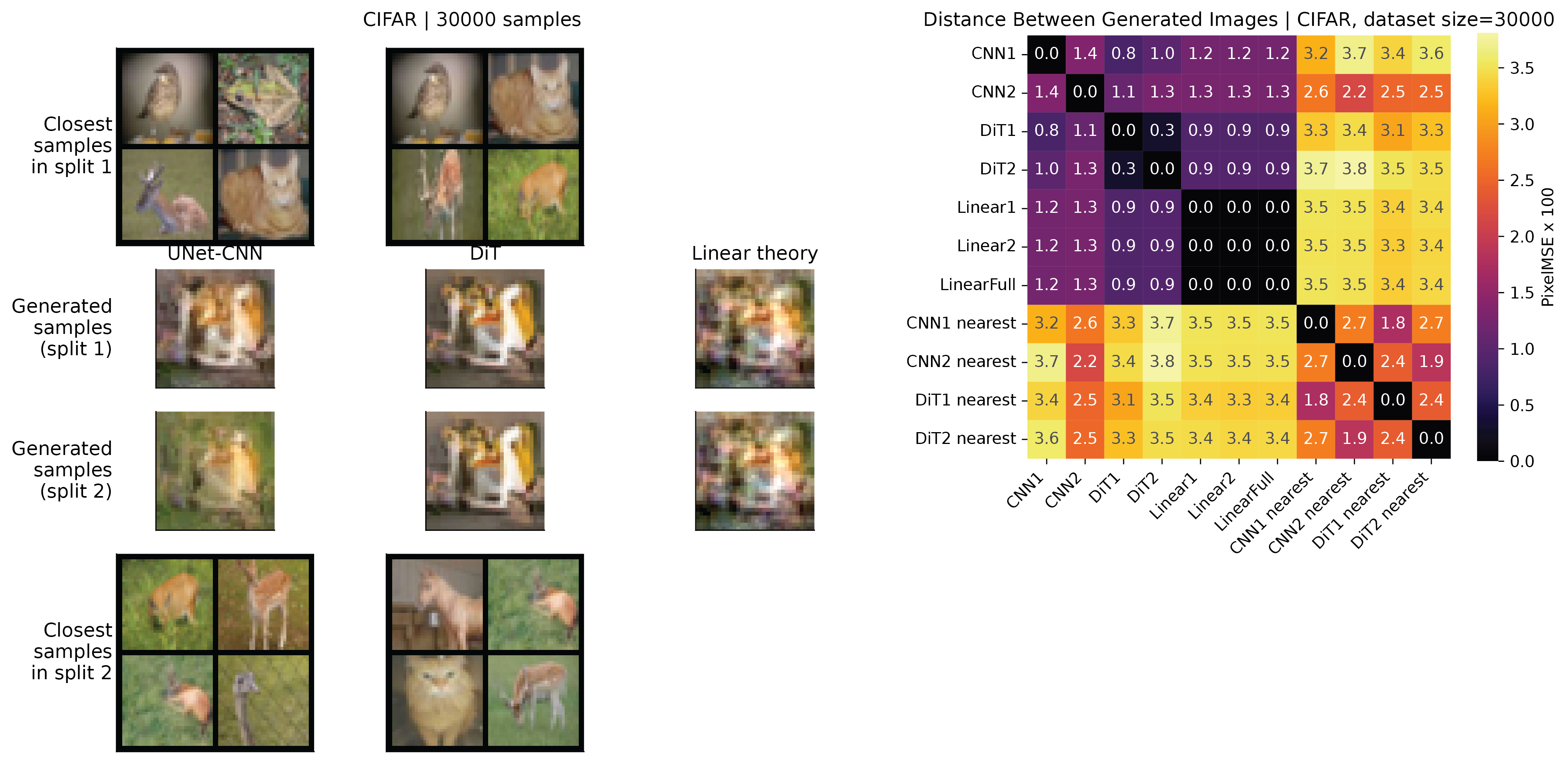}
    \caption{\textbf{Motivating observation and the linear theory for CIFAR10 dataset}. Similar format to Fig.~\ref{fig:motivation_schematics}. \textbf{Left.} Generated samples from DNN and linear theory from initial noise seed 2. \textbf{Right.} Paired image distance MSE averaging from 1000 initial noises.}
    \label{suppfig:motivation_schematics_CIFAR10}
\end{figure}

\begin{figure}[!htp]
    \centering
    \includegraphics[width=0.99\linewidth]{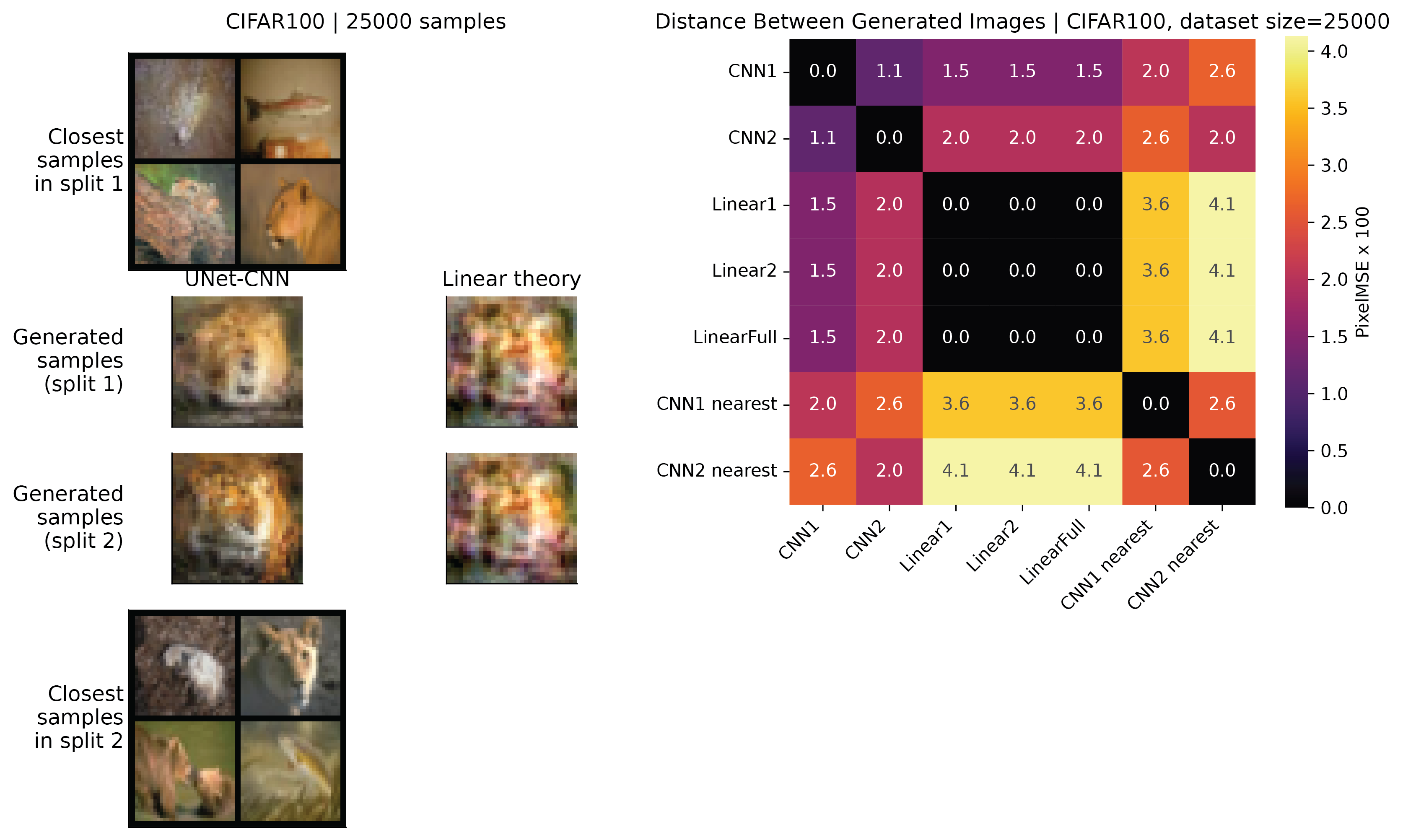}
    \caption{\textbf{Motivating observation and the linear theory for CIFAR100 dataset}. Similar format to Fig.~\ref{fig:motivation_schematics}. \textbf{Left.} Generated samples from DNN and linear theory from initial noise seed 2. \textbf{Right.} Paired image distance MSE averaging from 1000 initial noises.}
    \label{suppfig:motivation_schematics_CIFAR100}
\end{figure}

\begin{figure}[!htp]
    \centering
    \includegraphics[width=0.99\linewidth]{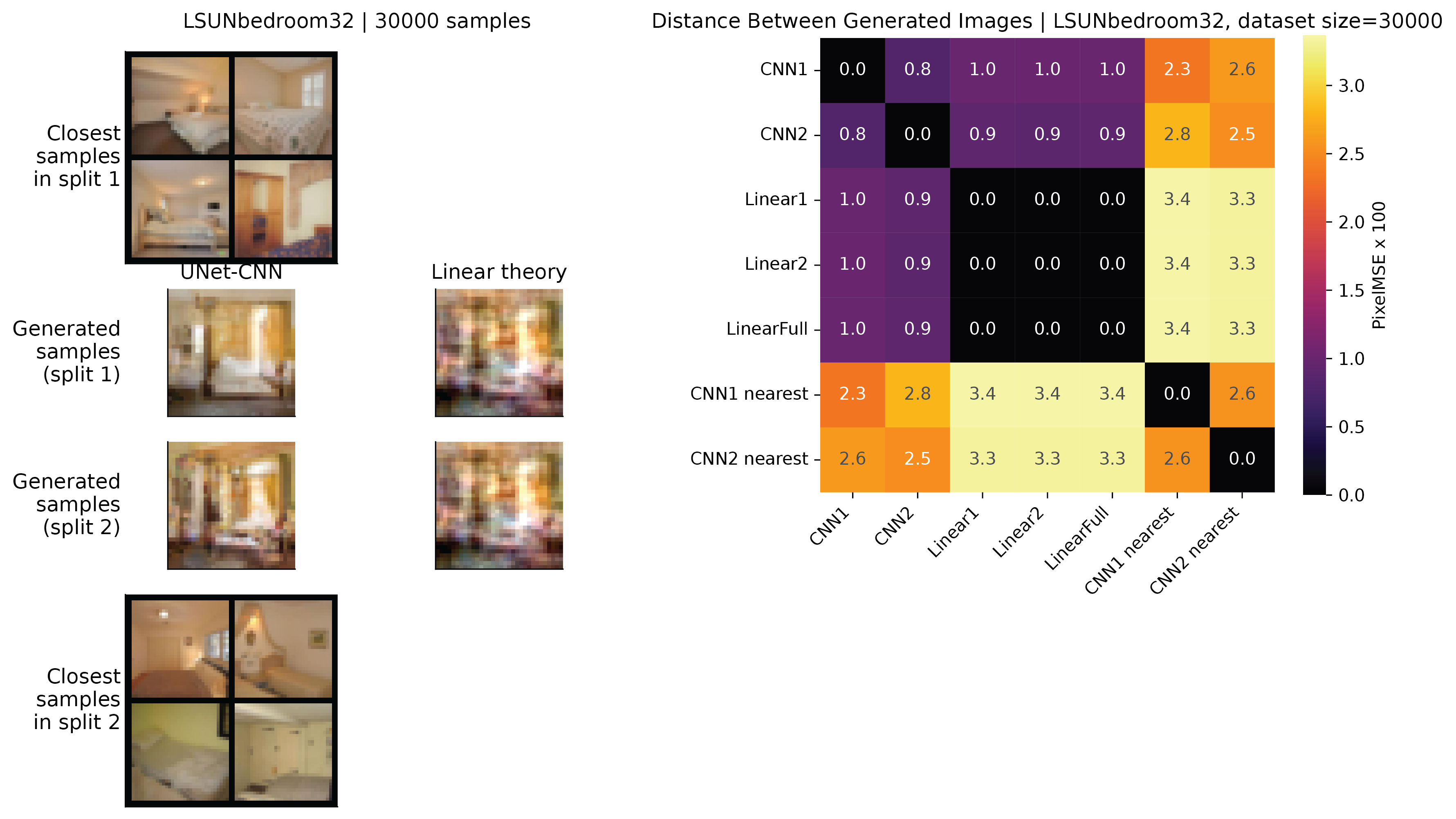}
    \caption{\textbf{Motivating observation and the linear theory for LSUN bedroom dataset (32 pixel)}. Similar format to Fig.~\ref{fig:motivation_schematics}. \textbf{Left.} Generated samples from DNN and linear theory from initial noise seed 2. \textbf{Right.} Paired image distance MSE averaging from 1000 initial noises.}
    \label{suppfig:motivation_schematics_LSUNbedroom32}
\end{figure}

\begin{figure}[!htp]
    \centering
    \includegraphics[width=0.99\linewidth]{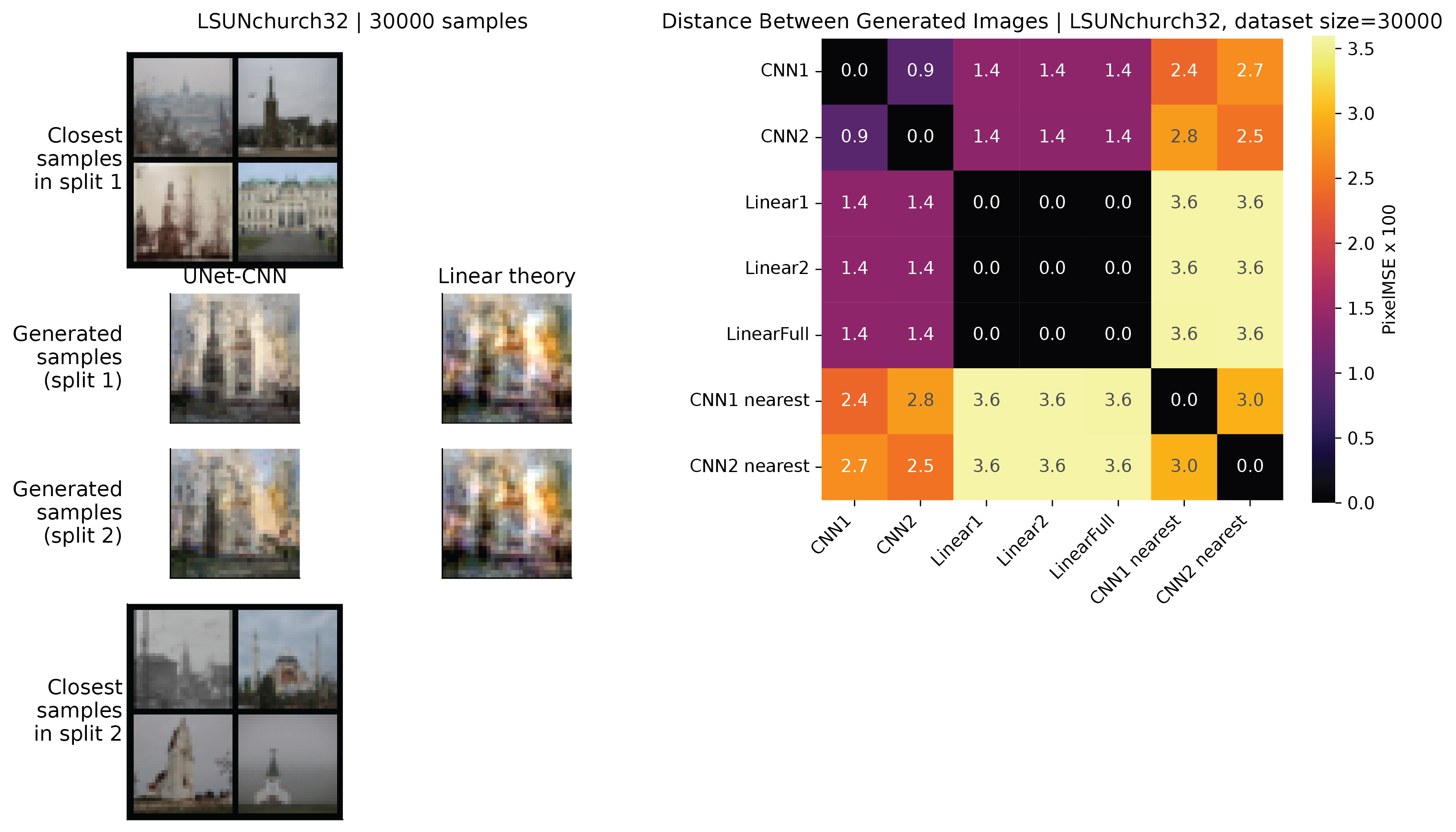}
    \caption{\textbf{Motivating observation and the linear theory for LSUN church dataset (32 pixel)}. Similar format to Fig.~\ref{fig:motivation_schematics}. \textbf{Left.} Generated samples from DNN and linear theory from initial noise seed 2. \textbf{Right.} Paired image distance MSE averaging from 1000 initial noises.}
    \label{suppfig:motivation_schematics_LSUNchurch32}
\end{figure}

\begin{figure}[!htp]
    \centering
    \includegraphics[width=0.99\linewidth]{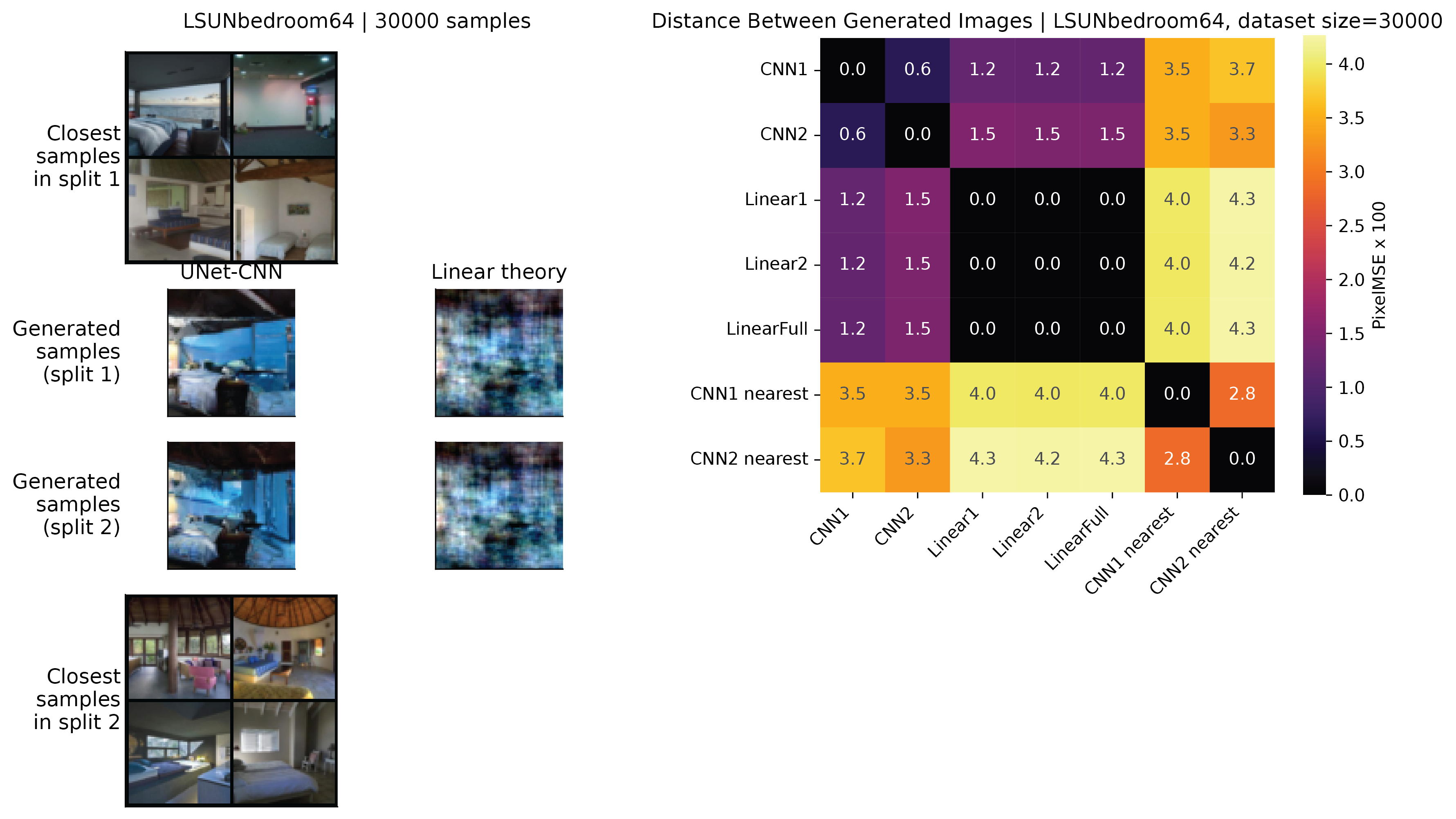}
    \caption{\textbf{Motivating observation and the linear theory for LSUN bedroom dataset (64 pixel)}. Similar format to Fig.~\ref{fig:motivation_schematics}. \textbf{Left.} Generated samples from DNN and linear theory from initial noise seed 2. \textbf{Right.} Paired image distance MSE averaging from 1000 initial noises.}
    \label{suppfig:motivation_schematics_LSUNbedroom64}
\end{figure}

\begin{figure}[!htp]
    \centering
    \includegraphics[width=0.99\linewidth]{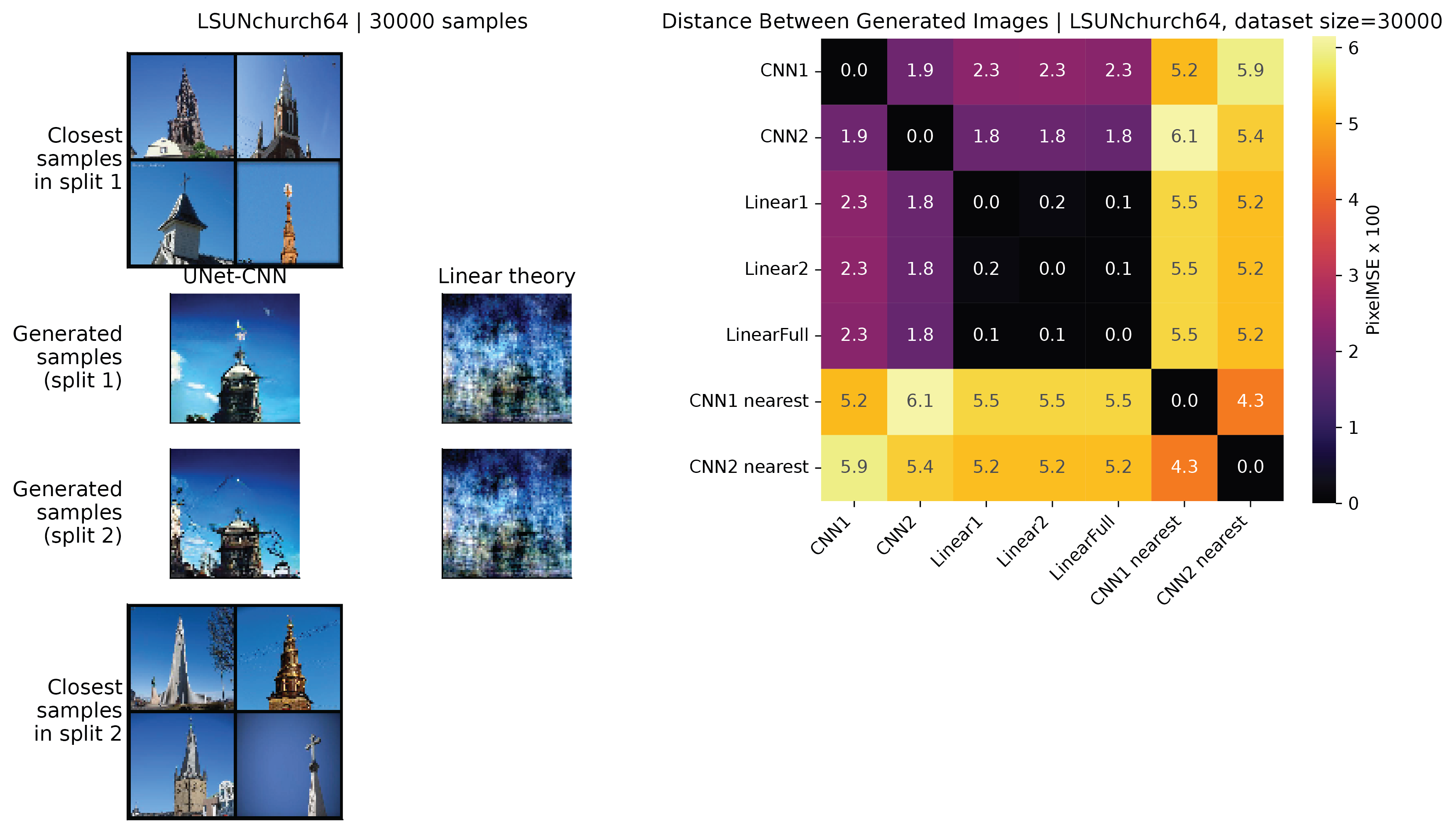}
    \caption{\textbf{Motivating observation and the linear theory for LSUN church dataset (64 pixel)}. Similar format to Fig.~\ref{fig:motivation_schematics}. \textbf{Left.} Generated samples from DNN and linear theory from initial noise seed 2. \textbf{Right.} Paired image distance MSE averaging from 1000 initial noises.}
    \label{suppfig:motivation_schematics_LSUNchurch64}
\end{figure}

\clearpage
\subsection{Counterfactual moment manipulation: when the first two moments do not match}\label{app:counterfactual_mean_cov_manip}

\begin{figure}[!htp]
    \centering
    \includegraphics[width=\linewidth]{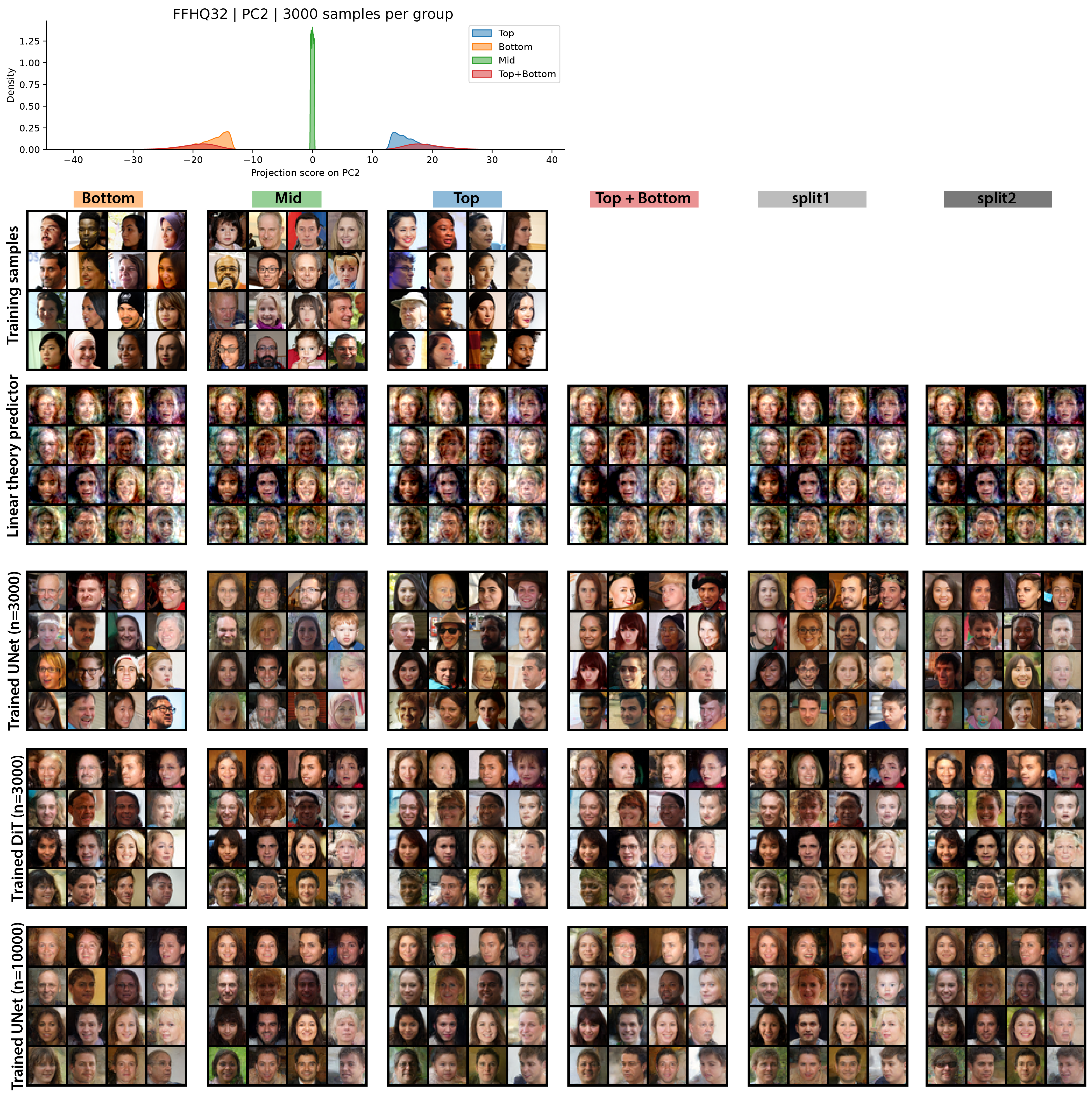}
\caption{
\textbf{Visual examples of counterfactual dataset splits with controlled mean and covariance.}
\textbf{Top}: Kernel density estimates of projection scores along the second principal component (PC2) of the training data. Stratified subsets (\texttt{bottom}, \texttt{mid}, \texttt{top}) each contain 3{,}000 samples, while the combined \texttt{top+bottom} split is constructed by combining half of the data from the top and half from the bottom PC2 extremes, also totaling 3{,}000 samples.
\textbf{Columns}: Different dataset splits, including PC2-stratified splits (\texttt{bottom}, \texttt{mid}, \texttt{top}), the combined \texttt{top+bottom} split, and two random non-overlapping control splits (\texttt{split1}, \texttt{split2}).
\textbf{Rows}: Different sampling mechanisms, including representative training samples from each split; predictions from the linear theory; samples generated by CNN-UNet diffusion models trained on $N{=}3000$ samples; samples generated by DiT-based diffusion models trained on $N{=}3000$ samples; and samples generated by a CNN-UNet diffusion model trained on $N{=}10000$ samples (note that this model is trained on a larger dataset constructed using the same splitting mechanism).
All generated samples use matched initial noise realizations to enable direct qualitative comparison across training splits.
This counterfactual construction induces controlled differences in the mean and/or covariance along the PC2 direction, yielding visibly larger variations between statistically manipulated splits (especially between top and bottom) than between random control splits. This result is systematically quantified in Fig.~\ref{suppfig:counterfactual_quantitative}.
}

    \label{suppfig:counterfactual_visual}
\end{figure}

\begin{figure}[!htp]
    \centering
    \includegraphics[width=\linewidth]{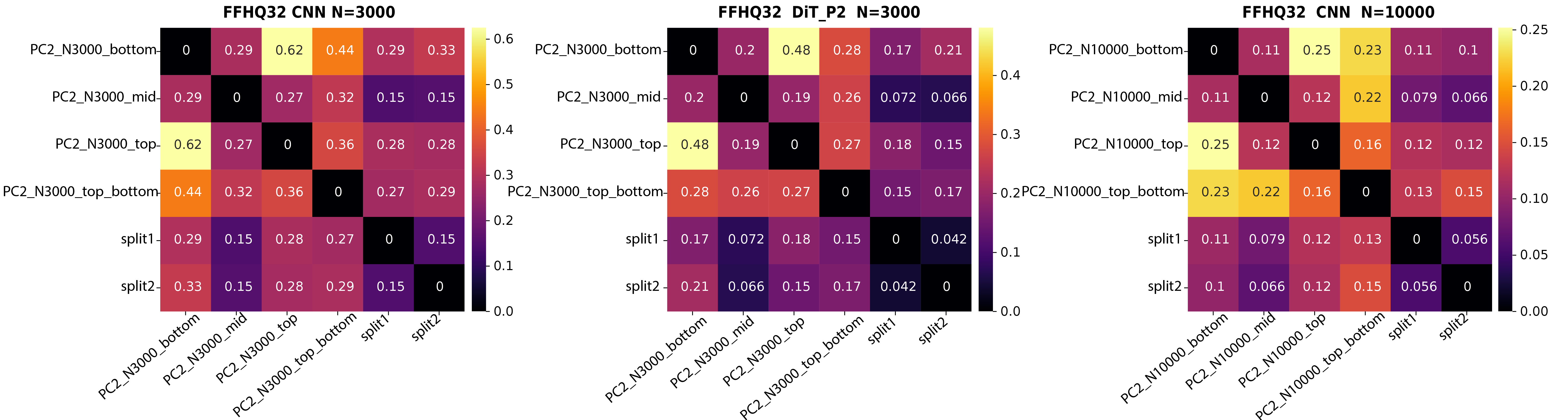}
    \caption{
    \textbf{Quantification of counterfactual dataset splits manipulations.}
    \textbf{Sample consistency between diffusion models trained on splits with mean and covariance manipulations.}
    Each panel shows a heatmap of the pixel-wise MSE between sample tensors generated from identical random seeds by diffusion models trained on different dataset splits (labels).
    Labels indicate the data splits of FFHQ32 dataset, including splits by projection along the second principal component (PC2; \texttt{top}, \texttt{mid}, \texttt{bottom}) and random non-overlapping splits (\texttt{split1}, \texttt{split2}).
    \textbf{Left}: CNN, training set size $N{=}3000$.
    \textbf{Middle}: DiT, $N{=}3000$.
    \textbf{Right}: CNN, $N{=}10000$.
    Notably, diffusion models trained on dataset splits with manipulated moments exhibit reduced sample consistency compared to models trained on two random i.i.d. splits, highlighting the importance of matching the first two moments for consistent diffusion model generation across training splits.
    }
    \label{suppfig:counterfactual_quantitative}
\end{figure}

\clearpage
\subsection{Additional validation with linear denoisers}
\begin{figure}[!htp]
    \centering
    \includegraphics[width=0.65\linewidth]{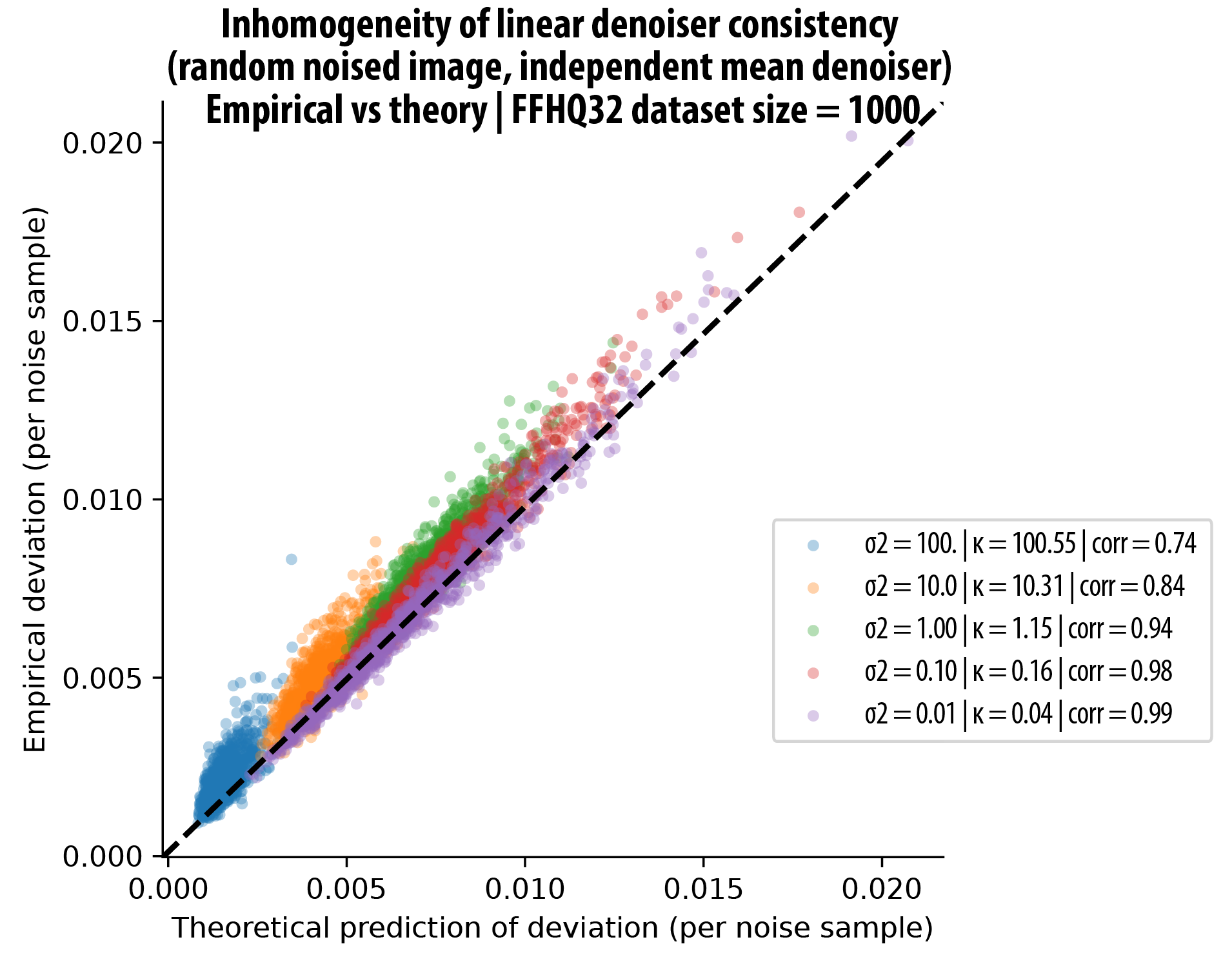}
    \caption{\textbf{Point by point prediction of denoiser consistency. (FFHQ32 dataset, $n=1000$)} Each dot denotes one noised image sample, x-axis shows the theoretical prediction from Eq.~\ref{eq:denoiser_var_deter_equiv}, after marginalizing over $\vvec$; y-axis shows the empirical measurement of their MSE after training two linear denoiser on non-overlapping data splits.
    We note that, the RMT theory prediction is more precise for lower noise scales; at higher noise scales, we think the effect of different empirical means $\hat\mu$ kicks in, resulting in deviation from the theory that only considers $\hat\Sig$. }
    \label{suppfig:inhomogeneity_theory_vs_linear_denoiser}
\end{figure}

\begin{figure}[h!]
    \centering
    \vspace{-5pt}
    \includegraphics[width=0.8\linewidth]{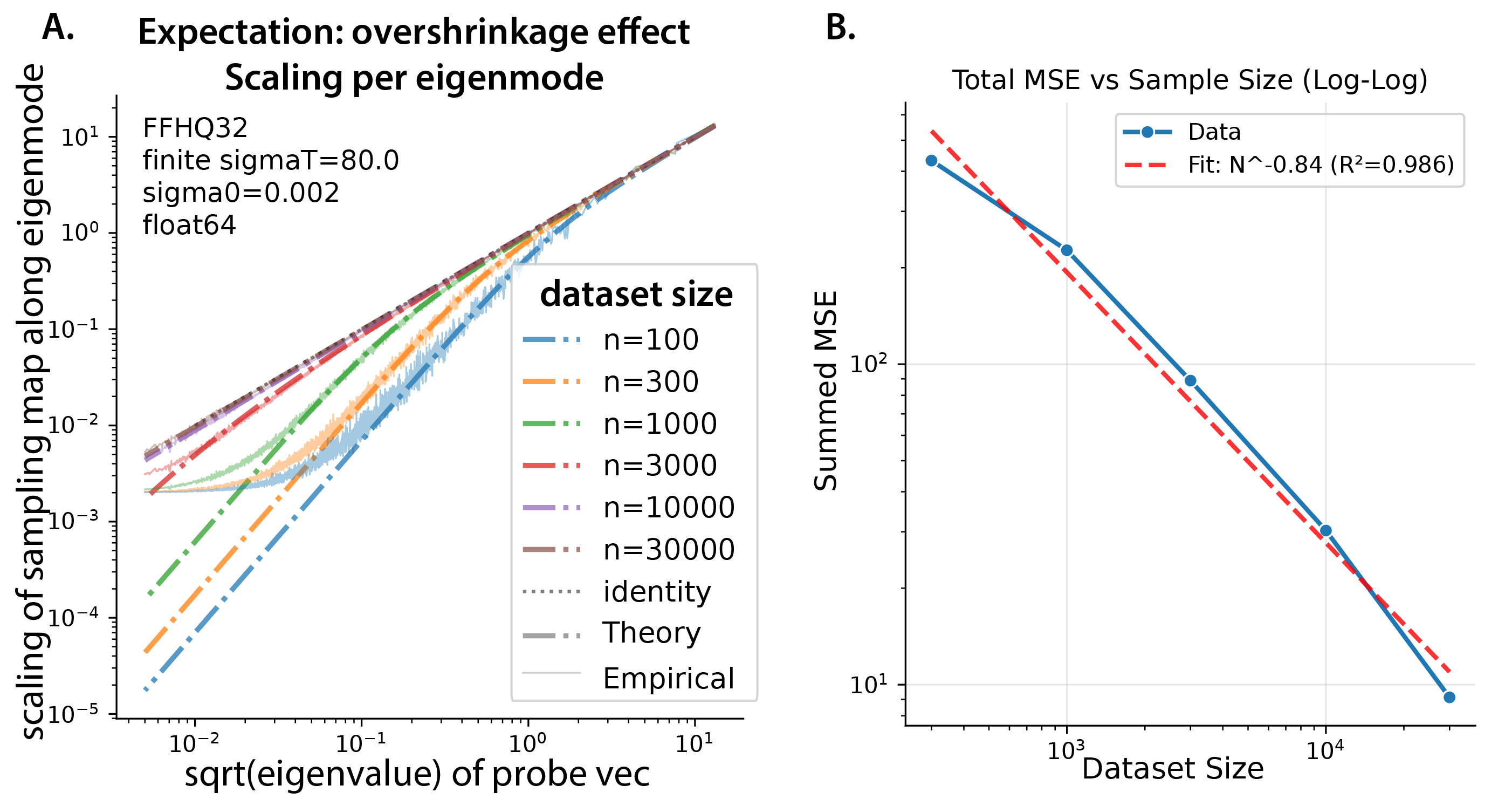}
    \vspace{-4pt}
\caption{\textbf{Finite sample effect on diffusion sampling map. (extended)}
\textbf{A.} \textit{Overshinkage of expectation.} The expected scaling along PC $\uvec_k^\top\hat\Sig^{1/2}\uvec_k$ of empirical sampling map compared to the ideal scaling $\sqrt{\lambda_k}$, here we used $\sigma_0=0.002$ for empirical matrix computation. The $\sigma_0$ is smallest noise scale that probability flow ODE integration stops, for numerical reasons. This floors the smallest scaling factor it could generate, making the mismatch with theory at the low eigen space.
\textbf{B.} Overall MSE scaling with respect to dataset size, roughly scales at $1/n$ at large data, but the scaling is shallower at smaller data scale. }
    \label{suppfig:diff_sample_RMT_expectation_variance}
\end{figure}

\clearpage
\subsection{Additional validation with deep networks}

\subsubsection{Nearest-neighbor distances across dataset sizes}\label{app:nearest_neighbor_across_dataset_size}
\begin{figure}[!htp]
    \centering
    \vspace{-5pt}
    \includegraphics[width=0.8\linewidth]{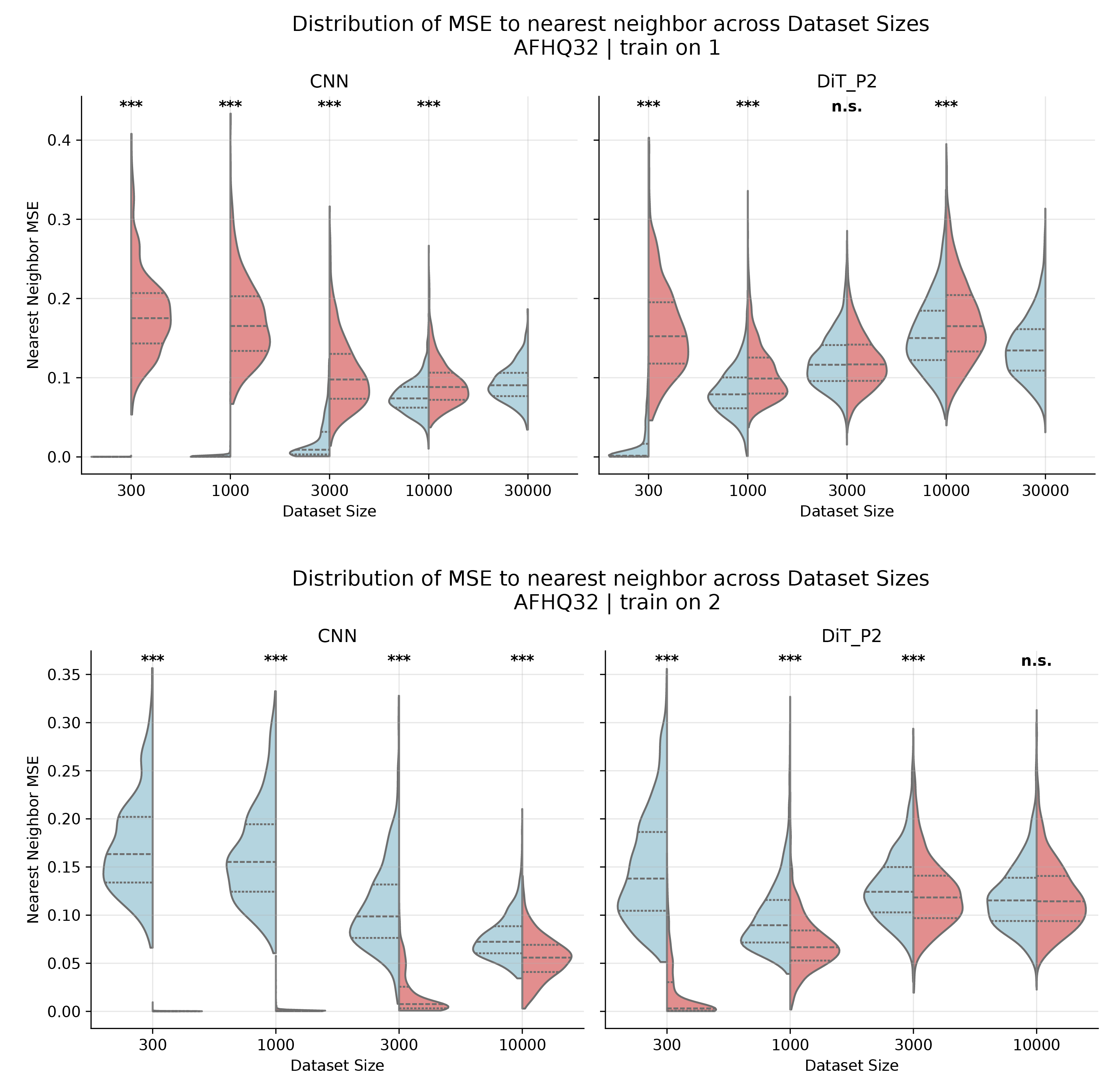}
    \vspace{-4pt}
\caption{\textbf{DNN validation experiments (AFHQ32), nearest neighbor in training and control set}  }
    \label{suppfig:DNN_val_NN_AFHQ}
\end{figure}

\begin{figure}[!htp]
    \centering
    \vspace{-5pt}
    \includegraphics[width=0.8\linewidth]{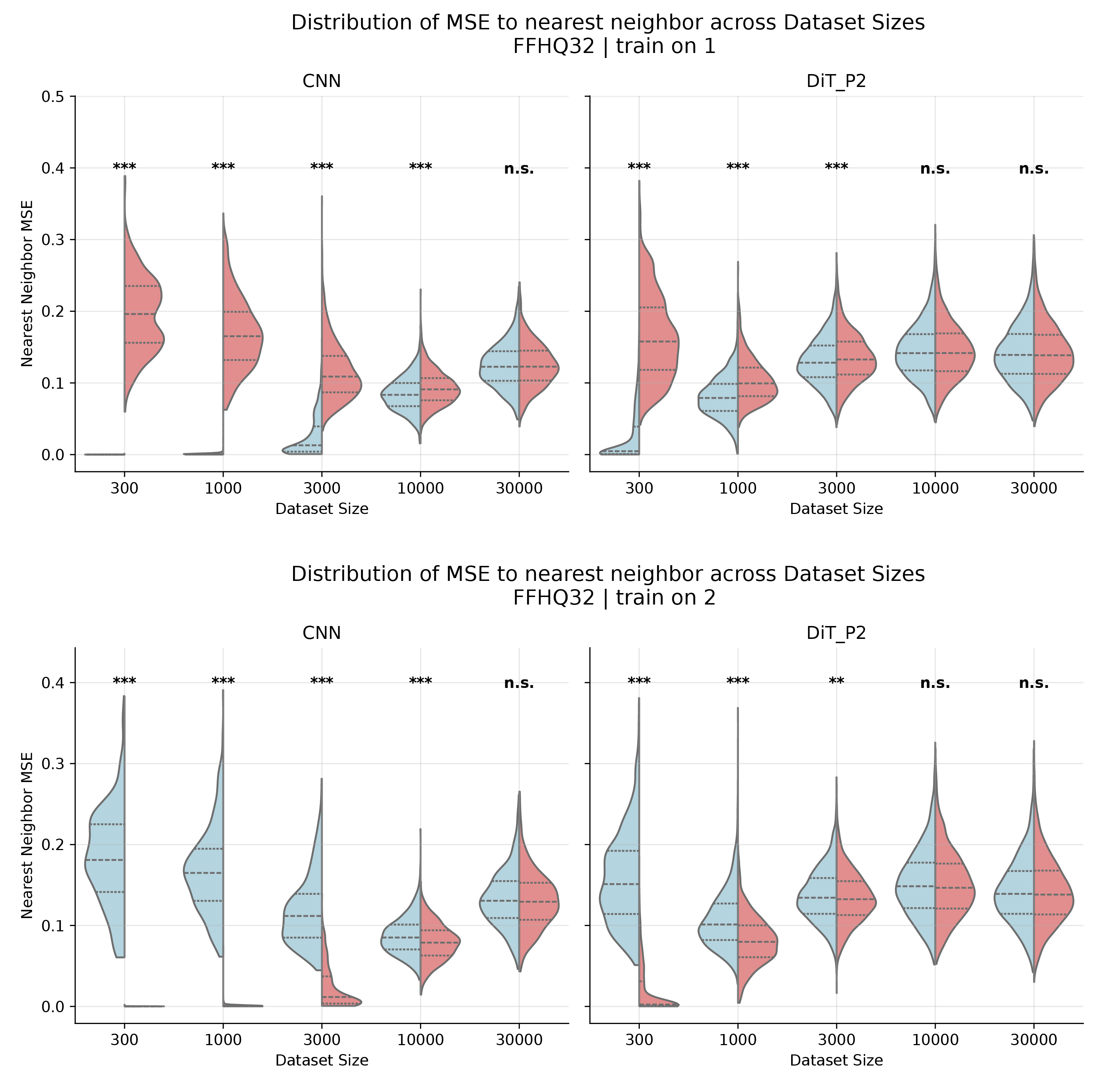}
    \vspace{-4pt}
\caption{\textbf{DNN validation experiments (FFHQ32), nearest neighbor in training and control set}  }
    \label{suppfig:DNN_val_NN_FFHQ32}
\end{figure}

\begin{figure}[!htp]
    \centering
    \vspace{-5pt}
    \includegraphics[width=0.55\linewidth]{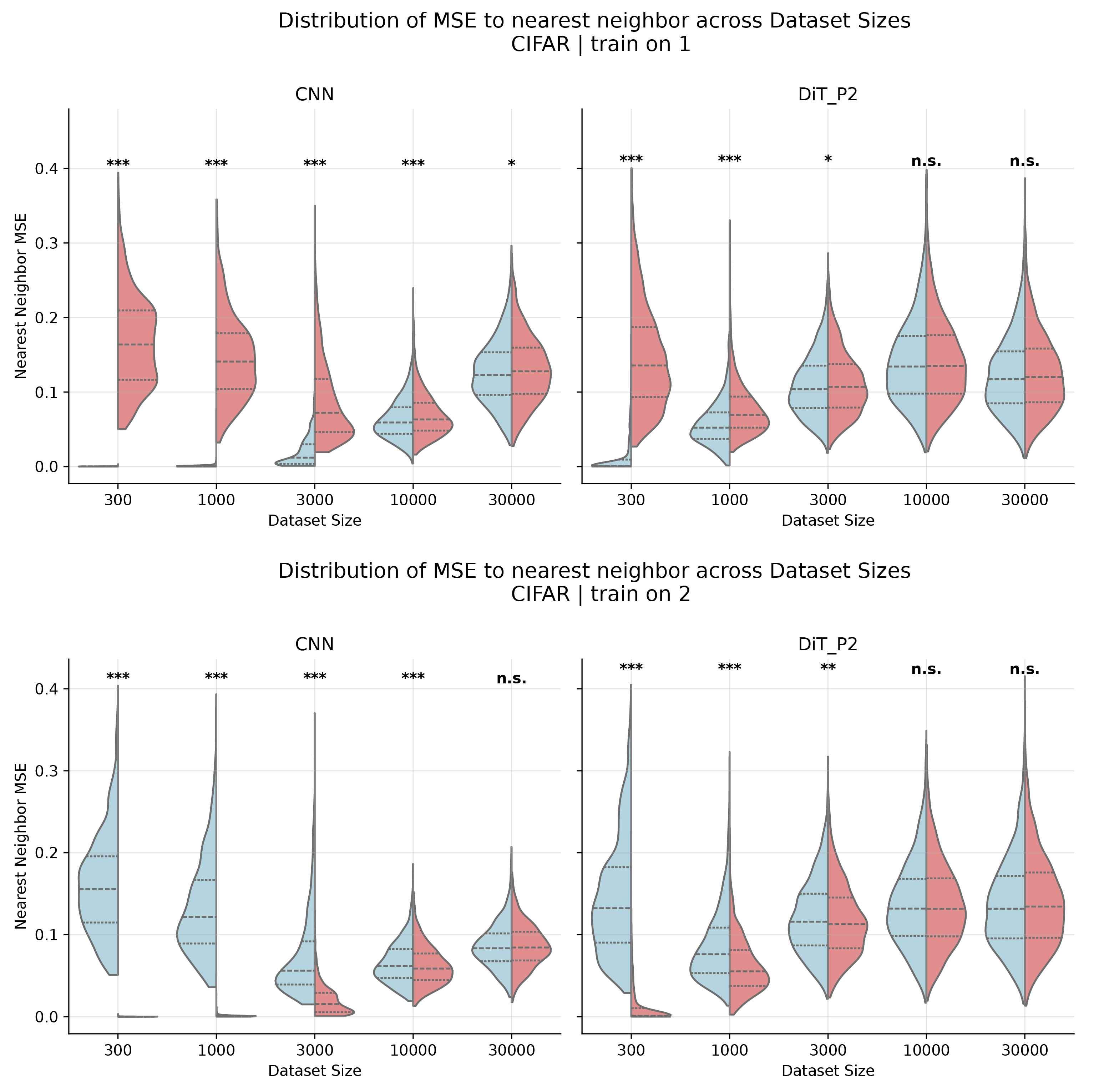}
    \includegraphics[width=0.33\linewidth]{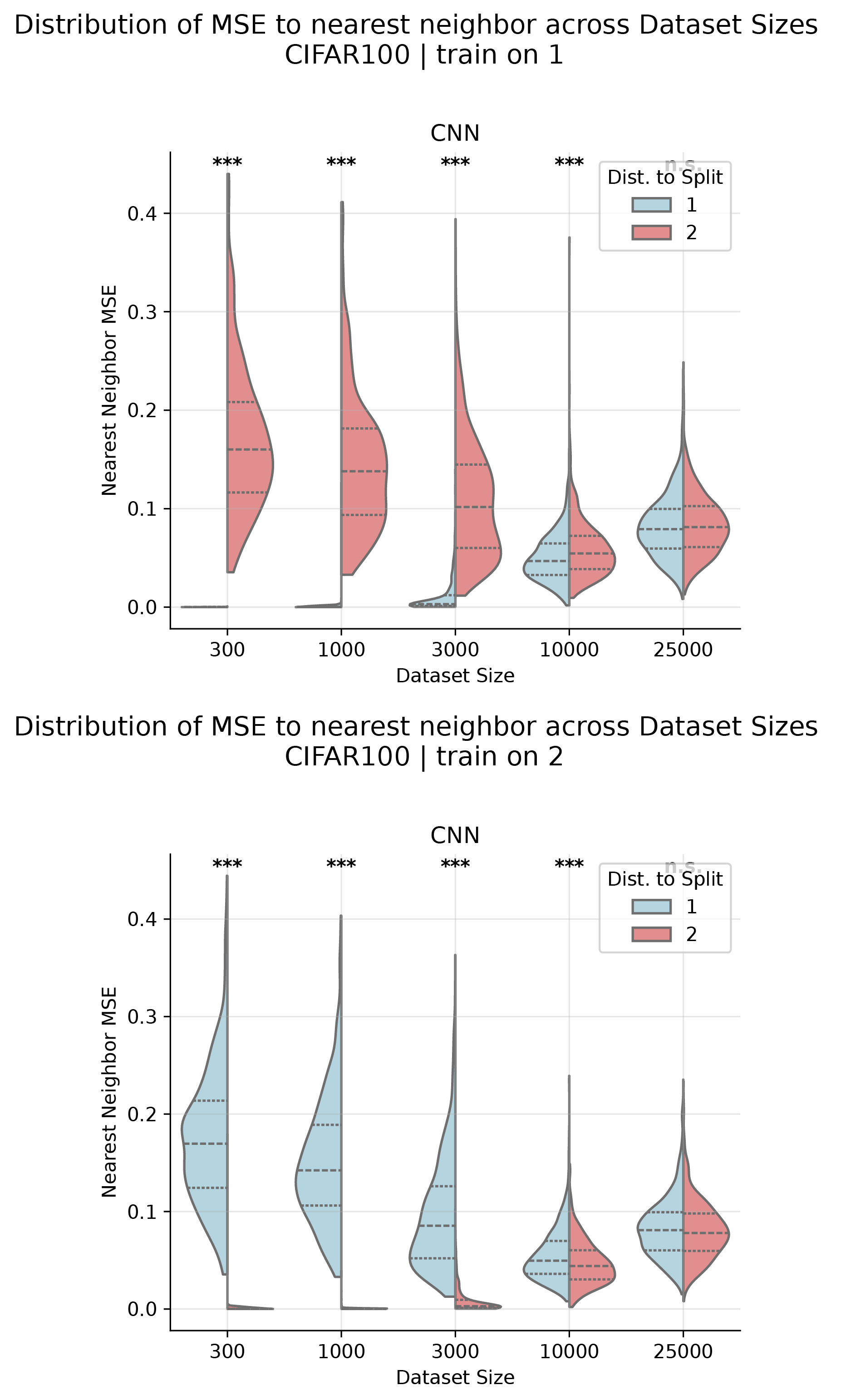}
    \vspace{-4pt}
\caption{\textbf{DNN validation experiments (CIFAR10 and CIFAR100), nearest neighbor in training and control set}  }
    \label{suppfig:DNN_val_NN_CIFAR}
\end{figure}

\begin{figure}[!htp]
    \centering
    \vspace{-5pt}
    \includegraphics[width=0.45\linewidth]{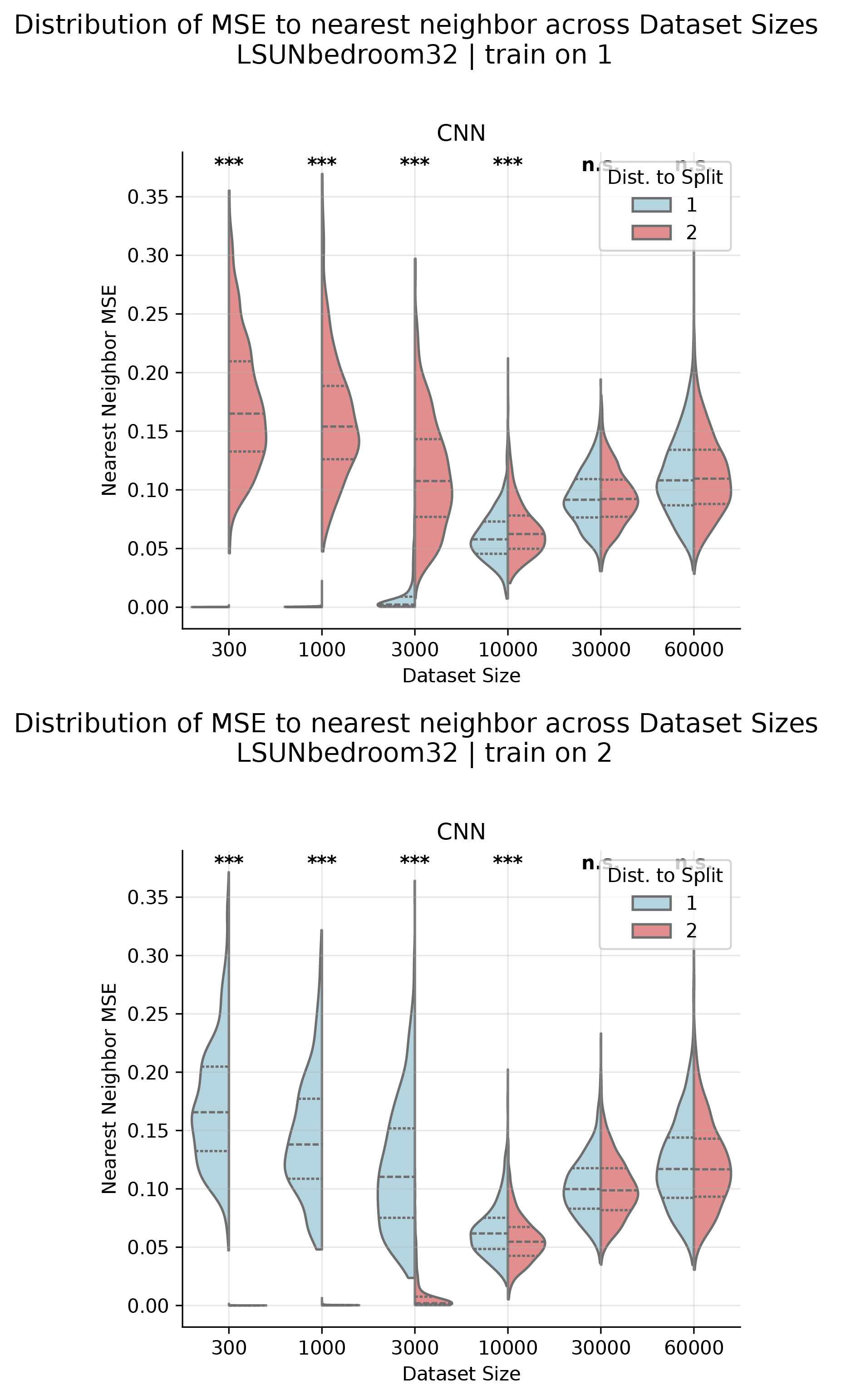}
    \includegraphics[width=0.45\linewidth]{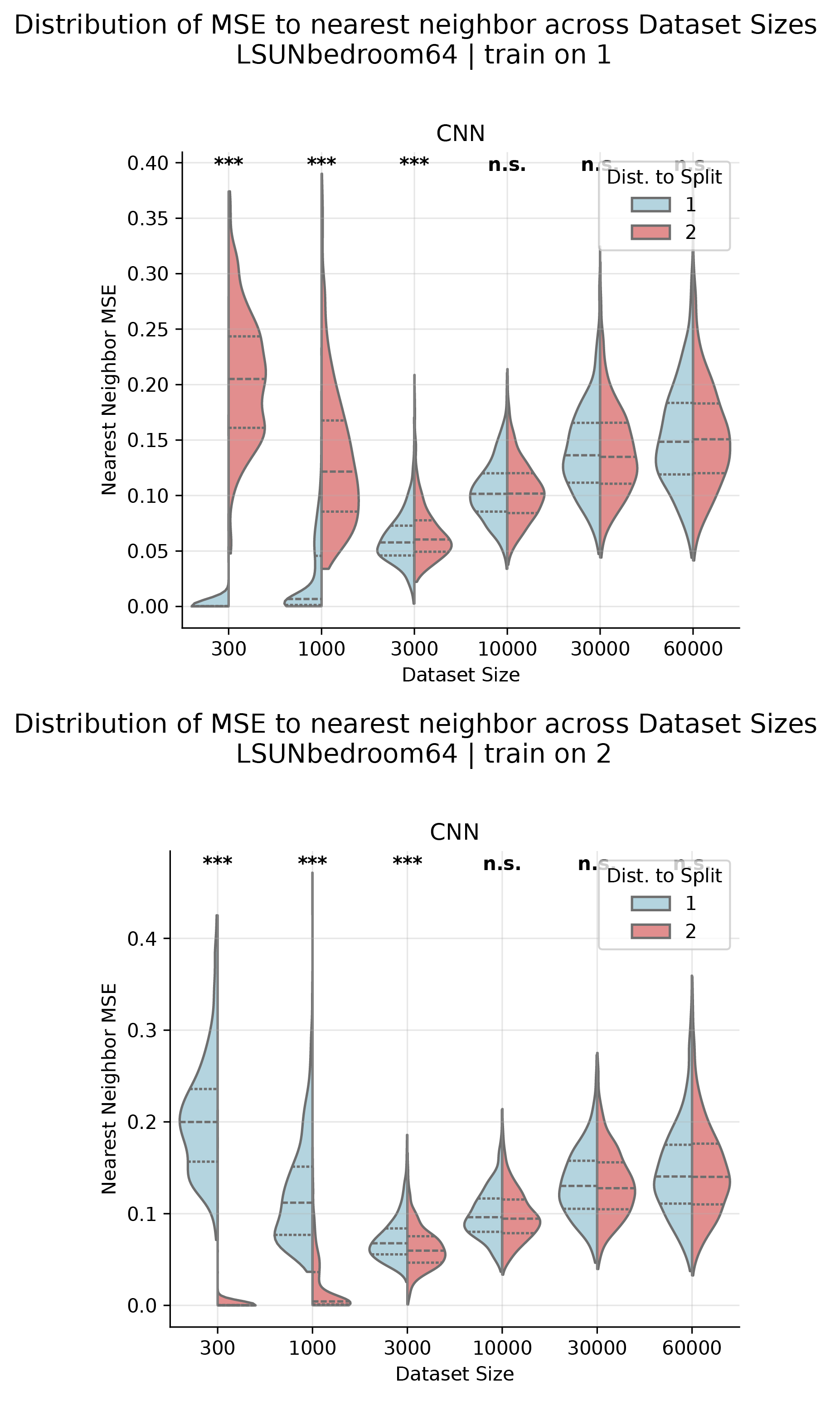}
    \vspace{-4pt}
\caption{\textbf{DNN validation experiments (LSUN bedroom 32 and 64), nearest neighbor in training and control set}}
    \label{suppfig:DNN_val_NN_LSUNbedroom32_64}
\end{figure}

\begin{figure}[!htp]
    \centering
    \vspace{-5pt}
    \includegraphics[width=0.45\linewidth]{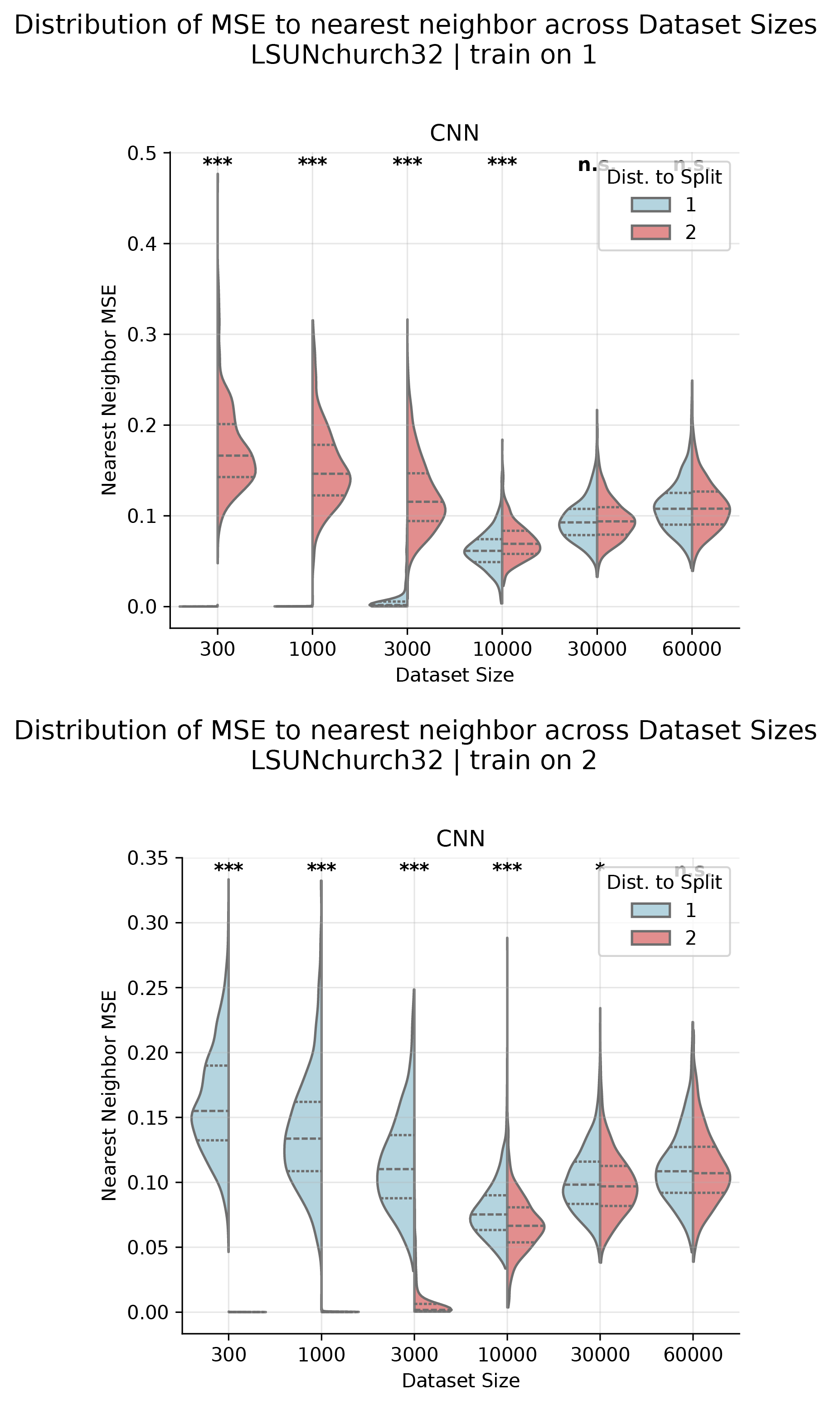}
    \includegraphics[width=0.45\linewidth]{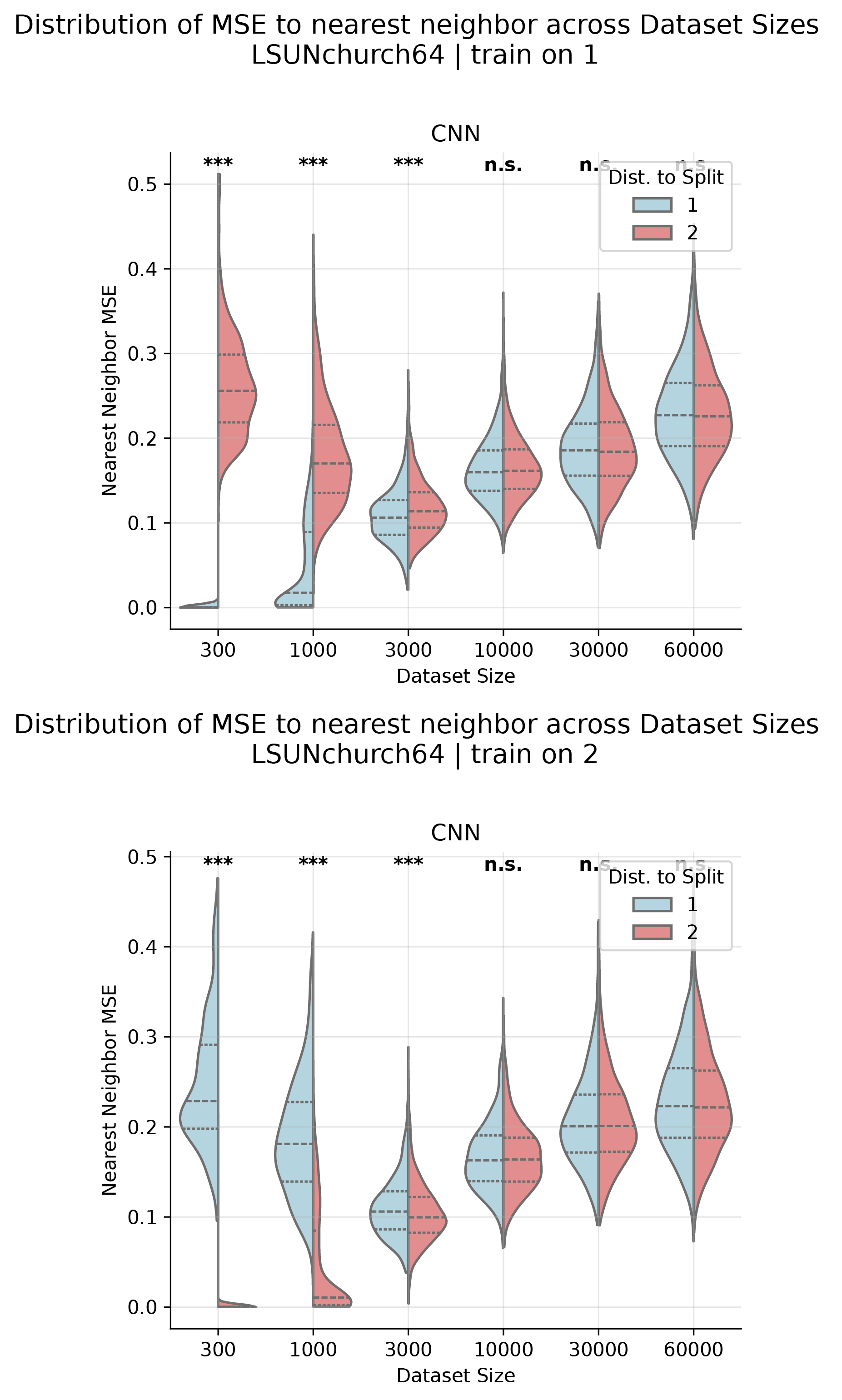}
    \vspace{-4pt}
\caption{\textbf{DNN validation experiments (LSUN church 32 and 64), nearest neighbor in training and control set}}
    \label{suppfig:DNN_val_NN_LSUNchurch32_64}
\end{figure}


\clearpage
\subsubsection{Cross-split consistency across dataset sizes}\label{app:consistency_scaling_dataset_size}
\begin{figure}[!htp]
    \centering
    \vspace{-5pt}
    \includegraphics[width=0.95\linewidth]{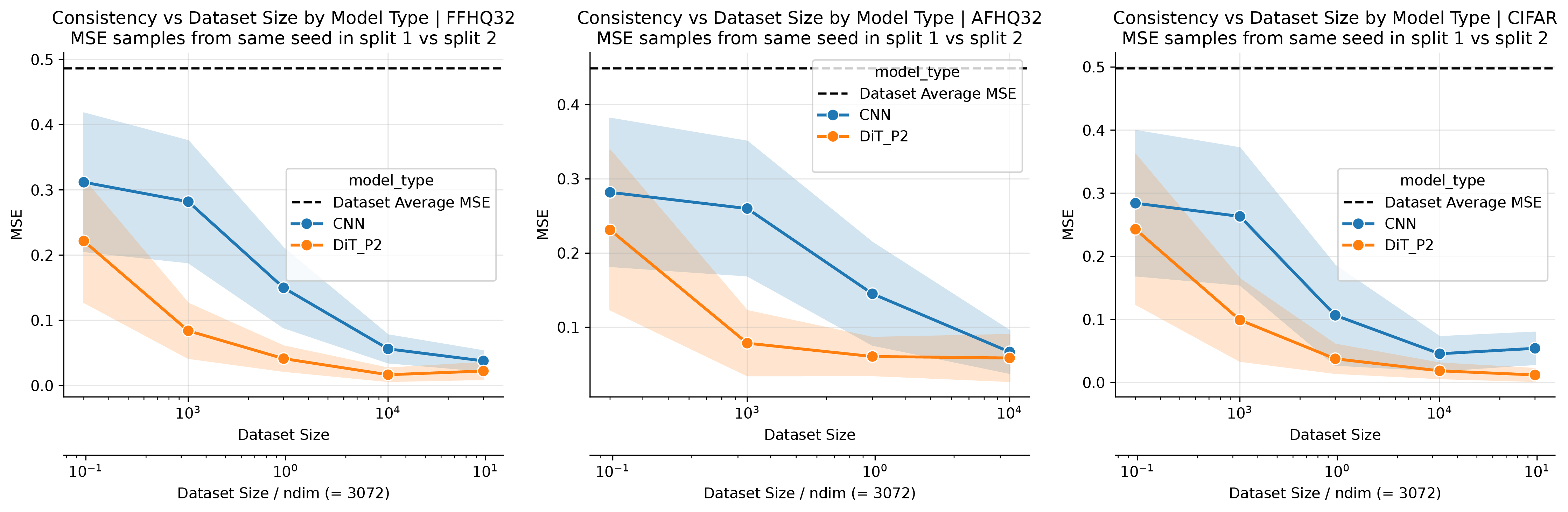}
    \vspace{-4pt}
\caption{\textbf{Scaling of DNN generation consistency with dataset size (CNN and DiT). }}
    \label{suppfig:DNN_val_scaling}
\end{figure}

\subsubsection{DNN samples approach the linear predictor at larger dataset sizes}\label{app:DNN_approach_Gaussian}

\begin{figure}[!htp]
    \centering
    \vspace{-5pt}
    \includegraphics[width=0.95\linewidth]{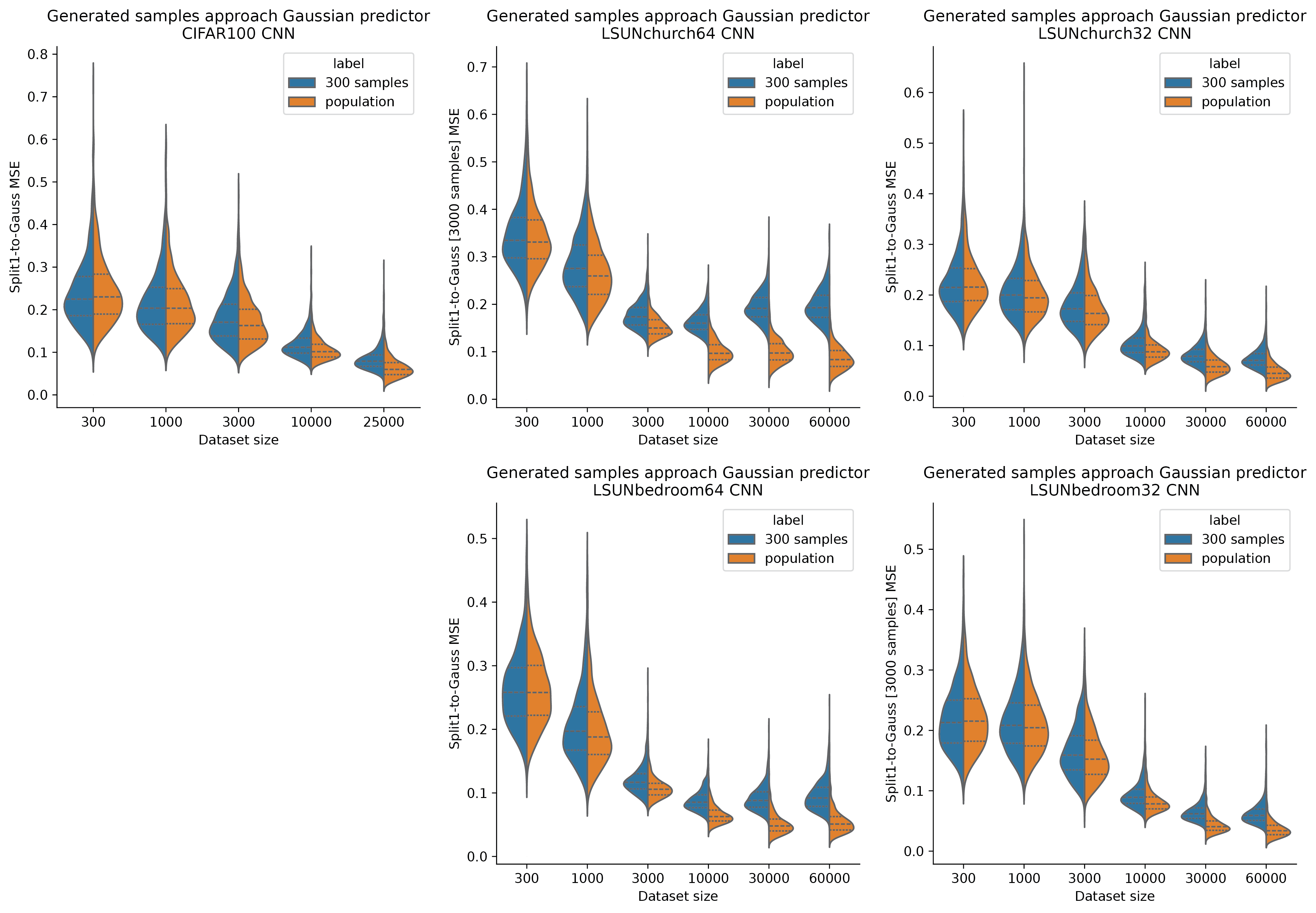}
    \vspace{-4pt}
\caption{\textbf{DNN-generated samples approach the linear theory predictor (with finite-sample or population covariance).} With increasing dataset size $n$, the generated samples from DNNs (trained on split 1) with a fixed noise seed gradually approach the linear theory predictor using the same initial seed.
The left violin plot shows the MSE from the linear predictor using empirical mean and covariance $\hat\mu,\hat\Sig$ computed from only 300 samples;
the right violin plot shows the MSE from the linear predictor using the population mean and covariance (whole dataset).
The results are consistent across datasets. }
    \label{suppfig:DNN_approach_Gaussian}
\end{figure}

\clearpage
\subsubsection{Spectral structure of generated variance and cross-split disagreement}
\label{app:spectral_anisotropy_var_consistency}
In this section, we provide further validation of the theoretically predicted spectral structure on additional datasets.
We examine the variance of the generated samples along each population PC (blue traces), which ideally should match the population covariance exactly.
This reveals effects similar to those in Fig.~\ref{fig:DNN_validation_results}\textbf{D}: insufficient variance and over-shrinkage along mid-range eigenvalues when the dataset size lies between the memorization and full-generalization regimes.
We also examine the anisotropy of the consistency effect (red traces), which mirrors the behavior in Fig.~\ref{fig:DNN_validation_results}\textbf{E}: the top eigenspaces exhibit larger cross-split deviations.
However, as the dataset size increases, the top eigenspaces become consistent faster than the lower ones.



\begin{figure}[!htp]
    \centering
    \vspace{-5pt}
    \includegraphics[width=0.95\linewidth]{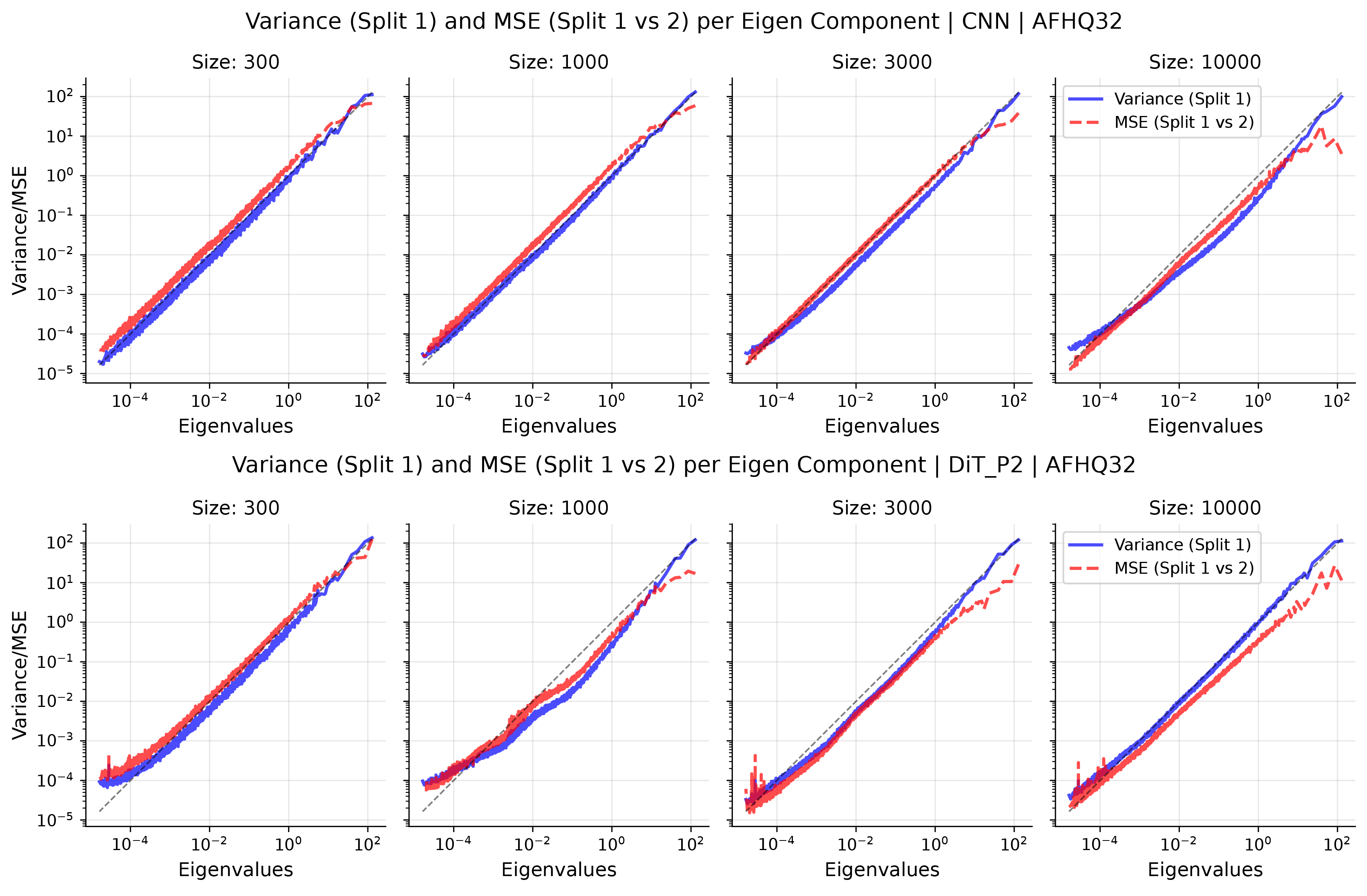}
    \vspace{-4pt}
\caption{\textbf{DNN validation experiments, Anisotropy and overshrinking (AFHQ32)}  }
    \label{suppfig:DNN_val_var_MSE_AFHQ32}
\end{figure}

\begin{figure}[!htp]
    \centering
    \vspace{-5pt}
    \includegraphics[width=0.95\linewidth]{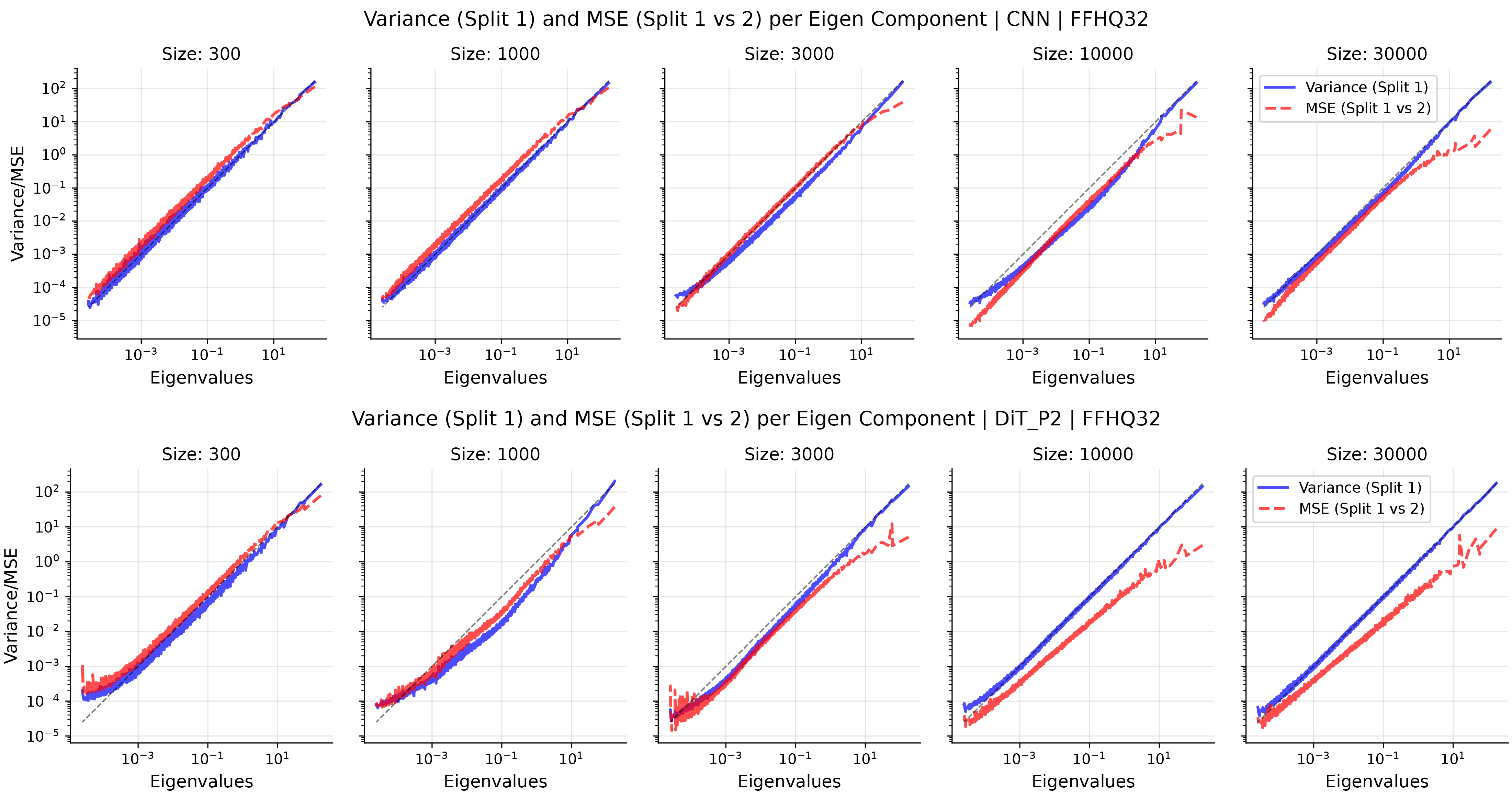}
    \vspace{-4pt}
\caption{\textbf{DNN validation experiments, Anisotropy and overshrinking (FFHQ32)}  }
    \label{suppfig:DNN_val_var_MSE_FFHQ32}
\end{figure}

\begin{figure}[!htp]
    \centering
    \vspace{-5pt}
    \includegraphics[width=0.95\linewidth]{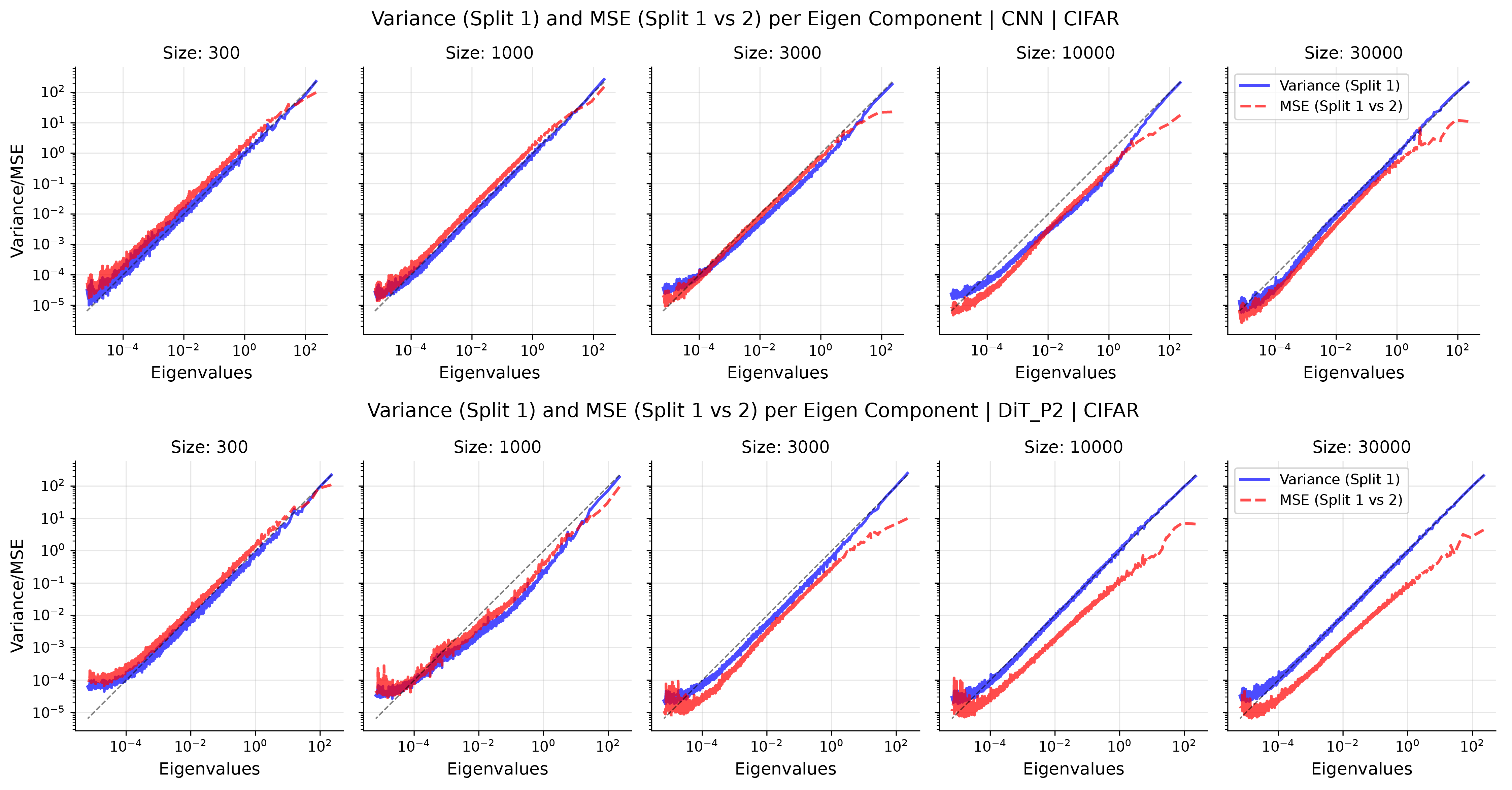}
    \vspace{-4pt}
\caption{\textbf{DNN validation experiments, Anisotropy and overshrinking (CIFAR10)}  }
    \label{suppfig:DNN_val_var_MSE_CIFAR}
\end{figure}

\begin{figure}[!htp]
    \centering
    \vspace{-5pt}
    \includegraphics[width=0.95\linewidth]{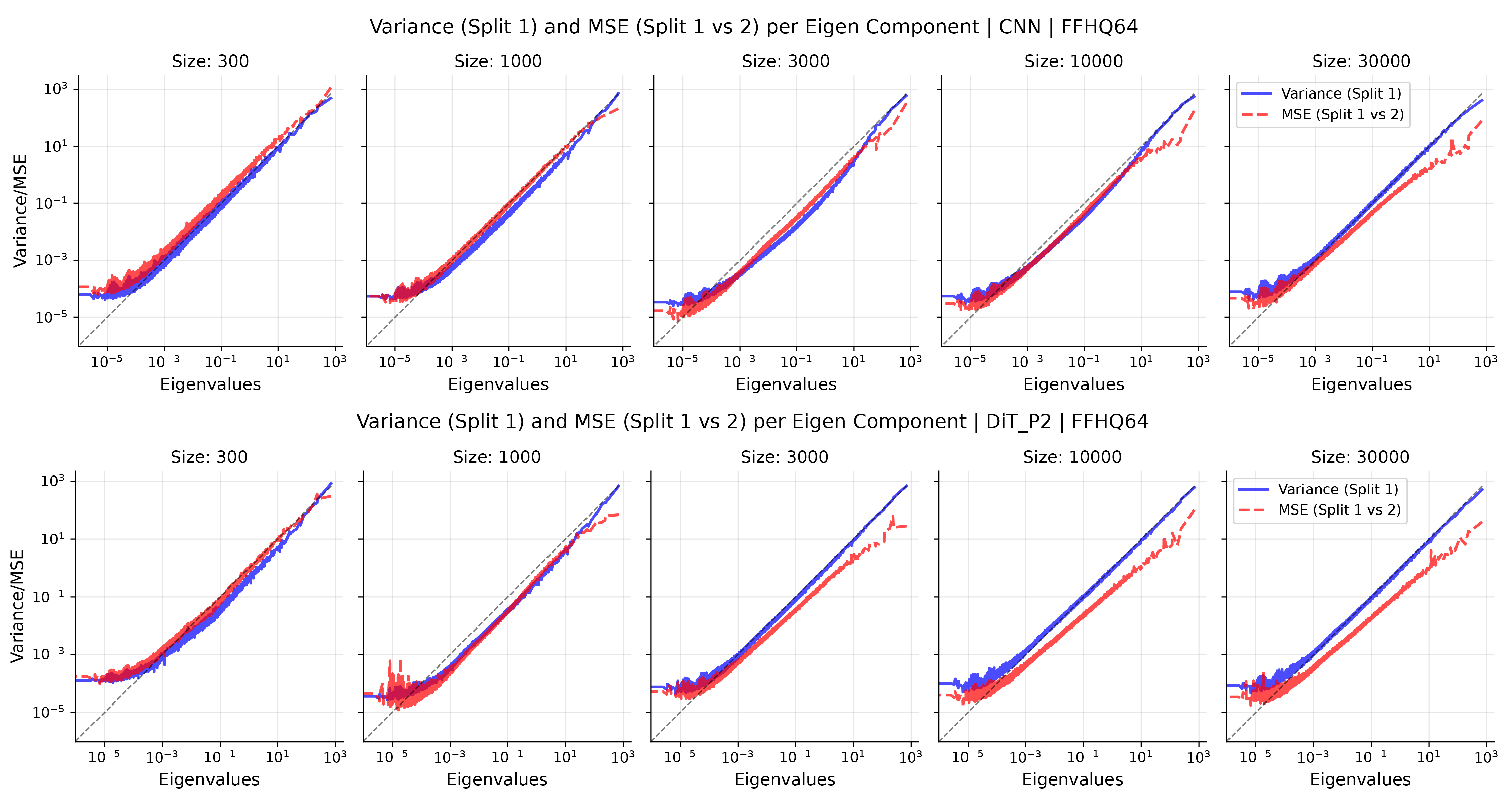}
    \vspace{-4pt}
\caption{\textbf{DNN validation experiments, Anisotropy and overshrinking (FFHQ64)}  }
    \label{suppfig:DNN_val_var_MSE_FFHQ64}
\end{figure}

\begin{figure}[!htp]
    \centering
    \vspace{-5pt}
    \includegraphics[width=0.95\linewidth]{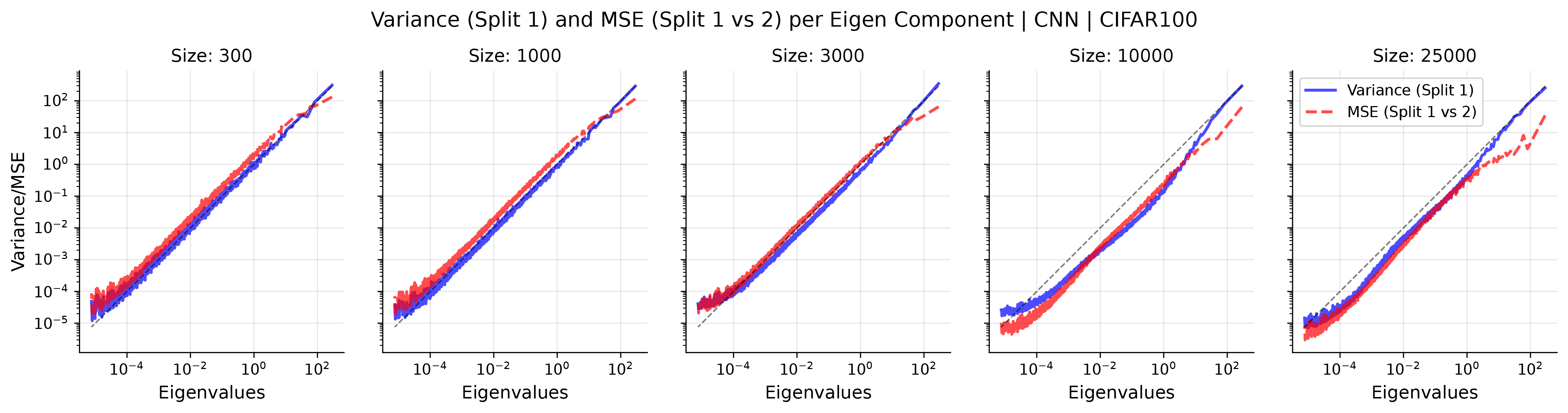}
    \vspace{-4pt}
\caption{\textbf{DNN validation experiments, Anisotropy and overshrinking (CIFAR100)}  }
    \label{suppfig:DNN_val_var_MSE_CIFAR100}
\end{figure}

\begin{figure}[!htp]
    \centering
    \vspace{-5pt}
    \includegraphics[width=0.95\linewidth]{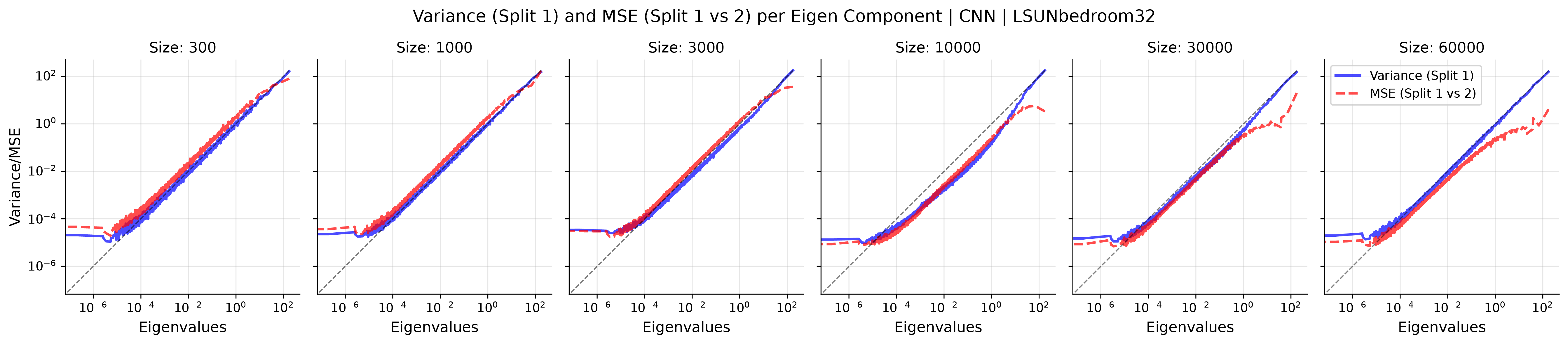}
    \vspace{-4pt}
\caption{\textbf{DNN validation experiments, Anisotropy and overshrinking (LSUN bedroom 32)}  }
    \label{suppfig:DNN_val_var_MSE_LSUNbedroom32}
\end{figure}

\begin{figure}[!htp]
    \centering
    \vspace{-5pt}
    \includegraphics[width=0.95\linewidth]{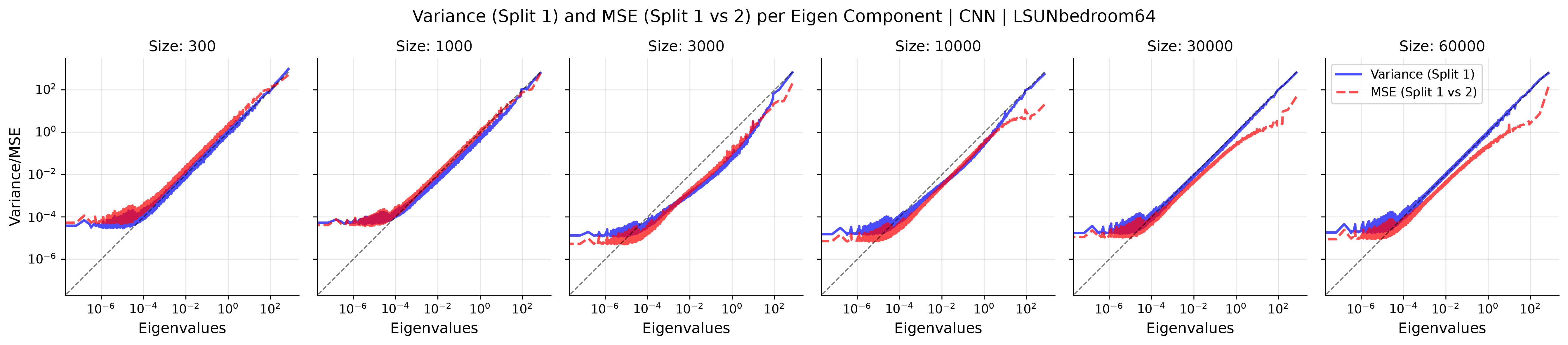}
    \vspace{-4pt}
\caption{\textbf{DNN validation experiments, Anisotropy and overshrinking (LSUN bedroom 64)}  }
    \label{suppfig:DNN_val_var_MSE_LSUNbedroom64}
\end{figure}

\begin{figure}[!htp]
    \centering
    \vspace{-5pt}
    \includegraphics[width=0.95\linewidth]{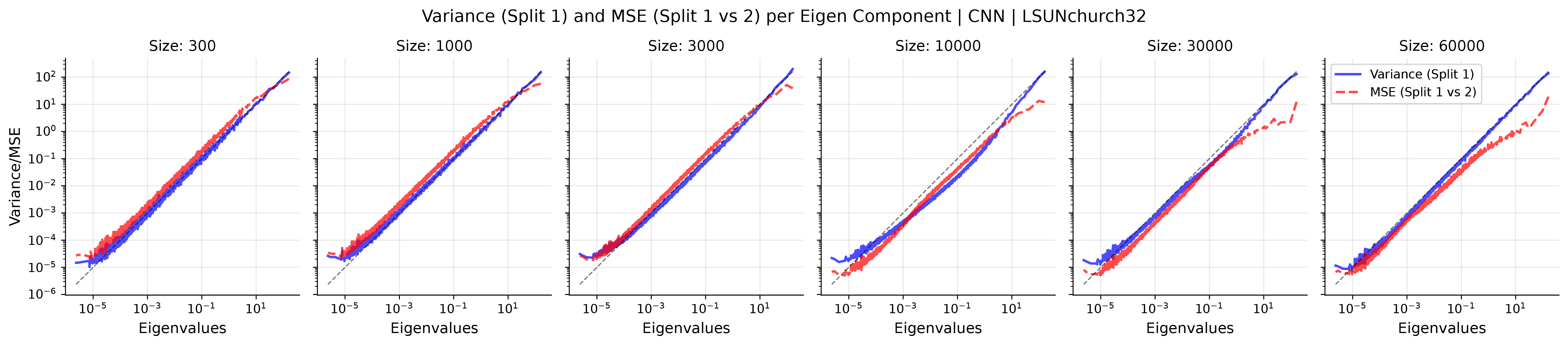}
    \vspace{-4pt}
\caption{\textbf{DNN validation experiments, Anisotropy and overshrinking (LSUN church 32)}  }
    \label{suppfig:DNN_val_var_MSE_LSUNchurch32}
\end{figure}

\begin{figure}[!htp]
    \centering
    \vspace{-5pt}
    \includegraphics[width=0.95\linewidth]{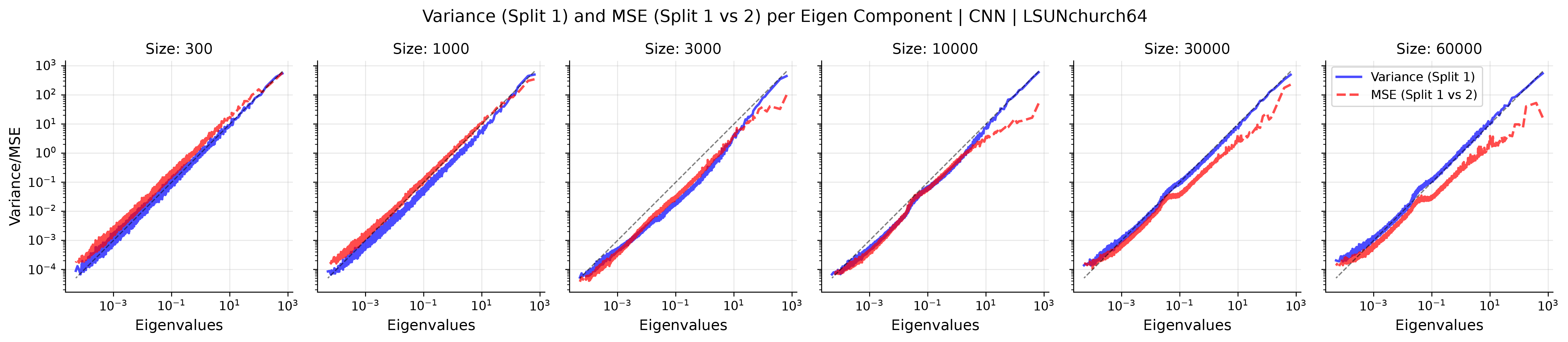}
    \vspace{-4pt}
\caption{\textbf{DNN validation experiments, Anisotropy and overshrinking (LSUN church 64)}  }
    \label{suppfig:DNN_val_var_MSE_LSUNchurch64}
\end{figure}

\newpage
\subsubsection{Spatial inhomogeneity of cross-split disagreement}\label{app:spatial_inhomogeneity_consistency_rmt}
Here, we present the per-seed cross split disagreement of DNN samples with corresponding RMT prediction, with similar format as Fig.~\ref{fig:DNN_validation_results}\textbf{F}. 
In general, we observe that, the RMT theory exhibits stronger and significant prediction power at larger dataset size, non-memorizing regime; in small data, memorization regime the prediction power is weak or non-significant. This corroborates prior finding that the linear score approximation is valid only in the non-memorization regime \cite{li2024understanding}.  
This holds across datasets and architectures. 
\begin{figure}[!htp]
    \centering
    \vspace{-5pt}
    \includegraphics[width=0.99\linewidth]{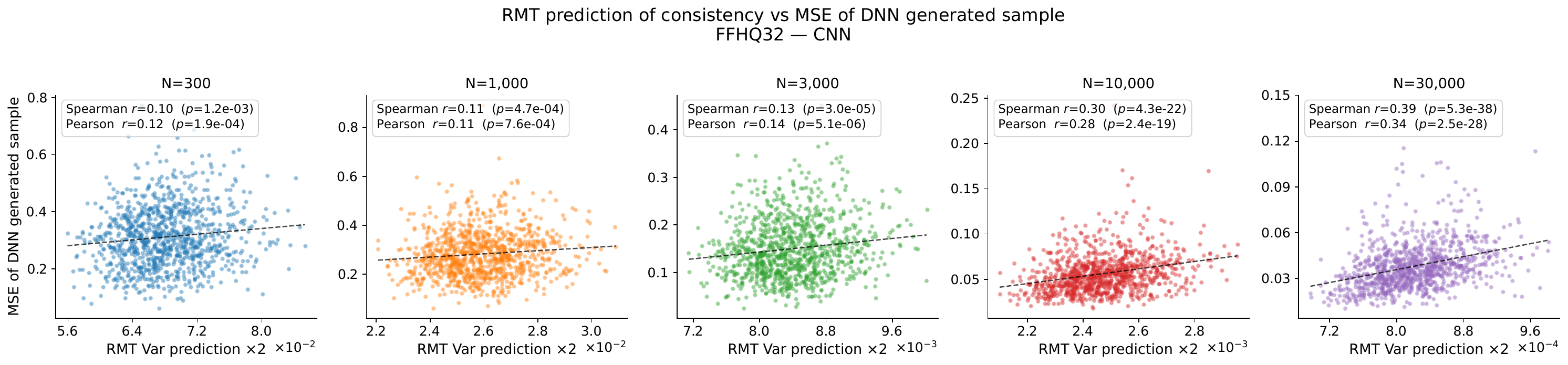}
    \includegraphics[width=0.99\linewidth]{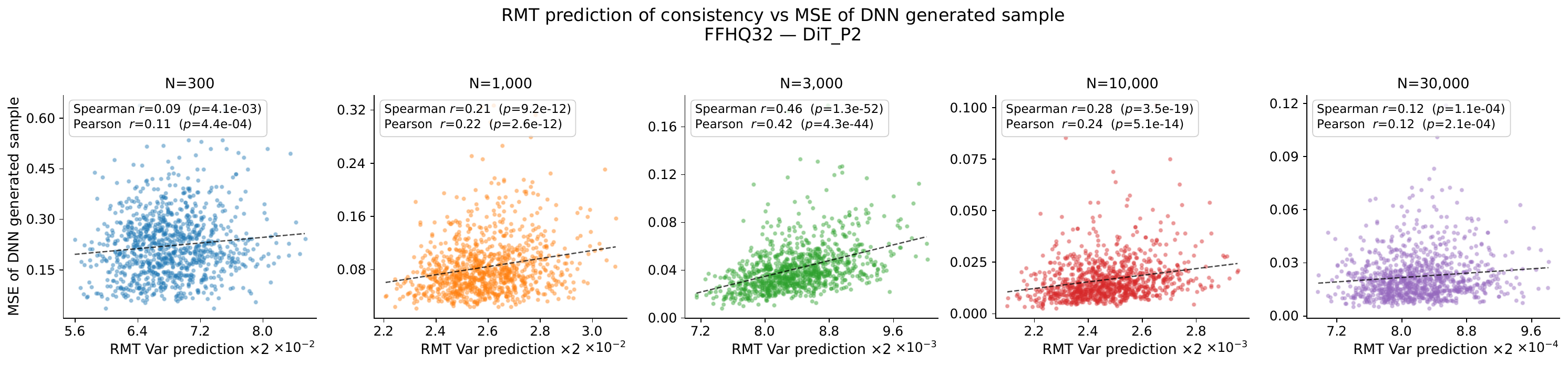}
    \vspace{-4pt}
\caption{\textbf{DNN validation experiments, RMT predicting spatial inhomogeneity across seeds (FFHQ32)}  }
    \label{suppfig:DNN_val_inhomogeneity_FFHQ32}
\end{figure}

\begin{figure}[!htp]
    \centering
    \vspace{-5pt}
    \includegraphics[width=0.99\linewidth]{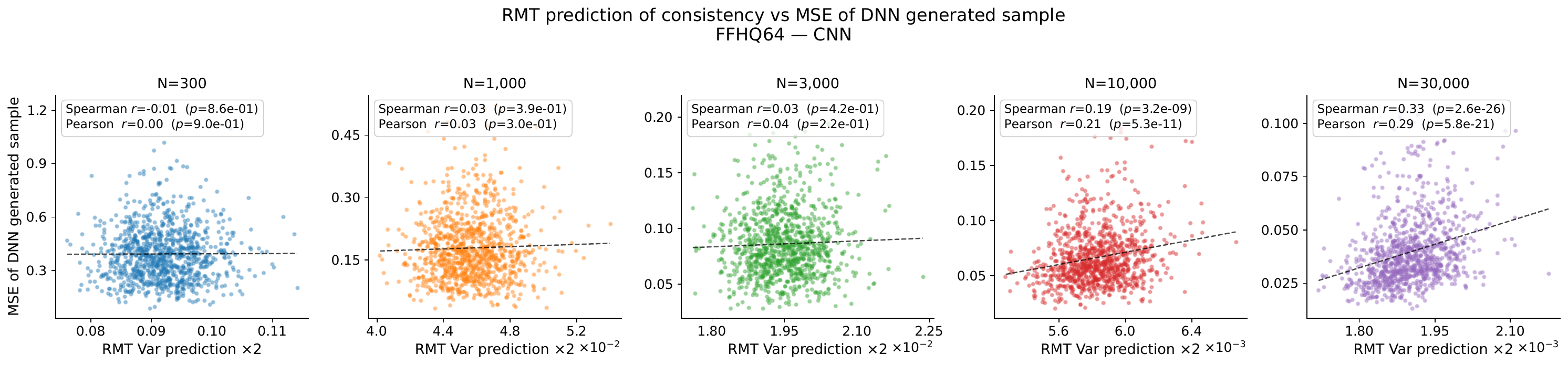}
    \includegraphics[width=0.99\linewidth]{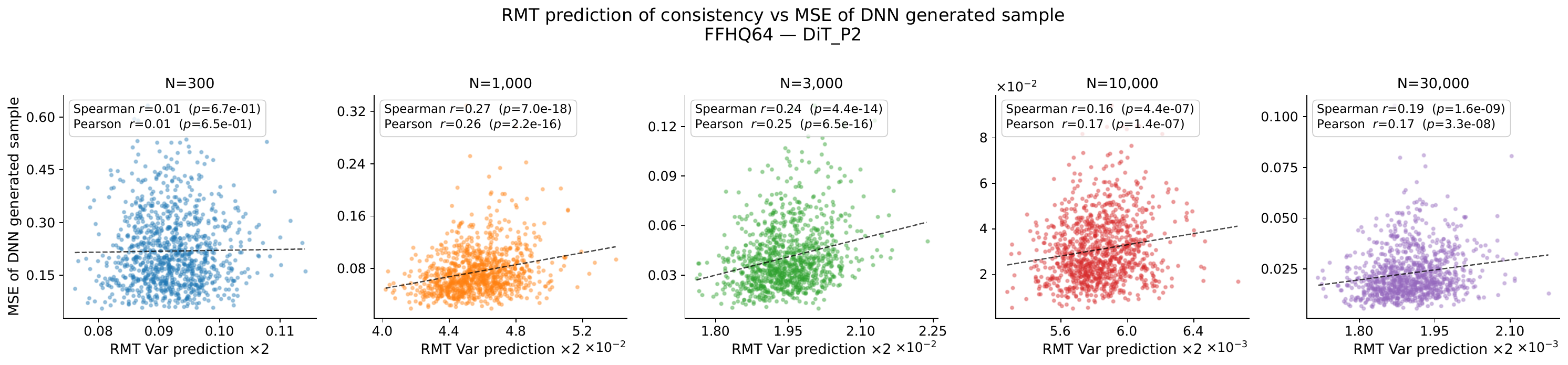}
    \vspace{-4pt}
\caption{\textbf{DNN validation experiments, RMT predicting spatial inhomogeneity across seeds (FFHQ64)}  }
    \label{suppfig:DNN_val_inhomogeneity_FFHQ64}
\end{figure}

\begin{figure}[!htp]
    \centering
    \vspace{-5pt}
    \includegraphics[width=0.99\linewidth]{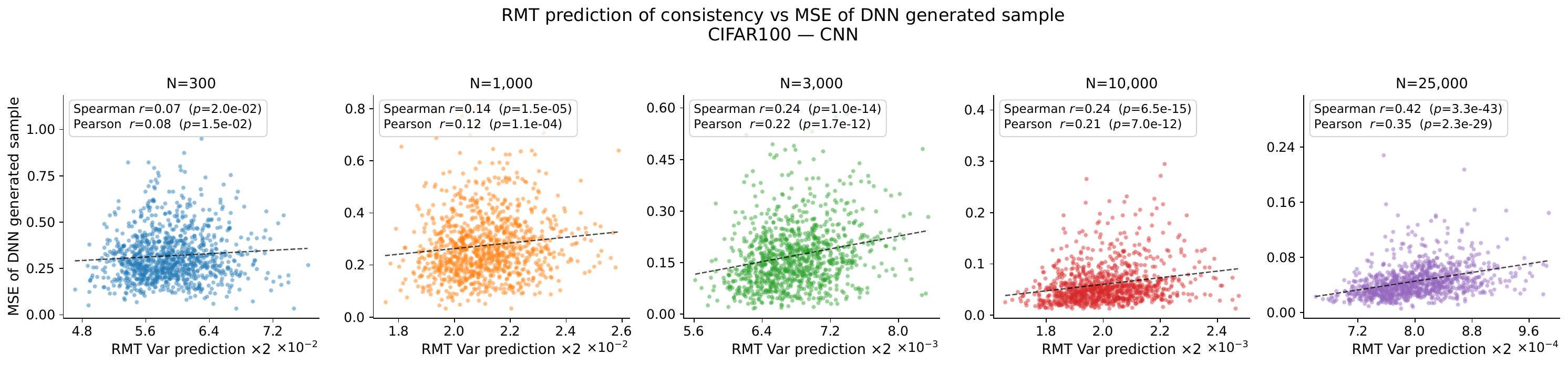}
    \vspace{-4pt}
\caption{\textbf{DNN validation experiments, RMT predicting spatial inhomogeneity across seeds (CIFAR100)}  }
    \label{suppfig:DNN_val_inhomogeneity_CIFAR100}
\end{figure}

\begin{figure}[!htp]
    \centering
    \vspace{-5pt}
    \includegraphics[width=0.99\linewidth]{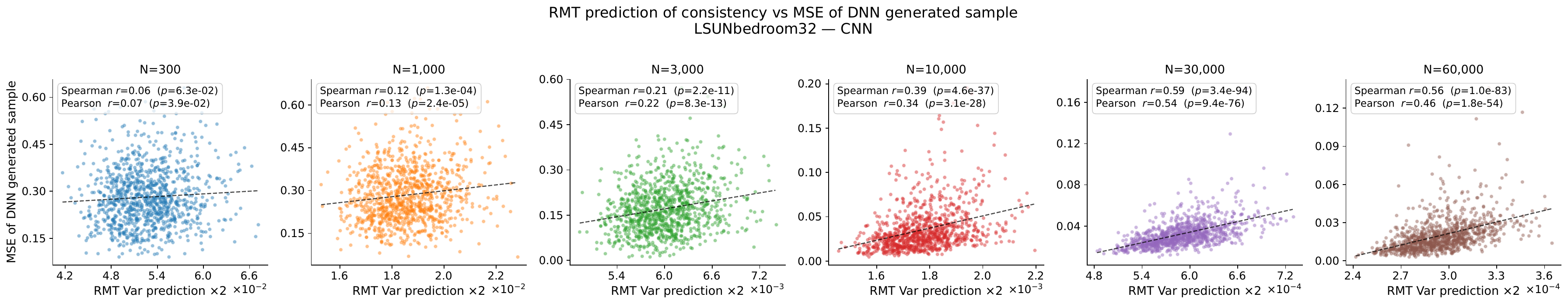}
    \vspace{-4pt}
\caption{\textbf{DNN validation experiments, RMT predicting spatial inhomogeneity across seeds (LSUN bedroom 32)}  }
    \label{suppfig:DNN_val_inhomogeneity_LSUNbedroom32}
\end{figure}

\begin{figure}[!htp]
    \centering
    \vspace{-5pt}
    \includegraphics[width=0.99\linewidth]{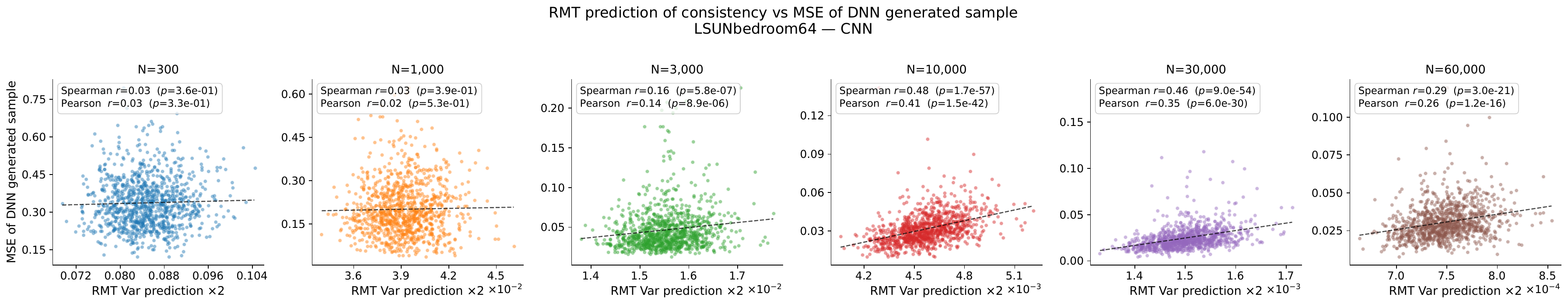}
    \vspace{-4pt}
\caption{\textbf{DNN validation experiments, RMT predicting spatial inhomogeneity across seeds (LSUN bedroom 64)}}
    \label{suppfig:DNN_val_inhomogeneity_LSUNbedroom64}
\end{figure}

\begin{figure}[!htp]
    \centering
    \vspace{-5pt}
    \includegraphics[width=0.99\linewidth]{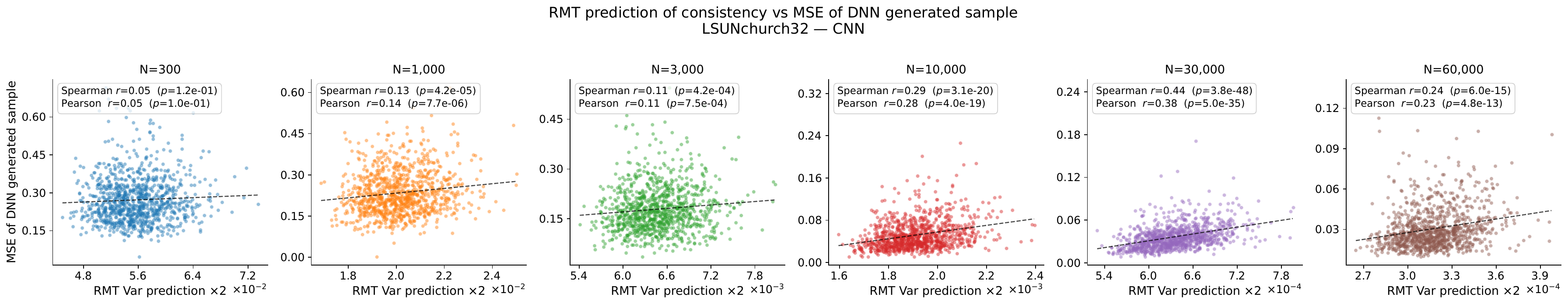}
    \vspace{-4pt}
\caption{\textbf{DNN validation experiments, RMT predicting spatial inhomogeneity across seeds (LSUN church 32)}}
    \label{suppfig:DNN_val_inhomogeneity_LSUNchurch32}
\end{figure}

\begin{figure}[!htp]
    \centering
    \vspace{-5pt}
    \includegraphics[width=0.99\linewidth]{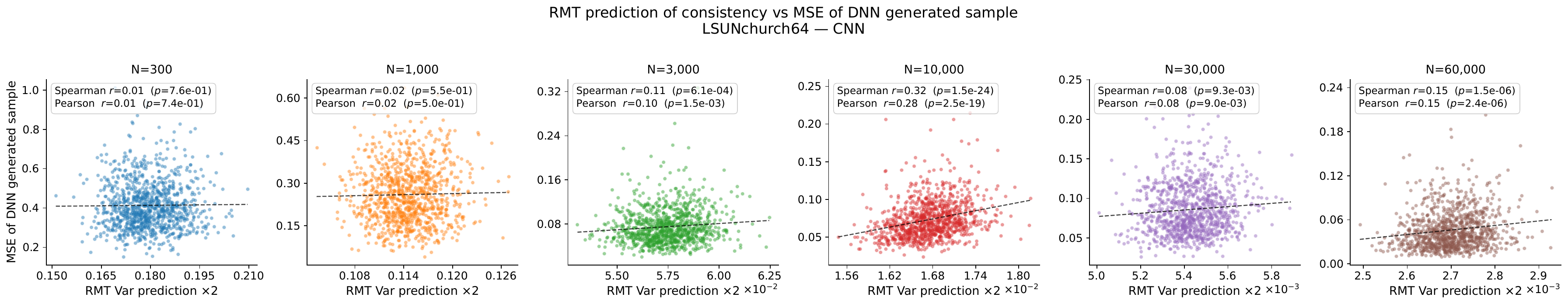}
    \vspace{-4pt}
\caption{\textbf{DNN validation experiments, RMT predicting spatial inhomogeneity across seeds (LSUN church 64)}}
    \label{suppfig:DNN_val_inhomogeneity_LSUNchurch64}
\end{figure}

\clearpage
\subsubsection{Anisotropy of the initial noise space}\label{app:anisotropy_init_noise}
In this section, we provide additional evidence supporting the point that the initial noise space already has well defined axes structure, i.e. the covariance eigenbasis, when using a deterministic sampler. We show that by amplifying or shrinking the random noise within certain eigenspaces with top or bottom eigenvalues. One see that, manipulating noise projection in top eigenspace has semantically interesting effect (e.g. removing backgrounds, Fig.~\ref{suppfig:anisotropic_structure_initial_noise_top50PC}), which can inspire future training-free adaptations of diffusion models. 

\begin{figure}[!htp]
    \centering
    \vspace{-5pt}
    \includegraphics[width=0.85\linewidth]{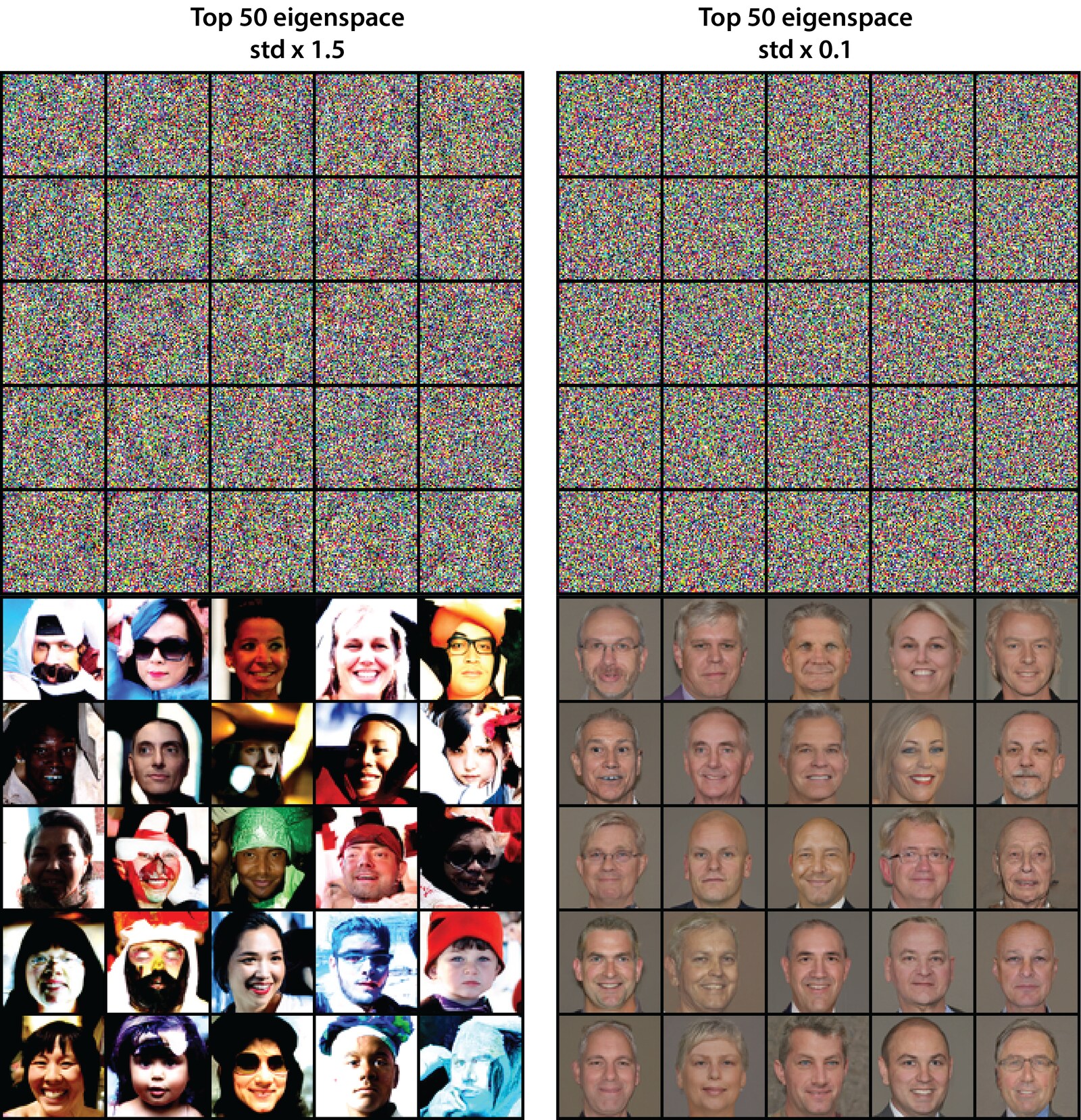}
    \vspace{-4pt}
\caption{
\textbf{Anisotropic structure of the initial noise space (CNN-UNet FFHQ64).}
\textbf{Left:} Amplifying the initial-noise amplitude in the top-50 eigenspace.
\textbf{Right:} Decreasing the initial-noise amplitude in the top-50 eigenspace.
\textbf{Top}: initial noise;
\textbf{Bottom}: generated samples (same noise seed).
Increasing noise in the dominant eigendirections introduces more visual artifacts by
amplifying the top eigen-structure of the generative map (Eq.~\ref{eq:PFODE_sol_gauss}).
Conversely, reducing noise in these dimensions yields a cleanly segmented face
against a gray background.}
    \label{suppfig:anisotropic_structure_initial_noise_top50PC}
\end{figure}

\begin{figure}[!htp]
    \centering
    \vspace{-5pt}
    \includegraphics[width=0.85\linewidth]{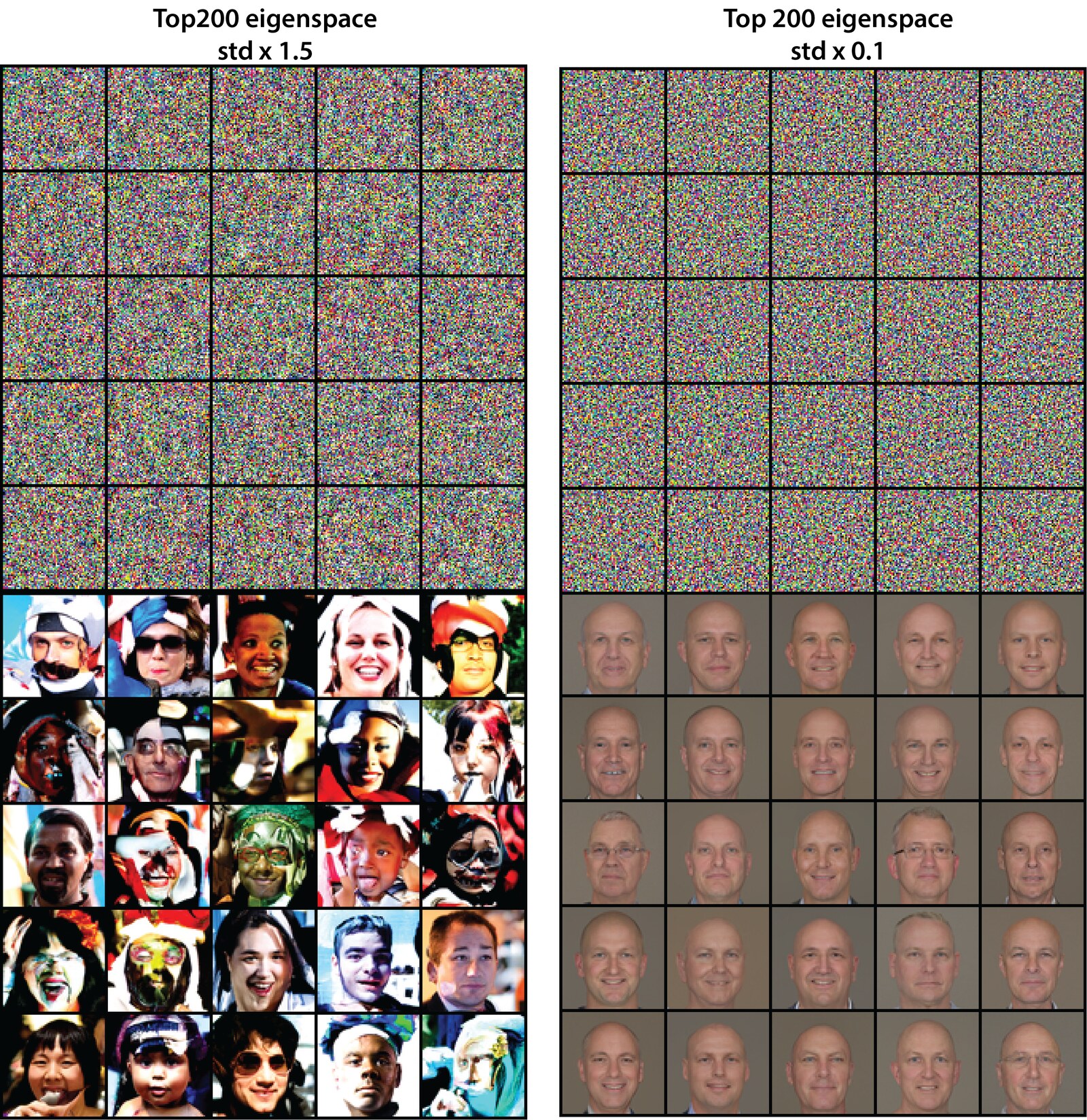}
    \vspace{-4pt}
\caption{
\textbf{Anisotropic structure of the initial noise space (CNN-UNet FFHQ64).}
\textbf{Left:} Amplifying the initial-noise amplitude in the top-200 eigenspace.
\textbf{Right:} Decreasing the initial-noise amplitude in the top-200 eigenspace.
\textbf{Top}: initial noise; \textbf{Bottom}: generated samples (same noise seed).
Similar to the top-50 case: stronger noise in leading eigendirections amplifies
dominant structure, producing visible artifacts.
Suppressing noise yields an even more homogeneous and simplified face, reflecting the
removal of additional variation modes.}
    \label{suppfig:anisotropic_structure_initial_noise_top200PC}
\end{figure}

\begin{figure}[!htp]
    \centering
    \vspace{-5pt}
    \includegraphics[width=0.85\linewidth]{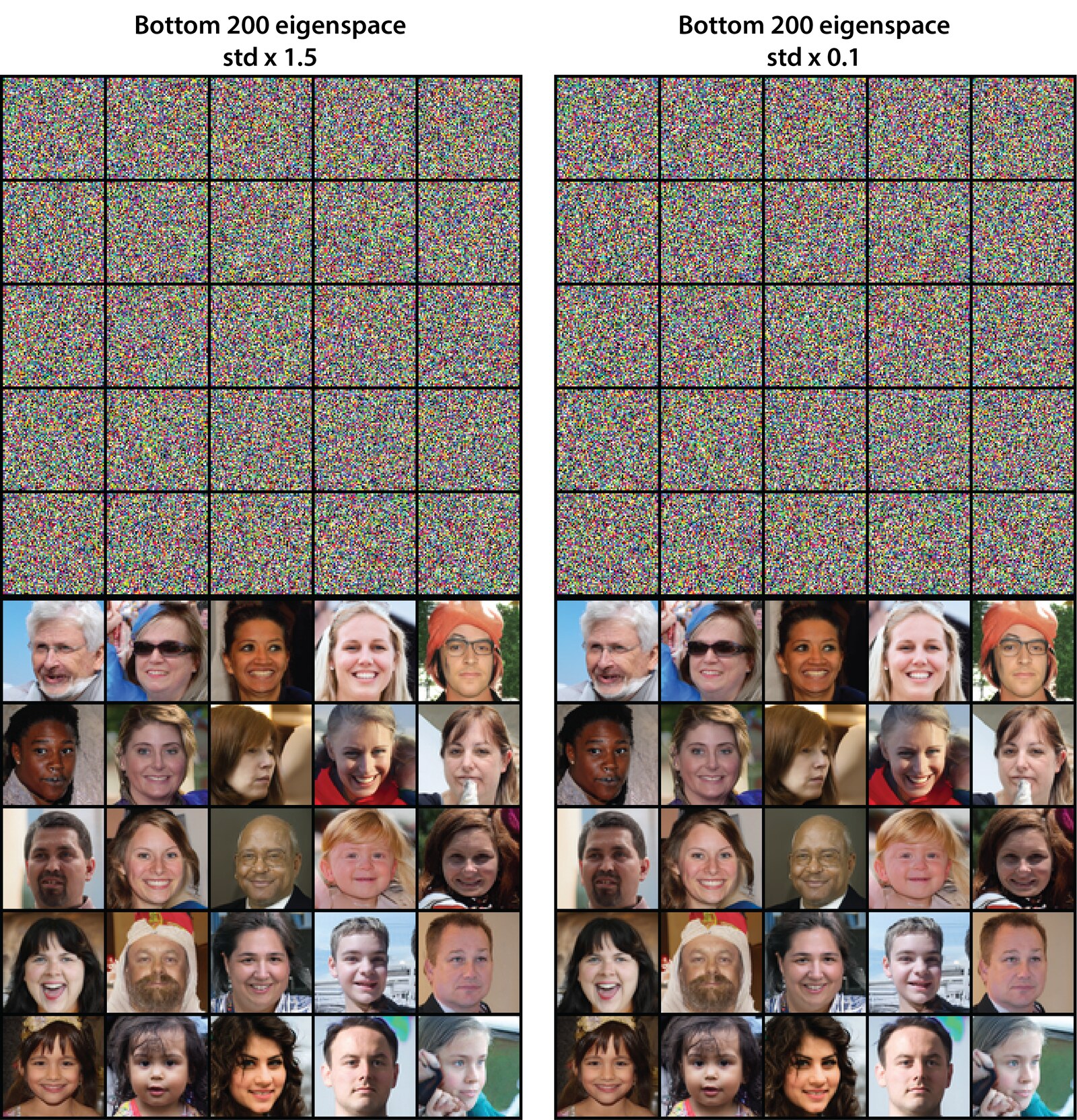}
    \vspace{-4pt}
\caption{
\textbf{Anisotropic structure of the initial noise space (CNN-UNet FFHQ64).}
\textbf{Left:} Amplifying the initial-noise amplitude in the bottom-200 eigenspace.
\textbf{Right:} Decreasing the initial-noise amplitude in the bottom-200 eigenspace.
\textbf{Top}: initial noise; \textbf{Bottom}: generated samples (same noise seed).
In contrast to perturbations along top eigendirections, manipulating the
least-significant eigenspaces produces minimal perceptual impact on the generated
image.
}
    \label{suppfig:anisotropic_structure_initial_noise_bottom200PC}
\end{figure}

\clearpage
\subsubsection{Consistency and proximity to linear predictor across training time}\label{app:longer_training_runs}

We next examine how the consistency effect evolves over longer training times. 
Here, we train FFHQ64 UNet models for 250k steps instead of 50k steps. 
Across training, the samples generated by the DNN first move closer to the samples predicted by linear theory, but later gradually drift away from them 
(Fig.~\ref{suppfig:longtrain_mse2linpred_s1model_own_vs_heldout_panels_250k}). 
Notably, the MSE distances to the linear predictors constructed from the training split and from the held-out split are nearly identical, highlighting the robustness of the underlying Gaussian statistics. 
This agrees with the previous observation that, during training, the learned score and denoiser first approach their linear counterparts before gradually deviating from them 
(\citet{wang_unreasonable_2024}, Fig.~12). 
In the memorization regime, the DNN samples begin to deviate from the linear predictor earlier and more strongly, consistent with smaller dataset sizes leading to earlier memorization. 
By contrast, for larger dataset sizes, the generated samples remain relatively close to the linear predictor throughout $10^5$ training steps, suggesting that the linear approximation remains valid for a long training window in the non-memorizing regime. 

Examining cross-split consistency over training time 
(Fig.~\ref{suppfig:longtrain_consistency_panels_250k}), we find a similar trend to the MSE-to-linear-predictor curves: when models drift away from the linear predictor, they also become less consistent with models trained on the other split. 
This supports the interpretation that cross-split consistency arises from learning shared Gaussian statistics.

Finally, we examine FID during training 
(Fig.~\ref{suppfig:longtrain_FID_s1model_vs_s1_s2_full_panels_250k}), which measures the distributional distance between generated samples and the dataset in a deep feature space. 
In general, FID improves over training, i.e. decreases. 
In the memorization regime, however, FID decreases more strongly when evaluated against the training split than against the held-out split. 
In some cases, it also exhibits a ``double-descent''-like behavior: FID initially worsens when memorization begins, but later improves again as the model memorizes the training samples more accurately.

\begin{figure}[!htp]
    \centering
    \vspace{-5pt}
    \includegraphics[width=0.99\linewidth]{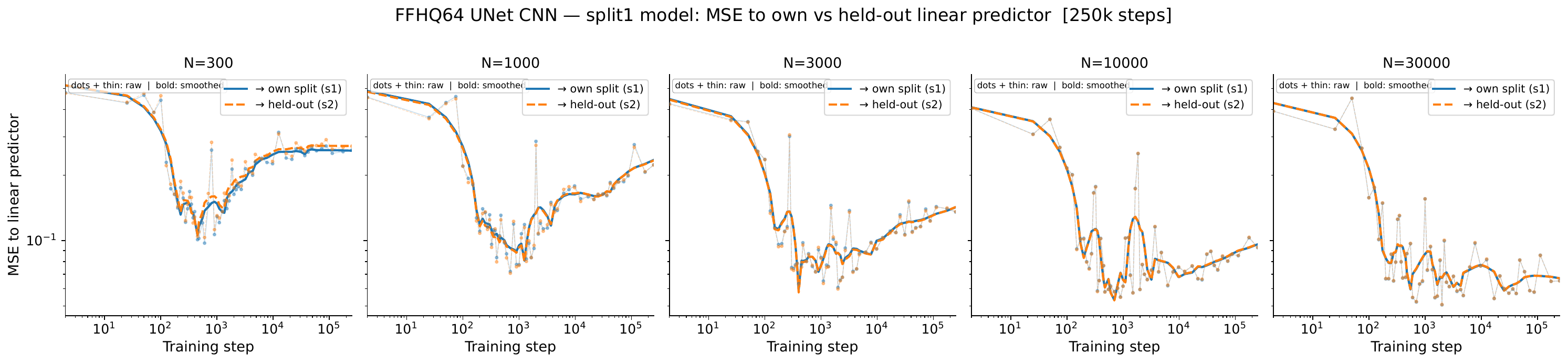}
    \vspace{-4pt}
\caption{\textbf{MSE between generated samples and linear predictor based on trained split and heldout split.}}
    \label{suppfig:longtrain_mse2linpred_s1model_own_vs_heldout_panels_250k}
\end{figure}

\begin{figure}[!htp]
    \centering
    \vspace{-5pt}
    \includegraphics[width=0.99\linewidth]{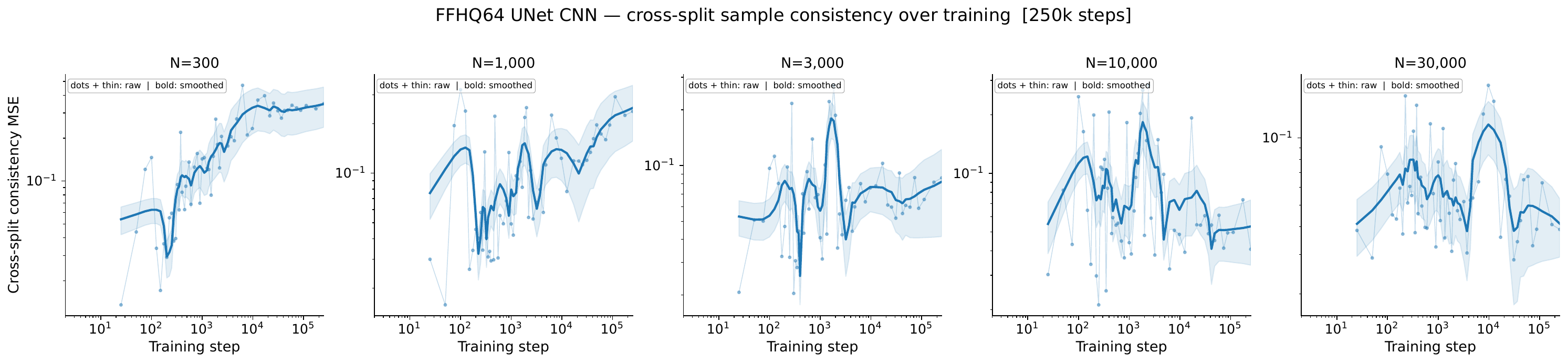}
    \vspace{-4pt}
\caption{\textbf{Consistency (MSE) between DNN generated samples from two training split.}}
    \label{suppfig:longtrain_consistency_panels_250k}
\end{figure}

\begin{figure}[!htp]
    \centering
    \vspace{-5pt}
    \includegraphics[width=0.99\linewidth]{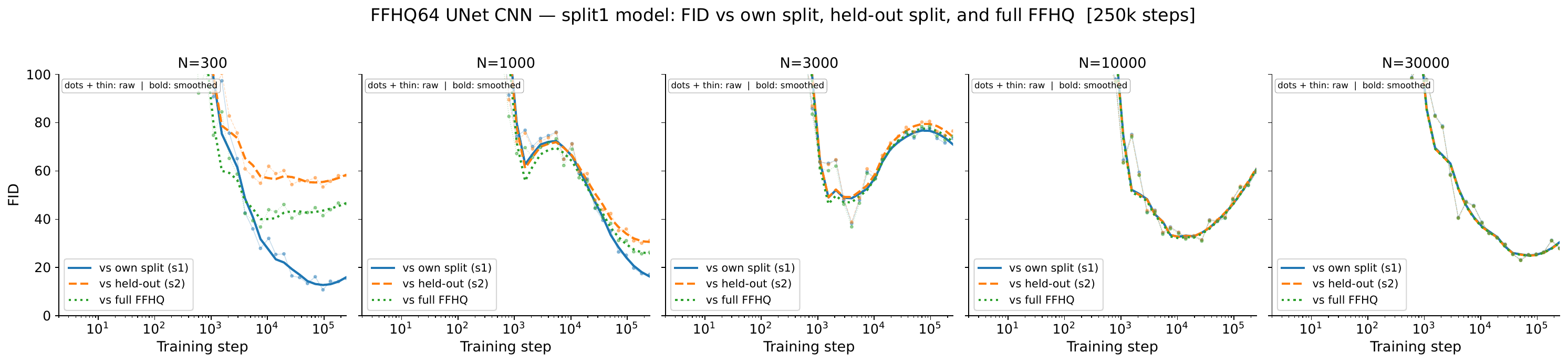}
    \vspace{-4pt}
\caption{\textbf{FID between generated samples and trained split, held-out split and whole dataset.}}
    \label{suppfig:longtrain_FID_s1model_vs_s1_s2_full_panels_250k}
\end{figure}

\clearpage

\section{Proofs and Derivations}

\subsection{Deterministic equivalence relations} \label{app:deterministic_equivalence}
Here we collect the one-point and two-point deterministic equivalent relationships from \cite{atanasov2024scalingRenormalization,Atanasov2025TwoPoint,bach2024highDimRMT_doubleDescent}, rewritten in a unified notation.

\paragraph{Setup}
Using similar notation as \citet{bach2024highDimRMT_doubleDescent}, we consider data matrix $X\in\R^{n\times d}$, where each row is an i.i.d. sample $\x_i$. The population covariance of these samples is denoted as $\Sig$. The key object of analysis is their empirical covariance
\[\hat{\Sig}=\frac{1}{n}X^{\top}X\], 
which we will assume lies in a Wishart-like universality class \cite{bach2024highDimRMT_doubleDescent,silverstein1995strongConvergence}. 
We use $\gamma=d/n$ to denote the aspect ratio of $X$, i.e. the dimensionality of data over the number of data points. We use $\mathrm{Tr}$ to denote the normal trace of a matrix, and $\tr$ to denote the normalized trace, i.e. $\tr[M]=\frac 1d\mathrm{Tr}[M]$ for a $d\times d$ matrix $M$.

\paragraph{Self-consistency equation for the renormalized noise scale}
The spectral properties of a matrix are determined by the Stieltjes transform.
We consider the Stieltjes transform of the kernel matrix $\frac 1n XX^{\top}$, defined as $\hat \varphi(z):=\Tr[(XX^{\top}-nzI)^{-1}]$.
In the large-matrix limit, the limiting variable $\varphi(z)$ satisfies the following self-consistent equation,
\begin{equation}
\frac{1}{\varphi(z)}+z=\gamma\int_{0}^{\infty}\frac{sd\nu(s)}{1+s\varphi(z)}
\end{equation}
where $\nu(s)$ is the limiting spectral measure of the population covariance $\Sig$.
This follows from the leave-one-out arguments in the Appendix of \citet{bach2024highDimRMT_doubleDescent}, as well as \citet{bai2010spectralbook,ledoit2011eigenvectors}.

This can be translated to the self-consistent equation of the renormalized ridge variable $\kappa(z):=\frac{1}{\varphi(-z)}$, which is used throughout the paper,
\begin{align*}
\frac{1}{\varphi(-z)}-z & =\gamma\int_{0}^{\infty}\frac{sd\nu(s)}{1+s\varphi(-z)}\\
\kappa(z)-z & =\gamma\int_{0}^{\infty}\frac{sd\nu(s)}{1+s\frac{1}{\kappa(z)}}\\
\kappa(z)-z & =\gamma\kappa(z)\int_{0}^{\infty}\frac{sd\nu(s)}{\kappa(z)+s}\\
z & =\kappa(z)\Big[1-\gamma\int_{0}^{\infty}\frac{sd\nu(s)}{\kappa(z)+s}\Big]
\end{align*}

Practically, when solving such equations, given a finite size population covariance matrix, the integral over the spectral measure can be represented as normalized trace, leading to the Silverstein equation  (Eq.~\ref{eq:silverstein_self_consistent}).
\begin{equation}
\kappa(\lambda)-\lambda=\gamma\kappa(\lambda)\tr[\Sig(\Sig+\kappa(\lambda)I)^{-1}]\tag{Silverstein}
\end{equation}
The self-consistent solution to this equation obeys $\kappa(\lambda) \geq \lambda$ for all $\lambda \geq 0$, as can be seen from the fact that the right-hand-side is non-negative.

\paragraph{Degrees-of-freedom functions}
We define the degrees-of-freedom functions with unnormalized trace, following the convention in \citet{bach2024highDimRMT_doubleDescent}, unlike \citet{Atanasov2025TwoPoint}, which used normalized trace.
\begin{align}
\mathrm{df}_{1}(\lambda) & :=\Tr[\Sig(\Sig+\lambda I)^{-1}]\\
\mathrm{df}_{2}(\lambda) & :=\Tr[\Sig^{2}(\Sig+\lambda I)^{-2}].
\end{align}
Note that
\begin{align*}
\mathrm{df}_{2}(\lambda)-\mathrm{df}_{1}(\lambda) & =\Tr[\Sig^{2}(\Sig+\lambda I)^{-2}]-\Tr[\Sig(\Sig+\lambda I)^{-1}]\\
 & =\Tr[\big(\Sig(\Sig+\lambda I)^{-1}-I\big)\Sig(\Sig+\lambda I)^{-1}]\\
 & = - \lambda\Tr[\Sig(\Sig+\lambda I)^{-2}]\\
 & \leq 0.
\end{align*}
Note that both $\mathrm{df}_{2}(\lambda),\mathrm{df}_{1}(\lambda)$ are smaller than the number of non-zero eigenvalues of $\Sig$, i.e. $\mathrm{rank}(\Sig)$. Thus, we have the chain of inequalities
\[
0 \leq \mathrm{df}_{2}(\lambda) \leq \mathrm{df}_{1}(\lambda) \leq \operatorname{rank}(\Sig) \leq d.
\]
We will also use the mixed degrees-of-freedom function
\begin{align}
    \operatorname{df}_{2}(\lambda,\lambda') := \Tr[\Sig^{2}(\Sig+\lambda I)^{-1}(\Sig+\lambda^\prime  I)^{-1}] .
\end{align}
Clearly, $\operatorname{df}_{2}(\lambda,\lambda') \geq 0$. By the Cauchy-Schwarz inequality,
\begin{align}
    \operatorname{df}_{2}(\lambda,\lambda') = \Tr[\Sig^{2}(\Sig+\lambda I)^{-1}(\Sig+\lambda^\prime  I)^{-1}] \leq \sqrt{\Tr[\Sig^{2}(\Sig+\lambda I)^{-2}] \Tr[\Sig^{2}(\Sig+\lambda^\prime  I)^{-2}]} = \sqrt{\operatorname{df}_{2}(\lambda) \operatorname{df}_{2}(\lambda') }.
\end{align}

Now, by re-writing the Silverstein equation, we have that
\begin{align}
    \operatorname{df}_{1}(\kappa) = \left(1 - \frac{\lambda}{\kappa} \right) n < n,
\end{align}
where the inequality is strict if both $\lambda$ and $\kappa$ are strictly positive. This leads to an additional upper bound
\begin{align}
    \operatorname{df}_{2}(\kappa,\kappa') < n.
\end{align}

\paragraph{Basic deterministic equivalences}
Following Proposition 1 of \citet{bach2024highDimRMT_doubleDescent}, we use the shorthand $\kappa(z):=1/\varphi(-z)$ to express the deterministic equivalences in the more convenient forms below. In what follows, $A$ and $B$ are test matrices of bounded spectral norm. For the resolvent, we have:
\begin{equation}\label{eq:DE_resolv_lin_one_z}
    \Tr\bigl[A\,(\hat{\Sig}+\lambda I)^{-1}\bigr]\asymp\frac{\kappa(\lambda )}{\lambda }\Tr\bigl[A\bigl(\Sig+\kappa(\lambda )I\bigr)^{-1}\bigr]
\end{equation}
and
\begin{align}\label{eq:DE_resolv_quad_one_z}
\begin{split}
    &\Tr\bigl[A(\hat{\Sig}+\lambda I)^{-1}B(\hat{\Sig}+\lambda I)^{-1}\bigr]
\\ & \quad\asymp\frac{\kappa(\lambda )^{2}}{\lambda ^{2}\,}\Tr\bigl[A\,\bigl(\Sig+\kappa(\lambda )I\bigr)^{-1}\,B\,\bigl(\Sig+\kappa(\lambda )I\bigr)^{-1}\bigr]\\
 & \qquad+\frac{\kappa(\lambda )^{2}}{\lambda ^{2}}\frac{1}{n-\mathrm{df}_{2}\bigl(\kappa(\lambda )\bigr)}\Tr\bigl[A\,\bigl(\Sig+\kappa(\lambda )I\bigr)^{-2}\Sig\bigr]\Tr\bigl[B\,\bigl(\Sig+\kappa(\lambda )I\bigr)^{-2}\Sig\bigr].
\end{split}
\end{align}
Equivalently,
\begin{equation}\label{eq:DE_mod_resolv_lin_one_z}
\Tr\bigl[A\hat{\Sig}(\hat{\Sig}+\lambda I)^{-1}\bigr]\asymp\Tr\bigl[A\,\Sig\,\bigl(\Sig+\kappa(\lambda )I\bigr)^{-1}\bigr]
\end{equation}
and
\begin{align}\label{eq:DE_mod_resolv_quad_one_z}
\begin{split}
    &\Tr\bigl[A\,\hat{\Sig}\,(\hat{\Sig}+\lambda I)^{-1}\,B\,\hat{\Sig}\,(\hat{\Sig}+\lambda I)^{-1}\bigr] \\&\quad \asymp\Tr\bigl[A\,\Sig\,\bigl(\Sig+\kappa(\lambda )I\bigr)^{-1}\,B\,\Sig\,\bigl(\Sig+\kappa(\lambda )I\bigr)^{-1}\bigr]\\
 & \qquad+\frac{\kappa^{2}(\lambda )}{n-\mathrm{df}_{2}\bigl(\kappa(\lambda )\bigr)}\Tr\bigl[A\,\bigl(\Sig+\kappa(\lambda )I\bigr)^{-2}\Sig\bigr]\Tr\bigl[B\,\bigl(\Sig+\kappa(\lambda )I\bigr)^{-2}\Sig\bigr]
\end{split}
\end{align}
where $\kappa(\lambda)$ can be solved from the self-consistent equation (\ref{eq:silverstein_self_consistent}) above.
Note that given the unnormalized trace, the equivalence $\asymp$ shall be understood through convergence of ratio.

\paragraph{Two-point equivalence for resolvents with different arguments}

This can be further generalized to equivalence with two "Ridge" variables $\lambda,\lambda^\prime$,
\begin{align}\label{eq:DE_mod_resolv_quad_two_z}
\Tr\Big[A\hat{\Sig}(\lambda+\hat{\Sig})^{-1}B\hat{\Sig}(\lambda^{\prime}+\hat{\Sig})^{-1}\Big]\asymp\
&\Tr\big[AT_{\Sig}BT_{\Sig}^{\prime}\big]+ \\
&\frac{\kappa\kappa'}{n-\mathrm{df}_{2}(\kappa,\kappa')}\Tr\big[AG_{\Sig}\Sig G_{\Sig}^{\prime}\big]\Tr\big[G_{\Sig}^{\prime}\Sig G_{\Sig}B\big] \notag
\end{align}
where $T_{\Sig}:=\Sig(\Sig+\kappa)^{-1},T_{\Sig}^{\prime}:=\Sig(\Sig+\kappa^{\prime})^{-1}$,
$G_{\Sig}:=(\Sig+\kappa)^{-1},G_{\Sig}^{\prime}:=(\Sig+\kappa^{\prime})^{-1}$.
and $\mathrm{df}_{2}(\kappa,\kappa'):=\Tr[\Sig^{2}G_{\Sig}G_{\Sig}^{\prime}]$.
When $\kappa=\kappa^\prime$ it recovers Eq.~\ref{eq:DE_mod_resolv_quad_one_z}.

As a brief note for derivation,
this follows from the deterministic equivalence for free product of matrices $A*B$ presented in Appendix A of \citet{Atanasov2025TwoPoint}.
Let $A=\Sig$ be population covariance, $B=\frac{1}{n}ZZ^{T}$ be whitened data covariance, then $A*B=\hat{\Sig}$. Thus,
\begin{equation}
\hat{\Sig}(\lambda+\hat{\Sig})^{-1}M\hat{\Sig}(\lambda^{\prime}+\hat{\Sig})^{-1}\asymp T_{\Sig}MT_{\Sig}^{\prime}+\kappa\kappa'G_{\Sig}\Sig G_{\Sig}^{\prime}\frac{\Tr\left[G_{\Sig}^{\prime}\Sig G_{\Sig}M\right]}{n-\mathrm{df}_{2}(\kappa,\kappa')}
\end{equation}
Note that $q$ in their notation corresponds to our $\gamma$ and that their definition of $\mathrm{df}$ used normalized trace.
This two-point equivalence was derived in \citet{Atanasov2025TwoPoint,Atanasov2024Risk} using a diagrammatic moment-method argument; it could also be derived by extending the leave-one-out arguments used by \citet{bach2024highDimRMT_doubleDescent} to prove the deterministic equivalents with a single $\lambda$ listed above.


\paragraph{Assumptions for the results}
The deterministic-equivalence calculations in the paper are stated for the standard general-covariance sample-covariance model $\x_i=\mu+\Sig^{1/2}\z_i$, with $\z_i$ i.i.d. standardized Gaussian or in a universality class for which the one- and two-point deterministic equivalents hold. 
Probe vectors and input displacements are deterministic or conditionally independent of $\hat\Sig$. 
Expectation formulas use anisotropic one-point deterministic equivalents; 
variance formulas use the factorized part of the second-order deterministic equivalent. 
When $\v$ and the input displacement $\bar\x$ have non-negligible weighted alignment, additional exchange terms may appear. Moreover, we will assume that all deterministic equivalents hold uniformly enough in the argument of the resolvent (namely, $\lambda$) so that we can use them to obtain the limiting behavior of integrals.

\clearpage
\subsection{Derivation of the denoiser expectation equivalence (Result
\ref{prop:denoiser_exp_DE})}\label{app:proof_denoiser_exp_de}
\begin{proposition}[Main result, deterministic equivalence of the expectation of score
and denoiser]
 The optimal linear score and denoiser using empirical covariance
has the following deterministic equivalence.
\begin{align*}
\mathbb{E}_{\hat{\Sig}}\Big[\vvec^{\top}\mathbf{D}_{\hat{\Sig}}^{*}(\x;\sigma)\Big] & \asymp\vvec^{\top}\x+\vvec^{\top}\kappa(\sigma^{2})(\Sig+\kappa(\sigma^{2})I)^{-1}(\mu-\x)\\
 & =\vvec^{\top}\mu+\vvec^{\top}\Sig(\Sig+\kappa(\sigma^{2})I)^{-1}(\x-\mu)\\
\mathbb{E}_{\hat{\Sig}}\Big[\vvec^{\top}\mathbf{s}_{\hat{\Sig}}^{*}(\x;\sigma)\Big] & \asymp\frac{\kappa(\sigma^{2})}{\sigma^{2}}\vvec^{\top}(\Sig+\kappa(\sigma^{2})I)^{-1}(\mu-\x)
\end{align*}
\end{proposition}
\begin{IEEEproof}
Per assumption, assume the sample mean $\hat{\mu}=\mu$, consider
only the effect of empirical covariance $\hat{\Sig}$,
\[
\mathbf{D}_{\hat{\Sig}}^{*}(\x;\sigma)=\x+\sigma^{2}(\hat{\Sig}+\sigma^{2}I)^{-1}(\mu-\x)
\]
Using the deterministic equivalence Eq. \ref{eq:DE_resolv_lin_one_z},\ref{eq:DE_mod_resolv_lin_one_z}, in the sense that the trace with any independent matrix converge in ratio at limit.
\begin{align*}
(\hat{\Sig}+\sigma^{2}I)^{-1} & \asymp\frac{\kappa(\sigma^{2})}{\sigma^{2}}(\Sig+\kappa(\sigma^{2})I)^{-1}\\
\hat{\Sig}(\hat{\Sig}+\sigma^{2}I)^{-1} & \asymp\Sig(\Sig+\kappa(\sigma^{2})I)^{-1}
\end{align*}
Then, given a fixed measurement vector $\vvec$, and a noised
input $\x$, the projection of score onto a vector can be
framed as trace. The equivalence reads,
\begin{align*}
\mathbb{E}_{\hat{\Sig}}\Big[\vvec^{\top}\mathbf{s}_{\hat{\Sig}}^{*}(\x;\sigma)\Big] & =\mathbb{E}_{\hat{\Sig}}\Big[\vvec^{\top}(\hat{\Sig}+\sigma^{2}I)^{-1}(\mu-\x)\Big]\\
 & =\mathbb{E}_{\hat{\Sig}}\mathrm{Tr}\Big[(\hat{\Sig}+\sigma^{2}I)^{-1}(\mu-\x)\vvec^{\top}\Big]\\
 & \asymp\frac{\kappa(\sigma^{2})}{\sigma^{2}}\mathrm{Tr}\Big[(\Sig+\kappa(\sigma^{2})I)^{-1}(\mu-\x)\vvec^{\top}\Big]\\
 & =\frac{\kappa(\sigma^{2})}{\sigma^{2}}\vvec^{\top}(\Sig+\kappa(\sigma^{2})I)^{-1}(\mu-\x)
\end{align*}
Similarly, use the other equivalence, the denoiser projection has
equivalence,
\begin{align*}
\mathbb{E}_{\hat{\Sig}}\Big[\vvec^{\top}\mathbf{D}_{\hat{\Sig}}^{*}(\x;\sigma)\Big] & =\vvec^{\top}\mu+\mathbb{E}_{\hat{\Sig}}\Big[\vvec^{\top}\hat{\Sig}(\hat{\Sig}+\sigma^{2}I)^{-1}(\x-\mu)\Big]\\
 & \asymp\vvec^{\top}\mu+\vvec^{\top}\Sig(\Sig+\kappa(\sigma^{2})I)^{-1}(\x-\mu)\\
 & =\vvec^{\top}\mathbf{D}_{\Sig}^{*}(\x;\kappa^{1/2})
\end{align*}
Thus, in the expectation sense, the effect of empirical data covariance
(finite data) on the denoiser, is equivalent to renormalizing and
increasing the effective noise scale $\sigma^{2}\to\kappa(\sigma^{2})$,
similar to adding an adaptive Ridge regularization.
\end{IEEEproof}

\paragraph{Interpretation}

Measuring the deviation of the empirical covariance denoiser from
the population covariance denoiser, at the same noise scale,
\begin{align*}
 & \mathbb{E}_{\hat{\Sig}}\Big[\vvec^{\top}\big(\mathbf{D}_{\hat{\Sig}}^{*}(\x;\sigma)-\mathbf{D}_{\Sig}^{*}(\x;\sigma)\Big)\Big]\\
\asymp & \vvec^{\top}\Big[\kappa(\sigma^{2}) (\Sig+\kappa(\sigma^{2})I)^{-1}-\sigma^{2}(\Sig+\sigma^{2}I)^{-1}\Big](\mu-\x)
\end{align*}
Using push through identity $A^{-1}-B^{-1}=A^{-1}(B-A)B^{-1}$,
\begin{align*}
 & \kappa(\Sig+\kappa I)^{-1}-\sigma^{2}(\Sig+\sigma^{2}I)^{-1}\\
= & \kappa\sigma^{2}(\Sig+\kappa I)^{-1}(\Sig+\sigma^{2}I)^{-1}\Big(\frac{1}{\sigma^{2}}(\Sig+\sigma^{2}I)-\frac{1}{\kappa}(\Sig+\kappa I)\Big)\\
= & (\kappa(\sigma^{2})-\sigma^{2})\Sig(\Sig+\kappa I)^{-1}(\Sig+\sigma^{2}I)^{-1}
\end{align*}
We can represent the deviation as a resolvent product. This makes it
clear that the deviation is proportional to the effect of renormalization
$(\kappa(\sigma^{2})-\sigma^{2})$.
\begin{align*}
 & \mathbb{E}_{\hat{\Sig}}\Big[\vvec^{\top}\big(\mathbf{D}_{\hat{\Sig}}^{*}(\x;\sigma)-\mathbf{D}_{\Sig}^{*}(\x;\sigma)\Big)\Big]\\
\asymp & (\kappa(\sigma^{2})-\sigma^{2})\vvec^{\top}\Sig(\Sig+\kappa I)^{-1}(\Sig+\sigma^{2}I)^{-1}(\mu-\x)
\end{align*}
Setting the measurement vector along population eigenvector $\uvec_k$,
with eigenvalue $\lambda_{k}$, then the deviation reads
\[
\mathbb{E}_{\hat{\Sig}}\Big[\uvec_k^{\top}\big(\mathbf{D}_{\hat{\Sig}}^{*}(\x;\sigma)-\mathbf{D}_{\Sig}^{*}(\x;\sigma)\Big)\Big] \asymp \frac{\lambda_{k}(\kappa-\sigma^{2})}{(\lambda_{k}+\sigma^{2})(\lambda_{k}+\kappa)}\uvec_k^{\top}(\mu-\x)
\]
Viewed as a function of $\lambda_k$, the coefficient
\[
f(\lambda_k)=\frac{\lambda_k(\kappa-\sigma^2)}{(\lambda_k+\sigma^2)(\lambda_k+\kappa)}
\]
is unimodal, peaking at the geometric mean of the two scales, $\lambda_k^\star=\sqrt{\sigma^2\kappa}$, where it attains
\[
f(\lambda_k^\star)=\frac{\kappa-\sigma^2}{\big(\sigma+\sqrt{\kappa}\big)^2}=\frac{\sqrt{\kappa}-\sigma}{\sqrt{\kappa}+\sigma}.
\]
Since $\kappa(\sigma^2)>\sigma^2$ (renormalization inflates the noise scale), this peak sits somewhat above $\sigma^2$ but still in the low-to-moderate part of the spectrum, and the weight decays to zero on both sides:
\begin{itemize}
    \item For top eigenmodes ($\lambda_k\gg\kappa$), $f(\lambda_k)\approx(\kappa-\sigma^2)/\lambda_k\to 0$: large-variance directions are essentially unaffected, because the resolvents $(\Sig+\kappa I)^{-1}$ and $(\Sig+\sigma^2 I)^{-1}$ both act like $\lambda_k^{-1}$ and the additive shifts $\kappa,\sigma^2$ are negligible relative to $\lambda_k$.
    \item For very small eigenmodes ($\lambda_k\to 0$), $f(\lambda_k)\to 0$ as well, suppressed by the leading $\lambda_k$ factor.
    \item In between, modes with $\lambda_k$ comparable to $\sqrt{\sigma^2\kappa}$---the lower, signal-to-noise-marginal part of the spectrum---receive the largest correction, of magnitude $(\sqrt{\kappa}-\sigma)/(\sqrt{\kappa}+\sigma)$.
\end{itemize}
Thus the deviation is the most salient when $\kappa$ deviates from $\sigma^2$; and along the spectrum, it peaked at the eigenmodes with variance around $\sigma^2$ and $\kappa$. 

\subsection{Derivation of the denoiser fluctuation equivalence (Result
\ref{prop:denoiser_var_DE})}\label{app:proof_denoiser_var_de}

\begin{proposition}[Main result, deterministic equivalence of the denoiser variance]
Assuming $\hat\mu=\mu$, across dataset realizations of size $n$, the variance of the optimal empirical linear denoiser at point $\x$ in direction $\vvec$, given by $\vvec^\top \mathcal{S}_{D}(\x) \vvec$, admits the following deterministic equivalence. 
\begin{align}\label{eq:denoiser_var_deter_equiv_full}
    \vvec^{\top}\mathcal{S}_{D}(\x)\vvec	&=\Var_{\hat{\Sig}}[\vvec^{\top}\mathbf{D}_{\hat{\Sig}}^{*}(\x;\sigma)] \\
	&\asymp {\frac{\kappa(\sigma^{2})^{2}}{n-\mathrm{df}_{2}\bigl(\kappa(\sigma^{2})\bigr)}}
    \underbrace{\Big(\vvec^{\top}\bigl(\Sig+\kappa(\sigma^{2})I\bigr)^{-2}\Sig\vvec\Big)}_{\text{anisotropy: }\Diamond(\vvec,\kappa,\Sig)}\
    \underbrace{\Big((\x-\mu)^{\top}\bigl(\Sig+\kappa(\sigma^{2})I\bigr)^{-2}\Sig(\x-\mu)\Big)}_{\text{inhomogeneity: }\Diamond(\x-\mu,\kappa,\Sig)} \notag
\end{align}
\end{proposition}

\begin{IEEEproof}
Next, we examine the covariance of denoiser due to dataset realization,
the score variance reads,
\begin{align*}
\mathcal{S}_{s}:=Cov_{\hat{\Sig}}[\mathbf{s}_{\hat{\Sig}}^{*}(\x;\sigma)] & =\mathbb{E}_{\hat{\Sig}}\mathbf{s}_{\hat{\Sig}}^{*}(\x;\sigma)\mathbf{s}_{\hat{\Sig}}^{*}(\x;\sigma)^{\top}-\Big(\mathbb{E}_{\hat{\Sig}}\mathbf{s}_{\hat{\Sig}}^{*}(\x;\sigma)\Big)\Big(\mathbb{E}_{\hat{\Sig}}\mathbf{s}_{\hat{\Sig}}^{*}(\x;\sigma)\Big)^{\top}\\
 & =\mathbb{E}_{\hat{\Sig}}\Big[(\hat{\Sig}+\sigma^{2}I)^{-1}(\mu-\x)(\mu-\x)^{\top}(\hat{\Sig}+\sigma^{2}I)^{-1}\Big]-\\
 &\qquad\mathbb{E}_{\hat{\Sig}}\Big[(\hat{\Sig}+\sigma^{2}I)^{-1}(\mu-\x)\Big]\mathbb{E}_{\hat{\Sig}}\Big[(\mu-\x)^{\top}(\hat{\Sig}+\sigma^{2}I)^{-1}\Big]\\
 & =\mathbb{E}_{\hat{\Sig}}\Big[(\hat{\Sig}+\sigma^{2}I)^{-1}(\mu-\x)(\mu-\x)^{\top}(\hat{\Sig}+\sigma^{2}I)^{-1}\Big]-\\
 &\qquad \mathbb{E}_{\hat{\Sig}}\Big[(\hat{\Sig}+\sigma^{2}I)^{-1}\Big](\mu-\x)(\mu-\x)^{\top}\mathbb{E}_{\hat{\Sig}}\Big[(\hat{\Sig}+\sigma^{2}I)^{-1}\Big]
\end{align*}
Note that the variance of denoiser and that of score has the simple
scaling relationship, so we just need to study the score.
\[
\mathcal{S}_{\mathbf{D}}=\sigma^{4}\mathcal{S}_{s}
\]

We are interested in the variance of score vector along a fixed probe
vector $\vvec$,
\begin{align*}
\vvec^{\top}\mathcal{S}_{s}\vvec & =Var_{\hat{\Sig}}[\vvec^{\top}\mathbf{s}_{\hat{\Sig}}^{*}(\x;\sigma)]\\
 & =\mathbb{E}_{\hat{\Sig}}\Big[\vvec^{\top}(\hat{\Sig}+\sigma^{2}I)^{-1}(\mu-\x)(\mu-\x)^{\top}(\hat{\Sig}+\sigma^{2}I)^{-1}\vvec\Big]-\Big(\vvec^{\top}\mathbb{E}_{\hat{\Sig}}\Big[(\hat{\Sig}+\sigma^{2}I)^{-1}\Big](\mu-\x)\Big)^{2}\\
 & =\underbrace{\mathbb{E}_{\hat{\Sig}}\Tr\Big[\vvec\vvec^{\top}(\hat{\Sig}+\sigma^{2}I)^{-1}(\mu-\x)(\mu-\x)^{\top}(\hat{\Sig}+\sigma^{2}I)^{-1}\Big]}_{\text{2nd moment}}-\Big(\underbrace{\mathbb{E}_{\hat{\Sig}}\Tr\Big[(\hat{\Sig}+\sigma^{2}I)^{-1}(\mu-\x)\vvec^{\top}\Big]}_{\text{1st moment}}\Big)^{2}
\end{align*}

The two terms can be tackled by one-point and two-point equivalence
Eq. \ref{eq:DE_resolv_quad_one_z},\ref{eq:DE_resolv_lin_one_z}.
Abbreviating $A:=\vvec\vvec^{\top}$,
$B:=(\mu-\x)(\mu-\x)^{\top}$, $z:=\sigma^{2}$.
\[
\Tr\bigl[A\,(\hat{\Sig}+zI)^{-1}\bigr]\sim\frac{\kappa(z)}{z}\Tr\bigl[A\bigl(\Sig+\kappa(z)I\bigr)^{-1}\bigr]
\]
\begin{align*}
\Tr\bigl[A(\hat{\Sig}+zI)^{-1}B(\hat{\Sig}+zI)^{-1}\bigr] & \sim\frac{\kappa(z)^{2}}{z^{2}\,}\Tr\bigl[A\,\bigl(\Sig+\kappa(z)I\bigr)^{-1}\,B\,\bigl(\Sig+\kappa(z)I\bigr)^{-1}\bigr]\\
 & +\frac{\kappa(z)^{2}}{z^{2}}\Tr\bigl[A\,\bigl(\Sig+\kappa(z)I\bigr)^{-2}\Sig\bigr]\Tr\bigl[B\,\bigl(\Sig+\kappa(z)I\bigr)^{-2}\Sig\bigr]\frac{1}{n-\mathrm{df}_{2}\bigl(\kappa(z)\bigr)}
\end{align*}
The 2nd moment term is equivalent to,
\begin{align*}
&\Tr\bigl[A(\hat{\Sig}+zI)^{-1}B(\hat{\Sig}+zI)^{-1}\bigr] \\ &\nonumber \sim\frac{\kappa(z)^{2}}{z^{2}\,}\Tr\bigl[\vvec\vvec^{\top}\bigl(\Sig+\kappa(z)I\bigr)^{-1}(\mu-\x)(\mu-\x)^{\top}\bigl(\Sig+\kappa(z)I\bigr)^{-1}\bigr]\\
 & \quad +\frac{\kappa(z)^{2}}{z^{2}}\frac{1}{n-\mathrm{df}_{2}\bigl(\kappa(z)\bigr)}\Tr\bigl[\vvec\vvec^{\top}\,\bigl(\Sig+\kappa(z)I\bigr)^{-2}\Sig\bigr]\Tr\bigl[(\mu-\x)(\mu-\x)^{\top}\,\bigl(\Sig+\kappa(z)I\bigr)^{-2}\Sig\bigr]\\
 & =\frac{\kappa(z)^{2}}{z^{2}\,}\Big(\vvec^{\top}\bigl(\Sig+\kappa(z)I\bigr)^{-1}(\mu-\x)\Big)^{2}\\
 &\quad +\frac{\kappa(z)^{2}}{z^{2}}\frac{1}{n-\mathrm{df}_{2}\bigl(\kappa(z)\bigr)}\Big(\vvec^{\top}\bigl(\Sig+\kappa(z)I\bigr)^{-2}\Sig\vvec\Big)\ \Big((\mu-\x)^{\top}\bigl(\Sig+\kappa(z)I\bigr)^{-2}\Sig(\mu-\x)\Big)
\end{align*}
The first moment term is equivalent to,
\begin{align*}
\Tr\Big[(\hat{\Sig}+zI)^{-1}(\mu-\x)\vvec^{\top}\Big] & \sim\frac{\kappa(z)}{z}\Tr\bigl[\bigl(\Sig+\kappa(z)I\bigr)^{-1}(\mu-\x)\vvec^{\top}\bigr]\\
 & =\frac{\kappa(z)}{z}\vvec^{\top}\bigl(\Sig+\kappa(z)I\bigr)^{-1}(\mu-\x)
\end{align*}
Thus, combining the two terms, we obtain the variance of score at
noised datapoint $\x$, along direction $\vvec$,
\begin{align*}
\vvec^{\top}\mathcal{S}_{s}(\x)\vvec & =Var_{\hat{\Sig}}[\vvec^{\top}\mathbf{s}_{\hat{\Sig}}^{*}(\x;\sigma)]\\
 & =\mathbb{E}_{\hat{\Sig}}\Tr\Big[\vvec\vvec^{\top}(\hat{\Sig}+\sigma^{2}I)^{-1}(\mu-\x)(\mu-\x)^{\top}(\hat{\Sig}+\sigma^{2}I)^{-1}\Big] \nonumber\\&\quad -\Big(\mathbb{E}_{\hat{\Sig}}\Tr\Big[(\hat{\Sig}+\sigma^{2}I)^{-1}(\mu-\x)\vvec^{\top}\Big]\Big)^{2}\\
 & \sim\frac{\kappa(z)^{2}}{z^{2}\,}\Big(\vvec^{\top}\bigl(\Sig+\kappa(z)I\bigr)^{-1}(\mu-\x)\Big)^{2}\\
 & +\frac{\kappa(z)^{2}}{z^{2}}\Big(\vvec^{\top}\bigl(\Sig+\kappa(z)I\bigr)^{-2}\Sig\vvec\Big)\ \Big((\mu-\x)^{\top}\bigl(\Sig+\kappa(z)I\bigr)^{-2}\Sig(\mu-\x)\Big)\frac{1}{n-\mathrm{df}_{2}\bigl(\kappa(z)\bigr)}\\
 & -\Big(\frac{\kappa(z)}{z}\vvec^{\top}\bigl(\Sig+\kappa(z)I\bigr)^{-1}(\mu-\x)\Big)^{2}\\
 & =\frac{1}{n-\mathrm{df}_{2}\bigl(\kappa(z)\bigr)}\frac{\kappa(z)^{2}}{z^{2}}\Big(\vvec^{\top}\bigl(\Sig+\kappa(z)I\bigr)^{-2}\Sig\vvec\Big)\ \Big((\mu-\x)^{\top}\bigl(\Sig+\kappa(z)I\bigr)^{-2}\Sig(\mu-\x)\Big)\\
(z\mapsto\sigma^{2}) & =\frac{1}{n-\mathrm{df}_{2}\bigl(\kappa(\sigma^{2})\bigr)}\frac{\kappa(\sigma^{2})^{2}}{\sigma^{4}}\Big(\vvec^{\top}\bigl(\Sig+\kappa(\sigma^{2})I\bigr)^{-2}\Sig\vvec\Big)\ \Big((\mu-\x)^{\top}\bigl(\Sig+\kappa(\sigma^{2})I\bigr)^{-2}\Sig(\mu-\x)\Big)
\end{align*}
Per simple scaling, the variance of denoisers reads,
\begin{align*}
\vvec^{\top}\mathcal{S}_{D}(\x)\vvec & =\sigma^{4}\vvec^{\top}\mathcal{S}_{s}(\x)\vvec\\
 & \sim\frac{\kappa(\sigma^{2})^{2}}{n-\mathrm{df}_{2}\bigl(\kappa(\sigma^{2})\bigr)}\ \underbrace{\Big(\vvec^{\top}\bigl(\Sig+\kappa(\sigma^{2})I\bigr)^{-2}\Sig\vvec\Big)}_{\Diamond(\vvec,\kappa,\Sig)}\!\underbrace{\Big((\mu-\x)^{\top}\bigl(\Sig+\kappa(\sigma^{2})I\bigr)^{-2}\Sig(\mu-\x)\Big)}_{\Diamond(\mu-\x,\kappa,\Sig)}
\end{align*}
\end{IEEEproof}

\begin{remark}\label{rmk:generic_vector}
The equivalence above keeps the factorized, generic contribution
\((\v^\top \mathbf{K} \v)(\y^\top \mathbf{K} \y)\) to the variance, where \(\mathbf{K}=\Sig(\Sig+\kappa I)^{-2}\) and \(\y=\x-\mu\). However, a fully anisotropic real-Wishart second-order formula can also contain an exchange/alignment contribution proportional to
\((\v^\top \mathbf{K} \y)^2\). 
This term is negligible for independent isotropic or high-effective-rank generic inputs, and for the eigenband averaged or fully marginalized settings used in the figures, but can be comparable for deliberately aligned probes \cite{bach2024highDimRMT_doubleDescent,Atanasov2024Risk,Atanasov2025TwoPoint}. For \(\y\sim N(0,\tau^2 I)\) independent of \(v\), the ratio of the expected exchange contribution to the factorized contribution is
\begin{align}
    \frac{\mathbb{E}_{\y} (\v^\top \mathbf{K}\y)^2}{\mathbb{E}_{\y}[(\v^\top \mathbf{K} \v)(\y^{\top}\mathbf{K}\y)]}
    &= \frac{\v^\top \mathbf{K}^2 \v}{(\v^\top \mathbf{K} \v)\Tr(\mathbf{K})}
    \le
    \frac{\|\mathbf{K}\|_{\rm op}}{\operatorname{Tr}\mathbf{K}}, 
\end{align} 
so the exchange term is suppressed when the effective rank of
\(\mathbf{K}\) is large.
\end{remark}

\subsubsection{Interpretation and derivations}

\paragraph{Dependence on the probe direction $\vvec$}

This dependency on $\vvec$ describes the \emph{anisotropy
of uncertainty}, or the variance of the score/denoiser prediction along
different directions.
\begin{align*}
\Diamond(\vvec,\kappa,\Sig) & :=\vvec^{\top}\bigl(\Sig+\kappa(\sigma^{2})I\bigr)^{-2}\Sig\vvec\\
 & =\vvec^{\top}U\frac{\Lambda}{(\Lambda+\kappa(\sigma^{2}))^{2}}U^{\top}\vvec
\end{align*}
By assumption, the probe vector $\vvec$ is a unit vector. Then
this dependency is determined by the diagonal matrix $\frac{\Lambda}{(\Lambda+\kappa(\sigma^{2}))^{2}}=\text{diag}\left(\frac{\lambda_{k}}{(\lambda_{k}+\kappa(\sigma^{2}))^{2}}\right)$ and the alignment between $\vvec$      and the eigenbasis $U$.

Consider when the probing vector is aligned exactly with the $k$-th eigenvector $\uvec_k$, this term reads
\begin{align*}
\Diamond(\uvec_k,\kappa,\Sig) & =\uvec_k^{\top}\bigl(\Sig+\kappa(\sigma^{2})I\bigr)^{-2}\Sig\uvec_k\\
 & =\frac{\lambda_{k}}{(\lambda_{k}+\kappa(\sigma^{2}))^{2}}\\
 & =:\chi(\lambda_{k},\kappa(\sigma^{2}))
\end{align*}

We can discuss the different regimes of $\chi(\lambda_{k},\kappa)$
depending on $\lambda_{k}$ and $\kappa(\sigma^{2})$
    \begin{itemize}
    \item High noise regime $\lambda_{k}\ll\kappa$: $\chi(\lambda_{k},\kappa)\approx\frac{\lambda_{k}}{\kappa^{2}}$,
    so $\chi(\lambda_{k},\kappa)$ increases with $\lambda_{k}$.
    Higher variance directions have larger uncertainties.
    \item Low noise regime $\lambda_{k}\gg\kappa$: $\chi(\lambda_{k},\kappa)\approx\frac{1}{\lambda_{k}}$,
    so $\chi(\lambda_{k},\kappa)$ will decrease with $\lambda_{k}$.
    Lower variance directions have larger uncertainties.
    \item Regarding $\kappa$, $\chi(\lambda_{k},\kappa)$ decreases monotonically
    with $\kappa$, i.e. the higher the noise scale, the smaller the variance.
    \item Regarding $\lambda_{k}$, $\chi(\lambda_{k},\kappa)$ has one unique
    maximum, where $\arg\max_{\lambda}\chi(\lambda,\kappa)=\kappa$, and
    $\max_{\lambda}\chi(\lambda,\kappa)=\frac{1}{4\kappa}$. Thus, it is a
    bell shaped function of $\lambda_{k}$. (Proof below.)
    \begin{itemize}
    \item This shows that at different noise level or $\kappa(\sigma^{2})$,
    there is always some direction with variance comparable to $\kappa(\sigma^{2})$
    which will have the largest variance!
    \item Further, the largest variance will be inversely proportional to $\kappa(\sigma^{2})$,
    i.e. generally larger at lower noise.
    \end{itemize}
    \end{itemize}
This result shows that the anisotropy 
of the uncertainty depends on the renormalized noise scale $\kappa(\sigma^{2})$,
and the maximal uncertainty is focused around the PC dimensions with
variance similar to $\kappa(\sigma^{2})$.

\noindent\fbox{
\begin{minipage}[t]{\columnwidth-2\fboxsep-2\fboxrule}
\textbf{Proof of unique maximum of $\chi(\lambda,\kappa)$}
\begin{proof}
Given
\[
\chi(\lambda,\kappa)=\frac{\lambda}{(\lambda+\kappa)^{2}}
\]
Then
\begin{align*}
\frac{d\chi(\lambda,\kappa)}{d\lambda} & =\frac{(\lambda+\kappa)^{2}-2(\lambda+\kappa)\lambda}{(\lambda+\kappa)^{4}}\\
 & =\frac{\kappa-\lambda}{(\lambda+\kappa)^{3}}
\end{align*}
Setting the gradient to zero yields the unique stationary point, $\kappa=\lambda$.

Given $\kappa,\lambda>0$, we have the unique maximum w.r.t. $\lambda$.
\begin{align*}
\arg\max_{\lambda}\chi(\lambda,\kappa) & =\kappa\\
\max_{\lambda}\chi(\lambda,\kappa) & =\frac{1}{4\kappa}
\end{align*}

\end{proof}
\end{minipage}}

\paragraph{Dependence on the probe point $\x$.}

The dependency on probe point $\x$ tells us about the spatial
inhomogeneity of the uncertainty.
\begin{align*}
\Diamond(\x-\mu,\kappa,\Sig) & =(\x-\mu)^{\top}\bigl(\Sig+\kappa(\sigma^{2})I\bigr)^{-2}\Sig(\x-\mu)\\
 & =(\x-\mu)^{\top}U\frac{\Lambda}{(\Lambda+\kappa(\sigma^{2}))^{2}}U^{\top}(\x-\mu)\\
 & =\sum_{k}\frac{\lambda_{k}}{(\lambda_{k}+\kappa(\sigma^{2}))^{2}}\Big(\uvec_k^{\top}(\x-\mu)\Big)^{2}\\
 & =\sum_{k}\chi(\lambda_{k},\kappa(\sigma^{2}))\ \Big(\uvec_k^{\top}(\x-\mu)\Big)^{2}
\end{align*}

This is similar to the dependency above, except that now the argument
$\x-\mu$ is no longer unit normed, but any probing direction
in the sample space.

Note, generally the noised sample $\x$ from a certain realization
of dataset is distributed like $\mathcal{N}(\mu,\hat{\Sig}+\sigma^{2}I)$
(under Gaussian data assumption), so
\[
\vvec^{\top}(\x-\mu)\sim\mathcal{N}(0,\vvec^{\top}(\hat{\Sig}+\sigma^{2}I)\vvec)
\]

Consider a probe point on the hyperelliptical shell defined by $\mathcal{N}(\mu,\Sig+\sigma^{2}I)$,
then if the point falls on the line $\x=\mu+c\uvec_k$.
$\|\x-\mu\|^{2}=c^{2}\approx\sigma^{2}+\lambda_{k}$

Then
\begin{align*}
\Diamond(\x-\mu,\kappa,\Sig) & =\Diamond(c\uvec_k,\kappa,\Sig)\\
 & =c^{2}\frac{\lambda_{k}}{(\lambda_{k}+\kappa(\sigma^{2}))^{2}}\\
 & \approx\frac{(\lambda_{k}+\sigma^{2})\lambda_{k}}{(\lambda_{k}+\kappa(\sigma^{2}))^{2}}\\
 & =(\sigma^{2}+\lambda_{k})\ \chi(\lambda_{k},\kappa(\sigma^{2}))\\
 & =\xi(\lambda_{k},\sigma^{2})
\end{align*}

\begin{itemize}
\item \textbf{High noise regime}, $\kappa>\sigma^{2}\gg\lambda$, then $\xi(\lambda,\sigma^{2})\approx\frac{\sigma^{2}\lambda}{\kappa^{2}(\sigma^{2})}<\frac{\lambda}{\kappa(\sigma^{2})}\ll1$,
which scale linearly with PC variance $\lambda$, higher the PC, larger
the variance.
\item \textbf{Low noise regime}, $\lambda\gg\kappa>\sigma^{2}$, then $\xi(\lambda,\sigma^{2})\approx1$.
Then all points on the ellipsoid have large variance.
\item At any fixed $\sigma^{2}$, this function increases monotonically with
$\lambda_{k}$.
\begin{itemize}
\item The score or denoiser variance is larger when the probing point $\mu+c\uvec_k$
is deviating along those higher variance directions $\uvec_k$.
\item When the probing point is deviating along low variance directions,
the variance is lower.
\end{itemize}
\end{itemize}

\noindent\fbox{\begin{minipage}[t]{\columnwidth-2\fboxsep-2\fboxrule}

\paragraph{Derivation of properties of $\xi(\lambda,\sigma^{2})$
\[
\xi(\lambda,\sigma^{2})=\frac{(\sigma^{2}+\lambda)\lambda}{(\lambda+\kappa(\sigma^{2}))^{2}}
\]
}

Derivative

\begin{align*}
\frac{d\xi(\lambda,\sigma^{2})}{d\lambda} & =\frac{(\sigma^{2}+2\lambda)(\lambda+\kappa(\sigma^{2}))^{2}-2(\lambda+\kappa(\sigma^{2}))(\sigma^{2}+\lambda)\lambda}{(\lambda+\kappa(\sigma^{2}))^{4}}\\
 & =\frac{(\sigma^{2}+2\lambda)(\lambda+\kappa(\sigma^{2}))-2(\sigma^{2}+\lambda)\lambda}{(\lambda+\kappa(\sigma^{2}))^{3}}\\
 & =\frac{(\sigma^{2}+2\lambda)\kappa(\sigma^{2})-\lambda\sigma^{2}}{(\lambda+\kappa(\sigma^{2}))^{3}}\\
 & =\frac{\sigma^{2}\kappa(\sigma^{2})+(2\kappa(\sigma^{2})-\sigma^{2})\lambda}{(\lambda+\kappa(\sigma^{2}))^{3}}
\end{align*}

Note that through the self-consistent equation $\kappa(\sigma^{2})-\sigma^{2}>0$,
thus $\frac{d\xi(\lambda,\sigma^{2})}{d\lambda}>0,\forall\lambda$.
The function is monotonically increasing for $\lambda$.

Given that $\kappa(\sigma^{2})>\sigma^{2}>0$, we have bounds
\[
\xi(\lambda,\sigma^{2})=\frac{(\sigma^{2}+\lambda)\lambda}{(\lambda+\kappa(\sigma^{2}))^{2}}<\frac{\lambda}{\lambda+\kappa(\sigma^{2})}<1
\]
\end{minipage}}

\paragraph{Overall scaling with sample size}

Finally, we marginalize over space and direction, obtaining an overall
quantification of consistency of denoiser, and study its scaling property.

First, \emph{marginalizing (summing) all directions}, we have
\begin{align*}
\sum_{k}\Diamond(\uvec_k,\kappa,\Sig) & =\sum_{k}\uvec_k^{\top}\bigl(\Sig+\kappa(\sigma^{2})I\bigr)^{-2}\Sig\uvec_k\\
 & =\mathrm{Tr}\Big[\bigl(\Sig+\kappa(\sigma^{2})I\bigr)^{-2}\Sig\Big]
\end{align*}

This can be further abbreviated as following,
\begin{align*}
\sum_{k}\Diamond(\uvec_k,\kappa,\Sig) & =\mathrm{Tr}\Big[\bigl(\Sig+\kappa(\sigma^{2})I\bigr)^{-2}\Sig\!I\Big]\\
 & =\mathrm{Tr}\Big[\bigl(\Sig+\kappa(\sigma^{2})I\bigr)^{-2}\Sig\!\frac{1}{\kappa(\sigma^{2})}(\Sig+\kappa(\sigma^{2})I-\Sig)\Big]\\
 & =\frac{1}{\kappa(\sigma^{2})}\Big(\mathrm{Tr}\Big[\bigl(\Sig+\kappa(\sigma^{2})I\bigr)^{-1}\Sig\Big]-\mathrm{Tr}\Big[\bigl(\Sig+\kappa(\sigma^{2})I\bigr)^{-2}\Sig^{2}\Big]\Big)\\
 & =\frac{\mathrm{df}_{1}(\kappa)-\mathrm{df}_{2}(\kappa)}{\kappa}
\end{align*}

Next, \emph{marginalize (averaging) over space}. Here we consider
the noised distribution starting from the true target distribution
$\x\sim p(\x;\sigma)=p_{0}(\x)*\mathcal{N}(0,\sigma^{2}I)$.
For us, the only thing that matters is the second moment, so for an arbitrary
distribution we have,
\[
\mathbb{E}_{\x}[(\x-\mu)(\x-\mu)^{\top}]=\Sig+\sigma^{2}I
\]

Thus,
\begin{align*}
\mathbb{E}_{\x}\Diamond(\x-\mu,\kappa,\Sig) & =\mathbb{E}_{\x}\Big[(\x-\mu)^{\top}\bigl(\Sig+\kappa(\sigma^{2})I\bigr)^{-2}\Sig(\x-\mu)\Big]\\
 & =\mathrm{Tr}\Big[(\Sig+\sigma^{2}I)\bigl(\Sig+\kappa(\sigma^{2})I\bigr)^{-2}\Sig\Big]
\end{align*}

This can also be abbreviated using degree of freedom,
\begin{align*}
\mathbb{E}_{\x}\Diamond(\x-\mu,\kappa,\Sig) & =\mathrm{Tr}\Big[(\Sig+\sigma^{2}I)\bigl(\Sig+\kappa(\sigma^{2})I\bigr)^{-2}\Sig\Big]\\
 & =\mathrm{Tr}\Big[\bigl(\Sig+\kappa(\sigma^{2})I\bigr)^{-2}\Sig^{2}\Big]+\sigma^{2}\mathrm{Tr}\Big[\bigl(\Sig+\kappa(\sigma^{2})I\bigr)^{-2}\Sig\Big]\\
 & =\mathrm{df}_{2}(\kappa)+\frac{\sigma^{2}}{\kappa}(\mathrm{df}_{1}(\kappa)-\mathrm{df}_{2}(\kappa))\\
 & =\frac{\sigma^{2}}{\kappa}\mathrm{df}_{1}(\kappa)+(1-\frac{\sigma^{2}}{\kappa})\mathrm{df}_{2}(\kappa)
\end{align*}

Thus, we have
\begin{align*}
\mathbb{E}_{\x}\sum_{k}\uvec_k^{\top}\mathcal{S}_{D}(\x)\uvec_k & \asymp\frac{\kappa(\sigma^{2})^{2}}{n-\mathrm{df}_{2}\bigl(\kappa(\sigma^{2})\bigr)}\ \sum_{k}\underbrace{\Big(\uvec_k^{\top}\bigl(\Sig+\kappa(\sigma^{2})I\bigr)^{-2}\Sig\uvec_k\Big)}_{\Diamond(\vvec,\kappa,\Sig)}\!\mathbb{E}_{\x}\underbrace{\Big((\mu-\x)^{\top}\bigl(\Sig+\kappa(\sigma^{2})I\bigr)^{-2}\Sig(\mu-\x)\Big)}_{\Diamond(\mu-\x,\kappa,\Sig)}\\
 & =\frac{\kappa(\sigma^{2})^{2}}{n-\mathrm{df}_{2}\bigl(\kappa(\sigma^{2})\bigr)}\mathrm{Tr}\Big[\bigl(\Sig+\kappa(\sigma^{2})I\bigr)^{-2}\Sig\Big]\mathrm{Tr}\Big[(\Sig+\sigma^{2}I)\bigl(\Sig+\kappa(\sigma^{2})I\bigr)^{-2}\Sig\Big]\\
 & =\frac{\kappa(\sigma^{2})^{2}}{n-\mathrm{df}_{2}\bigl(\kappa(\sigma^{2})\bigr)}\times\frac{\mathrm{df}_{1}(\kappa)-\mathrm{df}_{2}(\kappa)}{\kappa}\times\Big(\frac{\sigma^{2}}{\kappa}\mathrm{df}_{1}(\kappa)+(1-\frac{\sigma^{2}}{\kappa})\mathrm{df}_{2}(\kappa)\Big)\\
 & =\frac{\big(\mathrm{df}_{1}(\kappa)-\mathrm{df}_{2}(\kappa)\big)\times\big(\sigma^{2}\mathrm{df}_{1}(\kappa)+(\kappa-\sigma^{2})\mathrm{df}_{2}(\kappa)\big)}{n-\mathrm{df}_{2}\bigl(\kappa(\sigma^{2})\bigr)}\\
 & =:\Delta(n,\sigma^{2},\Lambda)
\end{align*}

\begin{equation}\label{eq:denoiser_var_overall_scaling}
\Delta(n,\sigma^2,\Lambda)
\;=\;
\frac{(\mathrm{df}_1(\kappa)-\mathrm{df}_2(\kappa))\;\big(\sigma^2\mathrm{df}_1(\kappa)+(\kappa-\sigma^2)\mathrm{df}_2(\kappa)\big)}{n-\mathrm{df}_2(\kappa)}
\end{equation}

Now, marginalized over space and direction, this is only a function
of the population spectrum, sample number and noise scale. Note $n$
is the sample number, so it makes sense when $n$ goes to infinity,
then $\hat{\Sig}\to\Sig$ and $\kappa\to\sigma^{2}$, the variance
reduces to zero. 

Basically the higher the $\kappa$, the smaller the $\mathrm{df}_{2}(\kappa)$,
so $n-\mathrm{df}_{2}\bigl(\kappa(\sigma^{2})\bigr)$ will be larger,
which scales down $\frac{1}{n-\mathrm{df}_{2}\bigl(\kappa(\sigma^{2})\bigr)}\frac{\kappa(\sigma^{2})^{2}}{\sigma^{4}}$.

When we compare our theory with empirical measurements of denoiser or sample deviations between two non-overlapping splits, we use the following lemma to predict the expected MSE deviation from the variance.

\begin{lemma}[Expected MSE between two i.i.d.\ samples doubles the variance]\label{lemma:dev_double_var}
Let $X,Y$ be i.i.d.\ random variables with variance $S=\Var(X)$. Then their mean squared error (MSE) is double the variance.
\[
\mathbb{E}\!\left[(X-Y)^2\right] \;=\; 2S.
\]
\end{lemma}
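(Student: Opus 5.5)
The argument expands the square and uses linearity of expectation together with independence of $X$ and $Y$. It requires $X$ to have a finite second moment, which is implicit in the hypothesis that $S=\Var(X)$ exists. We write $m:=\mathbb{E}[X]=\mathbb{E}[Y]$, where the equality holds because $X$ and $Y$ are identically distributed.

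\textbf{Step 1: center both variables.} Since $X-Y=(X-m)-(Y-m)$, we have
\begin{align*}
\mathbb{E}\!\left[(X-Y)^2\right]=\mathbb{E}\!\left[(X-m)^2\right]+\mathbb{E}\!\left[(Y-m)^2\right]-2\,\mathbb{E}\!\left[(X-m)(Y-m)\right].
\end{align*}

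\textbf{Step 2: evaluate each term.} The first two terms equal $\Var(X)=S$ and $\Var(Y)=S$, by identical distribution. For the cross term, independence lets the expectation factor:
\begin{align*}
\mathbb{E}\!\left[(X-m)(Y-m)\right]=\mathbb{E}[X-m]\,\mathbb{E}[Y-m]=0.
\end{align*}
Summing gives $\mathbb{E}[(X-Y)^2]=S+S-0=2S$.

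\textbf{Step 3: apply to the paper's setting.} Take $X=\vvec^{\top}\mathbf{D}^{*}_{\hat\Sig_1}(\x;\sigma)$ and $Y=\vvec^{\top}\mathbf{D}^{*}_{\hat\Sig_2}(\x;\sigma)$, where $\hat\Sig_1$ and $\hat\Sig_2$ come from independent datasets. Then $X$ and $Y$ are i.i.d. with variance $\vvec^{\top}\mathcal{S}_D(\x)\vvec$, so the expected cross-split MSE equals twice the variance given by Result~\ref{prop:denoiser_var_DE}. The same reasoning applies to sampling-map outputs and Result~\ref{prop:gen_sample_var_DE}.

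The argument has no genuine obstacle. The only points needing care are that finite variance guarantees every term is integrable, and that independence is required exactly once, to make the cross term vanish.
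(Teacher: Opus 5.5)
Your proof is correct and takes essentially the same approach as the paper: expand the square and use independence to factor the cross term. The only difference is cosmetic. You center first so the cross term vanishes directly, whereas the paper expands raw moments to $2\,\mathbb{E}[X^2]-2\,\mathbb{E}[X]\,\mathbb{E}[Y]$ and then invokes $\Var(X)=\mathbb{E}[X^2]-(\mathbb{E}[X])^2$.
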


\begin{proof}
Expanding and using independence,
\[
\mathbb{E}\!\left[(X-Y)^2\right]
= \mathbb{E}[X^2] + \mathbb{E}[Y^2] - 2\,\mathbb{E}[XY]
= 2\,\mathbb{E}[X^2] - 2\,\mathbb{E}[X]\mathbb{E}[Y].
\]
Since $\Var(X)=\mathbb{E}[X^2]-\big(\mathbb{E}[X]\big)^2=S$, this simplifies to $2S$.
\end{proof}

\clearpage
\subsection{Integral representation of matrix fractional powers}\label{app:Balakrishnan_identity}
\begin{lemma}[Scalar beta integral identity \cite{Balakrishnan1960FractionalPower}]
\label{lem:Scalar_beta_identity}We have the integral identity
\[
\int_{0}^{\infty}\frac{t^{-\alpha}dt}{\lambda+t}=\frac{\pi}{\sin(\pi\alpha)}\lambda^{-\alpha},\quad\alpha\in(0,1)
\]
\end{lemma}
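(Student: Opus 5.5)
We reduce the identity to the Euler Beta integral by a change of variables. Fix $\lambda>0$ and $\alpha\in(0,1)$, and first check convergence. Near $t=0$ the integrand behaves like $t^{-\alpha}/\lambda$, which is integrable because $\alpha<1$. As $t\to\infty$ it behaves like $t^{-1-\alpha}$, which is integrable because $\alpha>0$. The integral is therefore finite and positive, and all manipulations below are legitimate for absolutely convergent integrals of nonnegative functions.

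\emph{Scaling.} Substitute $t=\lambda s$, so $dt=\lambda\,ds$ and $\lambda+t=\lambda(1+s)$. This gives
\[
\int_{0}^{\infty}\frac{t^{-\alpha}dt}{\lambda+t}=\lambda^{-\alpha}\int_{0}^{\infty}\frac{s^{-\alpha}ds}{1+s}.
\]
This isolates the $\lambda^{-\alpha}$ dependence, so it remains to show that the constant equals $\pi/\sin(\pi\alpha)$.

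\emph{Reduction to the Beta function.} Substitute $s=x/(1-x)$ with $x\in(0,1)$. Then $1+s=1/(1-x)$ and $ds=dx/(1-x)^{2}$, so the integrand becomes
\[
x^{-\alpha}(1-x)^{\alpha}(1-x)\,\frac{dx}{(1-x)^{2}}=x^{-\alpha}(1-x)^{\alpha-1}\,dx.
\]
Hence the constant is
\[
B(1-\alpha,\alpha)=\frac{\Gamma(1-\alpha)\Gamma(\alpha)}{\Gamma(1)}=\Gamma(\alpha)\Gamma(1-\alpha).
\]

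\emph{Reflection formula.} Euler's reflection formula gives $\Gamma(\alpha)\Gamma(1-\alpha)=\pi/\sin(\pi\alpha)$ for $\alpha\in(0,1)$, which completes the proof.

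The only nontrivial ingredient is this last step. If a self-contained argument is wanted instead of citing the Gamma function, one can evaluate $\int_0^\infty s^{-\alpha}/(1+s)\,ds$ directly by contour integration. Integrate $z^{-\alpha}/(1+z)$ around a keyhole contour with the branch cut along the positive real axis. The residue at $z=-1$ equals $e^{-i\pi\alpha}$. The two sides of the cut contribute $(1-e^{-2\pi i\alpha})I$, where $I$ is the integral we want, and the small and large circles vanish precisely because $0<\alpha<1$. This yields
\[
I=\frac{2\pi i\,e^{-i\pi\alpha}}{1-e^{-2\pi i\alpha}}=\frac{\pi}{\sin\pi\alpha}.
\]
In either route, the main care is verifying the vanishing of the boundary contributions, which rests on the same two convergence conditions checked at the start.
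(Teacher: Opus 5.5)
Your proof is correct and follows the paper's route: your two substitutions $t=\lambda s$ and $s=x/(1-x)$ compose to the paper's single substitution $u=t/(\lambda+t)$. Both arguments then reach $\lambda^{-\alpha}B(1-\alpha,\alpha)$ and finish with Euler's reflection formula. Your convergence check and the keyhole-contour alternative are correct additions, but they are not needed for the argument.
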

\begin{IEEEproof}
Recall the definition of Beta function,
\begin{align*}
B(p,q) & =\int_{0}^{1}u^{p-1}(1-u)^{q-1}du =\frac{\Gamma(p)\Gamma(q)}{\Gamma(p+q)}
\end{align*}

We can turn it into beta function via change of variable $u=\frac{t}{\lambda+t}$,
then $t\in[0,\infty)$ maps to $u\in[0,1)$.
\begin{align*}
t & =\frac{u\lambda}{1-u}\\
dt & =\frac{\lambda}{(1-u)^{2}}du
\end{align*}
\begin{align*}
\int_{0}^{\infty}\frac{t^{-\alpha}dt}{\lambda+t} & =\int_{0}^{\infty}(\frac{t}{t+\lambda})t^{-1-\alpha}dt\\
 & =\int_{0}^{1}u(\frac{u\lambda}{1-u})^{-1-\alpha}\frac{\lambda}{(1-u)^{2}}du\\
 & =\lambda^{-\alpha}\int_{0}^{1}u^{-\alpha}(1-u)^{\alpha-1}du\\
 & =\lambda^{-\alpha}B(1-\alpha,\alpha)
\end{align*}

and using Euler's reflection formula, we have
\[
B(1-\alpha,\alpha)=\frac{\pi}{\sin(\pi\alpha)}
\]

Thus,
\[
\int_{0}^{\infty}\frac{t^{-\alpha}dt}{\lambda+t}=\frac{\pi}{\sin(\pi\alpha)}\lambda^{-\alpha}
\]
\end{IEEEproof}
\begin{cor}[Integral formula for power one half]
 In the special case of $\alpha=1/2$
\[
\pi\lambda^{-1/2}=\int_{0}^{\infty}\frac{t^{-1/2}dt}{\lambda+t}=2\int_{0}^{\infty}\frac{ds}{\lambda+s^{2}}
\]
\end{cor}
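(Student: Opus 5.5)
The corollary follows from Lemma~\ref{lem:Scalar_beta_identity} in two steps: specialize $\alpha$, then change variables.

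\emph{Step 1 (specialization).} Set $\alpha=1/2$ in Lemma~\ref{lem:Scalar_beta_identity}. Since $\sin(\pi/2)=1$, the prefactor $\pi/\sin(\pi\alpha)$ becomes $\pi$. This gives
\[
\int_0^\infty \frac{t^{-1/2}\,dt}{\lambda+t}=\pi\lambda^{-1/2}
\]
for every $\lambda>0$, which is the first equality.

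\emph{Step 2 (substitution).} In the same integral, substitute $t=s^2$ with $s\in[0,\infty)$. Then $dt=2s\,ds$ and $t^{-1/2}=s^{-1}$, so $t^{-1/2}\,dt=2\,ds$. The integrand becomes $2/(\lambda+s^2)$, which gives the second equality. The map $s\mapsto s^2$ is a $C^1$ bijection of $(0,\infty)$ onto itself and the integrand is nonnegative, so the change of variables is justified for improper integrals.

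\emph{Independent check.} Compute $2\int_0^\infty ds/(\lambda+s^2)$ directly. Its antiderivative is $(2/\sqrt{\lambda})\arctan(s/\sqrt{\lambda})$, which evaluates to $\pi/\sqrt{\lambda}$. This confirms the constant.

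Neither step poses a real obstacle. The only point requiring care is convergence: the integrand behaves like $t^{-1/2}$ near $0$ and like $t^{-3/2}$ at $\infty$, so the integral converges absolutely. This is already covered by the hypothesis $\alpha\in(0,1)$ in the lemma.

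This form is the one the paper will need next. Applying the scalar identity through the spectral decomposition gives
\[
\hat\Sig^{-1/2}=\frac{2}{\pi}\int_0^\infty(\hat\Sig+s^2I)^{-1}\,ds.
\]
Multiplying by $\hat\Sig$ then gives
\[
\hat\Sig^{1/2}=\frac{2}{\pi}\int_0^\infty\hat\Sig(\hat\Sig+u^2I)^{-1}\,du.
\]
This expresses $\hat\Sig^{1/2}$ as an integral of the resolvent objects that Eq.~\ref{eq:DE_cov_equiv} controls, which is what Results~\ref{prop:gen_sample_exp_DE} and~\ref{prop:gen_sample_var_DE} rely on.
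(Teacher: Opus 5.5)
Your proof is correct and matches the paper's: the first equality is the $\alpha=1/2$ case of Lemma~\ref{lem:Scalar_beta_identity}, and the second is the substitution $t=s^2$, which is exactly the paper's one-line argument. One small caveat on your closing aside: the intermediate formula for $\hat\Sig^{-1/2}$ requires $\hat\Sig$ to be positive definite, since the integral diverges on null directions when $d>n$. The final representation of $\hat\Sig^{1/2}$, by contrast, holds for every PSD $\hat\Sig$, because you can apply the scalar identity eigenvalue by eigenvalue and a zero eigenvalue simply contributes $0$.
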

\begin{IEEEproof}
Use simple change of variable $t\to s^{2}$,
\begin{align*}
\pi\lambda^{-1/2} & =\int_{0}^{\infty}\frac{t^{-1/2}dt}{\lambda+t}
 =\int_{0}^{\infty}\frac{s^{-1}ds^{2}}{\lambda+s^{2}}
 =\int_{0}^{\infty}\frac{2ds}{\lambda+s^{2}}
\end{align*}
\end{IEEEproof}
\begin{cor}[Integral representation of fractional matrix power]
\label{cor:int_repr_mat_frac_power} The matrix version of such identity,
for self-adjoint, positive semi definite matrix $A\succeq0$,
\[
\int_{0}^{\infty}(A+tI)^{-1}t^{-\alpha}dt=\frac{\pi}{\sin(\pi\alpha)}A^{-\alpha},\quad\alpha\in(0,1)
\]

Similarly, for $z\geq0,z\in\mathbb{R}$,
\[
\int_{0}^{\infty}(A+(z+t)I)^{-1}t^{-\alpha}dt=\frac{\pi}{\sin(\pi\alpha)}(A+zI)^{-\alpha},\quad\alpha\in(0,1)
\]
\end{cor}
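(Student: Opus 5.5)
The plan is to reduce the matrix statement to the scalar identity of Lemma~\ref{lem:Scalar_beta_identity} via the spectral theorem. Since $A\succeq 0$ is self-adjoint, write $A=U\Lambda U^\top$ with $U$ orthogonal and $\Lambda=\mathrm{diag}(\lambda_1,\dots,\lambda_d)$, $\lambda_k\ge 0$. For every $t>0$ the matrix $A+tI$ is invertible, with $(A+tI)^{-1}=U(\Lambda+tI)^{-1}U^\top$. Because $U$ does not depend on $t$, it can be pulled outside the integral, so
\[
\int_0^\infty (A+tI)^{-1}t^{-\alpha}\,dt = U\,\mathrm{diag}\Big(\int_0^\infty \frac{t^{-\alpha}\,dt}{\lambda_k+t}\Big)U^\top .
\]
Applying Lemma~\ref{lem:Scalar_beta_identity} entrywise replaces each diagonal entry by $\frac{\pi}{\sin(\pi\alpha)}\lambda_k^{-\alpha}$. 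Reassembling gives $\frac{\pi}{\sin(\pi\alpha)}U\Lambda^{-\alpha}U^\top=\frac{\pi}{\sin(\pi\alpha)}A^{-\alpha}$, with $A^{-\alpha}$ defined by functional calculus.

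The interchange of the integral with conjugation by $U$ needs a short justification. The integrand is a continuous matrix-valued function on $(0,\infty)$, and each entry of $U(\Lambda+tI)^{-1}U^\top$ is a finite linear combination of the scalar functions $t^{-\alpha}/(\lambda_k+t)$. It therefore suffices that each of these is integrable. At infinity the decay is $t^{-1-\alpha}$, which is integrable since $\alpha>0$. Near $t=0$, if $\lambda_k>0$ the behaviour is $t^{-\alpha}$, which is integrable since $\alpha<1$.

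The one real obstacle is the case of a zero eigenvalue, $\lambda_k=0$. Then the integrand near $0$ behaves like $t^{-1-\alpha}$, which diverges, and $A^{-\alpha}$ is not defined either. So the first identity should be read for $A\succ 0$, or as an identity in $[0,\infty]$ on the kernel of $A$. I will state this restriction explicitly: either assume $A$ is positive definite, or note that on $\ker A$ both sides are $+\infty$.

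The second identity, for $z\ge 0$, follows by applying the first with $A$ replaced by $A+zI\succeq 0$. The eigenvalues of $A+zI$ are $\lambda_k+z$, and $(A+zI)+tI=A+(z+t)I$. Whenever $z>0$ the matrix $A+zI$ is strictly positive definite, so the zero-eigenvalue caveat disappears; this is exactly the case used in the paper, for example with $\Sig+\sigma_T^2 I$. If $z=0$, the second identity reduces to the first, with the same caveat.
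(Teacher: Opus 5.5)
Your proof is correct and matches the paper's approach: the paper gives no separate proof, stating the corollary as the eigenvalue-by-eigenvalue (spectral) lift of the scalar beta-integral lemma, with the second identity following from the shift $A\to A+zI$. Your zero-eigenvalue caveat is also right: it sharpens the stated hypothesis $A\succeq 0$ to $A\succ 0$ (or $z>0$); note that the paper's later uses with a rank-deficient $\hat\Sig$ are safe only because $A$ is multiplied into the integrand, e.g.\ $A^{1/2}=\frac{1}{\pi}\int_0^\infty A(A+tI)^{-1}t^{-1/2}\,dt$, which does hold for every $A\succeq 0$.
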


\begin{cor}[Integral representation of matrix one half]
\label{cor:int_repr_mat_one_half} The matrix version of such identity,
for self-adjoint, positive semi definite matrix $A\succeq0$,
\[
A^{-1/2}=\frac{1}{\pi}\int_{0}^{\infty}(A+tI)^{-1}t^{-1/2}dt=\frac{2}{\pi}\int_{0}^{\infty}(A+s^{2}I)^{-1}ds
\]
\end{cor}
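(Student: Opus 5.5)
The plan is to reduce the matrix identity to the scalar one in Corollary~\ref{cor:int_repr_mat_one_half}'s parent, Lemma~\ref{lem:Scalar_beta_identity} with $\alpha=1/2$, via the spectral theorem. Since $A\succeq 0$ is self-adjoint, I write $A=U\Lambda U^{\top}$ with $U$ orthogonal and $\Lambda=\mathrm{diag}(\lambda_1,\dots,\lambda_d)$, $\lambda_k\ge 0$. For every $t>0$ the resolvent diagonalizes in the same basis:
\[
(A+tI)^{-1}=U\,\mathrm{diag}\bigl((\lambda_k+t)^{-1}\bigr)\,U^{\top}.
\]
The integral $\int_0^\infty (A+tI)^{-1}t^{-1/2}\,dt$ is understood entrywise. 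Because $U$ is constant in $t$, it can be pulled outside the integral, which reduces it to the diagonal matrix with entries $\int_0^\infty t^{-1/2}(\lambda_k+t)^{-1}dt$.

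Each of these diagonal entries is exactly the scalar beta integral of Lemma~\ref{lem:Scalar_beta_identity} with $\alpha=1/2$. Using $\sin(\pi/2)=1$, each entry evaluates to $\pi\lambda_k^{-1/2}$. Reassembling gives $\pi\,U\Lambda^{-1/2}U^{\top}=\pi A^{-1/2}$, which is the first equality after dividing by $\pi$.

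The second equality follows from the change of variables $t=s^2$, $dt=2s\,ds$. This turns $t^{-1/2}dt$ into $2\,ds$ and $(A+tI)^{-1}$ into $(A+s^2I)^{-1}$. This is exactly the computation in the preceding scalar corollary for power one half, now applied entrywise in the eigenbasis. Alternatively, I can simply apply that scalar corollary to each $\lambda_k$ and reassemble.

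The only genuine obstacle is the meaning of $A^{-1/2}$ and the convergence of the integrals when $A$ is only positive semidefinite. If some $\lambda_k=0$, the entry $\int_0^\infty t^{-3/2}dt$ diverges at $0$, consistent with $\lambda_k^{-1/2}=\infty$. I will therefore state the identity for $A\succ0$, where each scalar integral converges absolutely: the integrand is $\sim\lambda_k^{-1}t^{-1/2}$ near $0$ and $\sim t^{-3/2}$ at infinity. In the semidefinite case I will interpret both sides in the extended sense, or restrict them to the range of $A$. In the paper's applications the matrix is $\hat\Sig+\sigma^2 I$ (or the shifted form of Corollary~\ref{cor:int_repr_mat_frac_power} with $z>0$), so strict positivity holds and no issue arises.

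Exchanging the conjugation by $U$ with the integral needs no Fubini-type justification beyond linearity, since $d$ is finite and each entry converges absolutely.
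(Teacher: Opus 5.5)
Your proof is correct and matches the paper's route: the paper gives no separate argument and presents this as the eigenvalue-wise matrix version of Lemma~\ref{lem:Scalar_beta_identity} at $\alpha=1/2$ combined with the substitution $t=s^2$. You are also right that convergence needs $A\succ0$, since a zero eigenvalue produces $\int_0^\infty t^{-3/2}\,dt=\infty$.

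One correction to your aside about the applications. The paper applies the formula directly to $\hat\Sig$, which is singular whenever $d>n$, as in its experiments. It always does so in the premultiplied form $\hat\Sig^{1/2}=\frac{2}{\pi}\int_0^\infty\hat\Sig(\hat\Sig+u^2I)^{-1}du$. That form holds for every $A\succeq0$, because the kernel of $A$ contributes zero to the integrand.
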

\begin{lemma}[Resolvent Identity]
\label{lem:resolvant_identity}When $u\neq s$, we have the identity
\[
(A+sI)^{-1}(A+uI)^{-1}=\frac{1}{s-u}\Big((A+uI)^{-1}-(A+sI)^{-1}\Big)
\]
\begin{align*}
A(A+sI)^{-1}(A+uI)^{-1} & =\frac{1}{s-u}\Big(A(A+uI)^{-1}-A(A+sI)^{-1}\Big)\\
 & =\frac{s(A+sI)^{-1}-u(A+uI)^{-1}}{s-u}
\end{align*}
\end{lemma}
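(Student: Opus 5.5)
The plan is to verify both identities directly, using only the fact that all matrices involved are polynomials or inverses of polynomials in the single matrix $A$, and therefore commute. For the first identity, with $u\neq s$, I will start from the elementary difference $(A+sI)-(A+uI)=(s-u)I$. Multiplying on the left by $(A+sI)^{-1}$ and on the right by $(A+uI)^{-1}$ gives
\[
(A+uI)^{-1}-(A+sI)^{-1}=(s-u)(A+sI)^{-1}(A+uI)^{-1}.
\]
Dividing by $s-u$ yields the first claim. This is the same push-through identity $A^{-1}-B^{-1}=A^{-1}(B-A)B^{-1}$ already used in App.~\ref{app:proof_denoiser_exp_de}.

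For the second line, I will left-multiply the first identity by $A$, which gives the first equality immediately. For the closed form, I rewrite $A(A+uI)^{-1}=I-u(A+uI)^{-1}$, and similarly with $s$. Substituting, the identity terms cancel and the numerator becomes $s(A+sI)^{-1}-u(A+uI)^{-1}$. Division by $s-u$ then gives the stated expression.

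The only thing that needs care is well-posedness. The inverses must exist, which holds whenever $s,u$ are not equal to the negative of an eigenvalue of $A$. In the paper's use, $A\succeq0$ and $s,u>0$, so this is automatic. Commutativity also has to be noted explicitly, since it is what makes the order of multiplication irrelevant. There is no real obstacle; the main point to check is the sign convention in the final fraction. I would verify it by taking the scalar case $A=a$, where
\[
\frac{a}{(a+s)(a+u)}=\frac{1}{s-u}\Big(\frac{s}{a+s}-\frac{u}{a+u}\Big),
\]
and this is confirmed by clearing denominators: $s(a+u)-u(a+s)=a(s-u)$.
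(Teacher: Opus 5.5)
Your proof is correct and follows essentially the same route as the paper: you derive the first identity from $(A+sI)-(A+uI)=(s-u)I$ multiplied against the two resolvents, then left-multiply by $A$ and use $A(A+uI)^{-1}=I-u(A+uI)^{-1}$ to reach the closed form. One small difference: your sandwich ordering, with $(A+sI)^{-1}$ on the left and $(A+uI)^{-1}$ on the right, makes the first step valid even without commutativity, whereas the paper multiplies the difference on the left of the product and so relies on commutativity implicitly.
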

\begin{IEEEproof}
Note that
\begin{align*}
 & \Big((A+sI)-(A+uI)\Big)(A+sI)^{-1}(A+uI)^{-1}\\
= & (A+uI)^{-1}-(A+sI)^{-1}\\
= & (s-u)(A+sI)^{-1}(A+uI)^{-1}
\end{align*}

Thus,
\begin{align*}
(A+sI)^{-1}(A+uI)^{-1} & =\frac{1}{(s-u)}\Big((A+uI)^{-1}-(A+sI)^{-1}\Big)
\end{align*}

as corollary
\begin{align*}
A(A+sI)^{-1}(A+uI)^{-1} & =\frac{1}{s-u}\Big(A(A+uI)^{-1}-A(A+sI)^{-1}\Big)\\
 & =\frac{1}{s-u}\Big(I-u(A+uI)^{-1}-I+s(A+sI)^{-1}\Big)\\
 & =\frac{s(A+sI)^{-1}-u(A+uI)^{-1}}{s-u}
\end{align*}

Note that this formula has no real pole, and it behaves well when
denominator vanishes, and the RHS becomes a derivative.
\begin{align*}
\lim_{s\to u}\frac{s(A+sI)^{-1}-u(A+uI)^{-1}}{s-u} & =\frac{d}{ds}s(A+sI)^{-1}\\
 & =(A+sI)^{-1}-s(A+sI)^{-2}\\
 & =A(A+sI)^{-2}
\end{align*}
\begin{align*}
\lim_{s\to u}\frac{1}{(s-u)}\Big((A+uI)^{-1}-(A+sI)^{-1}\Big) & =-\frac{d}{du}(A+uI)^{-1}\\
 & =(A+uI)^{-2}
\end{align*}
\end{IEEEproof}

\subsection{Derivation of the sampling-map expectation equivalence: infinite-$\sigma_T$ approximation (Result \ref{prop:gen_sample_exp_DE})}\label{app:proof_samp_mapping_exp_approx}

Using empirical covariance and mean to realize the sampling, we have
\[
\x(\x_{\sigma_{T}},\sigma_{0})=\hat{\mu}+(\hat{\Sig}+\sigma_{0}^{2}I)^{1/2}(\hat{\Sig}+\sigma_{T}^{2}I)^{-1/2}(\x_{\sigma_{T}}-\hat{\mu})
\]

For the final sampling outcome $\sigma_{0}\to0$, this reads
\[
\x(\x_{\sigma_{T}},0)=\hat{\mu}+\hat{\Sig}^{1/2}(\hat{\Sig}+\sigma_{T}^{2}I)^{-1/2}(\x_{\sigma_{T}}-\hat{\mu})
\]

As before, assume the sample mean equals the population one, then
the finite sample effect comes from the matrix $\hat{\Sig}^{1/2}(\hat{\Sig}+\sigma_{T}^{2}I)^{-1/2}$
\[
\x(\x_{\sigma_{T}},0)=\mu+\hat{\Sig}^{1/2}(\hat{\Sig}+\sigma_{T}^{2}I)^{-1/2}(\x_{\sigma_{T}}-\mu)
\]

Note that for sampling, under the EDM convention, the initial noise is
sampled with variance $\sigma_{T}^{2}I$, $\x_{\sigma_{T}}\sim\mathcal{N}(0,\sigma_{T}^{2}I)$,
notably for practical diffusion models, initial noise variances are
large, $\sigma_{T}^{2}\sim6000$. Thus we can define a normalized
initial noise $\bar{\x}=(\x_{\sigma_{T}}-\mu)/\sigma_{T}$.

As a large initial noise limit, given that $\Sig$ has finite spectral
norm,
\[
\lim_{\sigma\to\infty}\sigma\Sig^{1/2}(\Sig+\sigma^{2}I)^{-1/2}=\Sig^{1/2}
\]
and when $\sigma_{T}\to\infty$ the normalized initial noise is sampled
from standard Gaussian, $\bar{\x}\sim\mathcal{N}(0,I)$.

Equivalently, we can consider expansion as orders of $1/\sigma$,
\begin{align*}
\sigma\Sig^{1/2}(\Sig+\sigma^{2}I)^{-1/2} & =\Sig^{1/2}(I+\frac{1}{\sigma^{2}}\Sig)^{-1/2}\\
 & \approx\Sig^{1/2}(I-\frac{1}{2}\frac{1}{\sigma^{2}}\Sig+...)\\
 & \approx\Sig^{1/2}-\frac{1}{2}\frac{1}{\sigma^{2}}\Sig^{3/2}+...
\end{align*}

If we keep the zeroth-order term, then we get the approximation
\[
\sigma\Sig^{1/2}(\Sig+\sigma^{2}I)^{-1/2}\approx\Sig^{1/2}
\]

Consider approximation,
\begin{align*}
\x(\x_{\sigma_{T}},0) & =\mu+\hat{\Sig}^{1/2}(\hat{\Sig}+\sigma_{T}^{2}I)^{-1/2}(\x_{\sigma_{T}}-\mu)\\
 & \approx\mu+\hat{\Sig}^{1/2}(\frac{\x_{\sigma_{T}}-\mu}{\sigma_{T}})\\
 & =\mu+\hat{\Sig}^{1/2}\bar{\x}
\end{align*}
then we can study the finite-sample effect on the sampling mapping
via the matrix $\hat{\Sig}^{1/2}$.
\begin{proposition}\label{prop_app:deter_equiv_cov_mat_half}
Deterministic equivalence of empirical covariance matrix square root
\begin{align}
\hat{\Sig}^{1/2}= & \frac{2}{\pi}\int_{0}^{\infty}\hat{\Sig}(\hat{\Sig}+u^{2}I)^{-1}du \label{eq:matrix_half_int_repr}\\
\asymp & \frac{2}{\pi}\int_{0}^{\infty}\Sig\Big(\Sig+\kappa(u^{2})I\Big)^{-1}du\label{eq:emp_cov_half_equivalence-app}
\end{align}
\end{proposition}
\begin{IEEEproof}
Combining Lemma \ref{cor:int_repr_mat_one_half} with deterministic
equivalence of one point (Eq.~\ref{eq:DE_cov_equiv}).
\end{IEEEproof}
This result can be compared to population covariance half, when renormalization
effect vanish $\kappa(u^{2})\to u^{2}$.
\[
\Sig^{1/2}=\frac{2}{\pi}\int_{0}^{\infty}\Sig(\Sig+u^{2}I)^{-1}du
\]

Since $\kappa(u^{2})>u^{2}$ point by point in the integral, the sample
version leads to larger shrinkage.
\[
\vvec^{\top}\hat{\Sig}^{1/2}\vvec<\vvec^{\top}\Sig^{1/2}\vvec
\]

Concretely, if we measure along spectral modes $\uvec_k$ of
population covariance,
\begin{align*}
\uvec_k^{\top}\hat{\Sig}^{1/2}\uvec_k & \asymp\frac{2}{\pi}\int_{0}^{\infty}\uvec_k^{\top}\Sig\Big(\Sig+\kappa(u^{2})I\Big)^{-1}\uvec_k\,du\\
 & =\frac{2}{\pi}\int_{0}^{\infty}\frac{\lambda_{k}}{\lambda_{k}+\kappa(u^{2})}du\\
 & <\frac{2}{\pi}\int_{0}^{\infty}\frac{\lambda_{k}}{\lambda_{k}+u^{2}}du\\
 & =\lambda_{k}^{1/2}
\end{align*}





\subsection{Derivation of the sampling-map expectation equivalence: finite $\sigma_T$}\label{app:proof_samp_mapping_exp_full}

Next, we consider the finite $\sigma_{T}$ case, which involves two
matrix square roots and their equivalence. To prove this, we proceed in two
steps: 1) use an integral identity to represent matrices of the form $A^{1/2}(A+zI)^{-1/2}$,
and 2) apply one-point deterministic equivalence.
\begin{lemma}
\label{prop:int_repr_half_resolv}Integral representation, for self-adjoint,
positive semidefinite matrix $A\succeq0$,
\begin{align*}
A^{1/2}(A+zI)^{-1/2} & =\frac{4}{\pi^{2}}\int_{0}^{\infty}\int_{0}^{\infty}A(A+u^{2}I)^{-1}(A+(z+v^{2})I)^{-1}dudv\\
 & =\frac{4}{\pi^{2}}\int_{0}^{\infty}\int_{0}^{\infty}\frac{A(A+(z+u^{2})I)^{-1}-A(A+v^{2}I)^{-1}}{v^{2}-u^{2}-z}dudv
\end{align*}
\end{lemma}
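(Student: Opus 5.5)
The plan is to reduce everything to commuting scalar functions of $A$. Since $A\succeq0$ is self-adjoint, every matrix in the statement is a function of $A$ and they all commute. It therefore suffices to verify the scalar identity on each eigenvalue $a\ge0$ of $A$ and lift it through the spectral decomposition $A=U\Lambda U^\top$.

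\emph{First equality.} I will take the two one-half representations already available.
\begin{itemize}
\item Proposition~\ref{prop_app:deter_equiv_cov_mat_half} (Eq.~\ref{eq:matrix_half_int_repr}) gives
\[
A^{1/2}=\frac{2}{\pi}\int_0^\infty A(A+u^2I)^{-1}\,du ,
\]
which follows from Corollary~\ref{cor:int_repr_mat_one_half} by writing $A^{1/2}=A\cdot A^{-1/2}$.
\item The shifted version of Corollary~\ref{cor:int_repr_mat_frac_power} with $\alpha=1/2$, after the change of variable $t=v^2$ as in the corollary for power one half, gives
\[
(A+zI)^{-1/2}=\frac{2}{\pi}\int_0^\infty (A+(z+v^2)I)^{-1}\,dv .
\]
\end{itemize}
Multiplying the two representations produces the factor $4/\pi^2$ and the integrand $A(A+u^2I)^{-1}(A+(z+v^2)I)^{-1}$. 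The only thing to justify is exchanging the product with the double integral (Fubini/Tonelli).

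On an eigenvalue $a$, the integrand is $a/[(a+u^2)(a+z+v^2)]\ge0$, and the double integral factorizes into
\[
\Big(\int_0^\infty \frac{a\,du}{a+u^2}\Big)\Big(\int_0^\infty \frac{dv}{a+z+v^2}\Big)=\frac{\pi\sqrt a}{2}\cdot\frac{\pi}{2\sqrt{a+z}} .
\]
Both factors are finite when $a+z>0$. So Tonelli applies eigenvalue by eigenvalue, and finiteness of the spectrum gives the matrix statement. The degenerate case $a=0$ makes both sides zero. If $z=0$ and $A$ is singular, the statement should be read on the range of $A$, which is consistent with $A^{1/2}A^{-1/2}$ being the projection there.

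\emph{Second equality.} I will apply the corollary form of Lemma~\ref{lem:resolvant_identity} with $s=u^2$ and shift $u\mapsto z+v^2$:
\[
A(A+u^2I)^{-1}(A+(z+v^2)I)^{-1}=\frac{A(A+(z+v^2)I)^{-1}-A(A+u^2I)^{-1}}{z+v^2-u^2} .
\]
This is valid off the curve $u^2=z+v^2$. On that curve, the removable-singularity remark in Lemma~\ref{lem:resolvant_identity} shows the integrand extends continuously, so that measure-zero set does not affect the integral.

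\emph{Main obstacle: indices and signs.} The displayed second form in the statement has $(z+u^2)$ and $v^2$ with denominator $v^2-u^2-z$. This corresponds to the roles of $u,v$ swapped relative to the first line, that is, $A(A+(z+u^2)I)^{-1}(A+v^2I)^{-1}$. So I will rename $u\leftrightarrow v$, which is legitimate since the integral is symmetric in the dummy variables, and then recheck the sign of the difference quotient carefully.

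I also have to note that the split integrand should not be integrated term by term. Each piece separately is not absolutely integrable: for instance $\int A(A+u^2)^{-1}du$ converges, but divided by $v^2-u^2-z$ it has a principal-value character near the singular curve. The second representation is therefore to be understood as the integral of the combined, nonsingular quotient. That is exactly how it is used later when applying deterministic equivalents to the linear-in-resolvent terms.
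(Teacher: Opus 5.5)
Your route is essentially the paper's: multiply the two half-power integral representations to get the double integral, then split the product of resolvents with Lemma~\ref{lem:resolvant_identity}. You make two small improvements over the paper. First, you write $A^{1/2}=\frac{2}{\pi}\int_0^\infty A(A+u^2I)^{-1}\,du$ directly instead of $A\cdot A^{-1/2}$, so the argument stays meaningful for singular $A$. That is the relevant case when the lemma is applied to $\hat\Sig$ with $n<d$. Second, you justify the interchange by Tonelli on each eigenvalue, which the paper skips.

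There is, however, a sign error in the one identity you actually display for the second equality. Lemma~\ref{lem:resolvant_identity} with $s=u^2$ and second argument $z+v^2$ gives
\[
A(A+u^2I)^{-1}(A+(z+v^2)I)^{-1}=\frac{A(A+(z+v^2)I)^{-1}-A(A+u^2I)^{-1}}{u^2-z-v^2},
\]
not the version with denominator $z+v^2-u^2$.

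A one-line check confirms this. Take an eigenvalue $a>0$ with $u^2<z+v^2$. The left side is positive, while the numerator $\frac{a}{a+z+v^2}-\frac{a}{a+u^2}$ is negative, so the denominator must be negative.

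With the corrected denominator, the relabeling $u\leftrightarrow v$ gives exactly the statement's quotient with denominator $v^2-u^2-z$. Your version would give the negative of the stated integrand, so the "recheck the sign" step you deferred actually fails as written.

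Your other remarks are correct. The singularity on $u^2=z+v^2$ is removable, and the split form must be integrated as a combined quotient, not term by term.
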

\begin{IEEEproof}
We can study matrix of this form,
\[
A^{1/2}(A+zI)^{-1/2}
\]
using the integral representation above twice, we have
\begin{align*}
 & A^{1/2}(A+zI)^{-1/2}\\
= & AA^{-1/2}(A+zI)^{-1/2}\\
= & \frac{1}{\pi^{2}}A\int_{0}^{\infty}(A+sI)^{-1}s^{-1/2}ds\int_{0}^{\infty}(A+(z+t)I)^{-1}t^{-1/2}dt\\
= & \frac{1}{\pi^{2}}\int_{0}^{\infty}\int_{0}^{\infty}A(A+sI)^{-1}(A+(z+t)I)^{-1}t^{-1/2}s^{-1/2}dsdt\\
= & \frac{4}{\pi^{2}}\int_{0}^{\infty}\int_{0}^{\infty}A(A+u^{2}I)^{-1}(A+(z+v^{2})I)^{-1}dudv
\end{align*}
To deal with this \emph{product of resolvents}, we can turn it into
a \emph{difference of resolvents} via Lemma \ref{lem:resolvant_identity},
\[
(A+sI)^{-1}(A+tI)^{-1}=\frac{1}{(s-t)}\Big((A+tI)^{-1}-(A+sI)^{-1}\Big)
\]
Now using the identity, we have
\begin{align*}
 & A^{1/2}(A+zI)^{-1/2}\\
= & \frac{1}{\pi^{2}}\int_{0}^{\infty}\int_{0}^{\infty}A(A+sI)^{-1}(A+(z+t)I)^{-1}t^{-1/2}s^{-1/2}dsdt\\
= & \frac{1}{\pi^{2}}\int_{0}^{\infty}\int_{0}^{\infty}\frac{A(A+(z+t)I)^{-1}-A(A+sI)^{-1}}{s-z-t}t^{-1/2}s^{-1/2}dsdt
\end{align*}
Putting it together,
\begin{align*}
A^{1/2}(A+zI)^{-1/2} & =\frac{1}{\pi^{2}}\int_{0}^{\infty}\int_{0}^{\infty}\frac{A(A+(z+t)I)^{-1}-A(A+sI)^{-1}}{s-t-z}t^{-1/2}s^{-1/2}dsdt\\
 & =\frac{4}{\pi^{2}}\int_{0}^{\infty}\int_{0}^{\infty}\frac{A(A+(z+u^{2})I)^{-1}-A(A+v^{2}I)^{-1}}{v^{2}-u^{2}-z}dudv
\end{align*}
\end{IEEEproof}
Next we are ready to use the one-point deterministic equivalence.
\begin{proposition}\label{prop_app:deter_equiv_cov_half_resolv}
For sample covariance matrix $\hat{\Sig}$, the following expression
has deterministic equivalent to the double integral of population
covariance,
\[
\hat{\Sig}^{1/2}(\hat{\Sig}+\sigma^{2}I)^{-1/2}\asymp\frac{4}{\pi^{2}}\int_{0}^{\infty}\int_{0}^{\infty}\frac{\kappa(\sigma^{2}+u^{2})-\kappa(v^{2})}{(\sigma^{2}+u^{2})-v^{2}}\Sig\big(\Sig+\kappa(\sigma^{2}+u^{2})I\big)^{-1}\big(\Sig+\kappa(v^{2})I\big)^{-1}dudv
\]
\end{proposition}
\begin{IEEEproof}
Using Result \ref{prop:int_repr_half_resolv}, set $A\to\hat{\Sig}$.
We can then apply deterministic equivalence for
resolvents:
\begin{align*}
\hat{\Sig}^{1/2}(\hat{\Sig}+\sigma^{2}I)^{-1/2} & =\hat{\Sig}\hat{\Sig}^{-1/2}(\hat{\Sig}+\sigma^{2}I)^{-1/2}\\
 & =\frac{1}{\pi^{2}}\int_{0}^{\infty}\int_{0}^{\infty}\hat{\Sig}(\hat{\Sig}+sI)^{-1}(\hat{\Sig}+(\sigma^{2}+t)I)^{-1}t^{-1/2}s^{-1/2}dsdt\\
 & =\frac{1}{\pi^{2}}\int_{0}^{\infty}\int_{0}^{\infty}\frac{\hat{\Sig}(\hat{\Sig}+(\sigma^{2}+t)I)^{-1}-\hat{\Sig}(\hat{\Sig}+sI)^{-1}}{s-t-\sigma^{2}}t^{-1/2}s^{-1/2}dsdt\\
 & \asymp\frac{1}{\pi^{2}}\int_{0}^{\infty}\int_{0}^{\infty}\frac{\Sig(\Sig+\kappa(\sigma^{2}+t)I)^{-1}-\Sig(\Sig+\kappa(s)I)^{-1}}{s-t-\sigma^{2}}t^{-1/2}s^{-1/2}dsdt
\end{align*}

Note there is no pole in this double integral, i.e. when $s=t+\sigma^{2}$,
$\Sig(\Sig+\kappa(\sigma^{2}+t)I)^{-1}=\Sig(\Sig+\kappa(s)I)^{-1}$,
thus both numerator and denominator vanish, and the limit is well
defined as a derivative!
\begin{align*}
RHS= & \frac{1}{\pi^{2}}\int_{0}^{\infty}\int_{0}^{\infty}\frac{\Sig(\Sig+\kappa(\sigma^{2}+t)I)^{-1}-\Sig(\Sig+\kappa(s)I)^{-1}}{s-t-\sigma^{2}}t^{-1/2}s^{-1/2}dsdt\\
= & \frac{1}{\pi^{2}}\int_{0}^{\infty}\int_{0}^{\infty}\frac{(\kappa(s)-\kappa(\sigma^{2}+t))\Sig(\Sig+\kappa(\sigma^{2}+t)I)^{-1}(\Sig+\kappa(s)I)^{-1}}{s-t-\sigma^{2}}t^{-1/2}s^{-1/2}dsdt\\
= & \frac{1}{\pi^{2}}\int_{0}^{\infty}\int_{0}^{\infty}\frac{\kappa(s)-\kappa(\sigma^{2}+t)}{s-(\sigma^{2}+t)}\Sig\big(\Sig+\kappa(\sigma^{2}+t)I\big)^{-1}\big(\Sig+\kappa(s)I\big)^{-1}t^{-1/2}s^{-1/2}dsdt
\end{align*}

This formulation shows that there are no real poles.

We can remove the singularity at $0$ via the change of variables $t\to u^{2},s\to v^{2}$:
\begin{align*}
RHS & =\frac{4}{\pi^{2}}\int_{0}^{\infty}\int_{0}^{\infty}\frac{\kappa(\sigma^{2}+u^{2})-\kappa(v^{2})}{(\sigma^{2}+u^{2})-v^{2}}\!\Sig\!\big(\Sig+\kappa(\sigma^{2}+u^{2})I\big)^{-1}\big(\Sig+\kappa(v^{2})I\big)^{-1}\!dudv
\end{align*}

Thus, we obtain the desired equivalence,
\[
\hat{\Sig}^{1/2}(\hat{\Sig}+\sigma^{2}I)^{-1/2}\asymp\frac{4}{\pi^{2}}\int_{0}^{\infty}\int_{0}^{\infty}\frac{\kappa(\sigma^{2}+u^{2})-\kappa(v^{2})}{(\sigma^{2}+u^{2})-v^{2}}\!\Sig\!\big(\Sig+\kappa(\sigma^{2}+u^{2})I\big)^{-1}\big(\Sig+\kappa(v^{2})I\big)^{-1}\!dudv
\]

Note that the coefficient $\frac{\kappa(\sigma^{2}+u^{2})-\kappa(v^{2})}{(\sigma^{2}+u^{2})-v^{2}}$
behaves well when $(\sigma^{2}+u^{2})-v^{2}\to0$, i.e. it becomes
a derivative of $\kappa$ (Lemma \ref{lem:resolvant_identity}). Thus,
there is no singularity in the integrand.
\end{IEEEproof}

\paragraph{Interpretation}

We can compare it to the sampling mapping with the population covariance,
i.e. infinite data limit. Using Prop \ref{prop:int_repr_half_resolv},
setting $A\to\Sig$, the double integral representation of the denoiser
mapping reads,
\begin{align*}
\Sig^{1/2}(\Sig+\sigma^{2}I)^{-1/2} & =\frac{1}{\pi^{2}}\int_{0}^{\infty}\int_{0}^{\infty}\Sig\Big(\Sig+(\sigma^{2}+t)I\Big)^{-1}(\Sig+sI)^{-1}t^{-1/2}s^{-1/2}\!dsdt\\
 & =\frac{4}{\pi^{2}}\int_{0}^{\infty}\int_{0}^{\infty}\Sig\Big(\Sig+(\sigma^{2}+u^{2})I\Big)^{-1}(\Sig+v^{2}I)^{-1}\!dudv
\end{align*}

Indeed, since $\kappa(\sigma^{2}+u^{2})>(\sigma^{2}+u^{2})$ and $\kappa(v^{2})>v^{2}$,
this creates a larger shrinkage, especially at small eigen dimensions.

\subsection{Derivation of the sampling-map fluctuation equivalence: infinite-$\sigma_T$ approximation (Result \ref{prop:gen_sample_var_DE})}\label{app:proof_samp_mapping_var_approx}

Now let us consider the variance of the generated outcome with the
infinite $\sigma_{T}$ approximation, ignoring estimation error in
$\mu$,
\begin{align*}
\x_{\sigma_{0}} & =\mu+\hat{\Sig}^{1/2}(\hat{\Sig}+\sigma_{T}^{2}I)^{-1/2}(\x_{\sigma_{T}}-\mu)\\
 & \approx\mu+\hat{\Sig}^{1/2}(\frac{\x_{\sigma_{T}}-\mu}{\sigma_{T}})\\
 & =\mu+\hat{\Sig}^{1/2}\bar{\x}
\end{align*}
Thus, the variance of the generated output comes from the sample estimate of the covariance.
Let $\bar{\x}:=\frac{\x_{\sigma_{T}}-\mu}{\sigma_{T}}$,
i.e. the normalized deviation from the center.
\begin{proposition}[Main result, variance of generated sample under empirical data covariance.]

\begin{align*}
Var_{\hat{\Sig}}[\vvec^{\top}\hat{\Sig}^{1/2}\bar{\x}] & \asymp\frac{4}{\pi^{2}}\int_{0}^{\infty}\int_{0}^{\infty}\Big\{\frac{\kappa\kappa'}{n-\mathrm{df}_{2}(\kappa,\kappa')}\big[\vvec^{\top}\Sig(\Sig+\kappa I)^{-1}(\Sig+\kappa^{\prime}I)^{-1}\vvec\big]\!\\
 & \quad\quad\times\big[\bar{\x}^{\top}\Sig(\Sig+\kappa I)^{-1}(\Sig+\kappa^{\prime}I)^{-1}\bar{\x}\big]\!\Big\}\!dudv
\end{align*}
where $\kappa:=\kappa(u^{2}),\kappa^{\prime}:=\kappa(v^{2})$ are
the variables to be integrated over.
\end{proposition}
\begin{IEEEproof}
Representing the variance by moments,
\begin{align*}
 & \quad Var_{\hat{\Sig}}[\vvec^{\top}\hat{\Sig}^{1/2}\bar{\x}]\\
 & =\mathbb{E}_{\hat{\Sig}}[(\vvec^{\top}\hat{\Sig}^{1/2}\bar{\x})^{2}]-\mathbb{E}_{\hat{\Sig}}[\vvec^{\top}\hat{\Sig}^{1/2}\bar{\x}]^{2}\\
 & =\mathbb{E}_{\hat{\Sig}}[\vvec^{\top}\hat{\Sig}^{1/2}\bar{\x}\bar{\x}^{\top}\hat{\Sig}^{1/2}\vvec]-\mathbb{E}_{\hat{\Sig}}[\vvec^{\top}\hat{\Sig}^{1/2}\bar{\x}]\mathbb{E}_{\hat{\Sig}}[\mathbf{\bar{\x}}^{\top}\hat{\Sig}^{1/2}\vvec]\tag*{\textit{using Eq.~\ref{eq:matrix_half_int_repr}}}\\
 & =\mathbb{E}_{\hat{\Sig}}\Big\{\vvec^{\top}\Big[\frac{2}{\pi}\int_{0}^{\infty}\hat{\Sig}(\hat{\Sig}+u^{2}I)^{-1}du\Big]\bar{\x}\bar{\x}^{\top}\Big[\frac{2}{\pi}\int_{0}^{\infty}\hat{\Sig}(\hat{\Sig}+v^{2}I)^{-1}dv\Big]\vvec\Big\}\\
 & \quad-\mathbb{E}_{\hat{\Sig}}\Big\{\vvec^{\top}\Big[\frac{2}{\pi}\int_{0}^{\infty}\hat{\Sig}(\hat{\Sig}+u^{2}I)^{-1}du\Big]\bar{\x}\Big\}\mathbb{E}_{\hat{\Sig}}\Big\{\bar{\x}^{\top}\Big[\frac{2}{\pi}\int_{0}^{\infty}\hat{\Sig}(\hat{\Sig}+v^{2}I)^{-1}dv\Big]\vvec\Big\}
\end{align*}

Using the integral representation and exchanging the integral with
expectation,
\begin{align*}
RHS & =\frac{4}{\pi^{2}}\int_{0}^{\infty}\int_{0}^{\infty}\Big\{\mathbb{E}_{\hat{\Sig}}\Big\{\vvec^{\top}\hat{\Sig}(\hat{\Sig}+u^{2}I)^{-1}\bar{\x}\bar{\x}^{\top}\hat{\Sig}(\hat{\Sig}+v^{2}I)^{-1}\vvec\Big\}\\
 & \quad-\mathbb{E}_{\hat{\Sig}}\Big[\vvec^{\top}\hat{\Sig}(\hat{\Sig}+u^{2}I)^{-1}\bar{\x}\Big]\mathbb{E}_{\hat{\Sig}}\Big[\bar{\x}^{\top}\hat{\Sig}(\hat{\Sig}+v^{2}I)^{-1}\vvec\Big]\ \Big\}\!dudv\tag*{\textit{ integral representation of matrix half}}\\
 & \asymp\frac{4}{\pi^{2}}\int_{0}^{\infty}\int_{0}^{\infty}\Big\{\mathbb{E}_{\hat{\Sig}}\Big\{\vvec^{\top}\hat{\Sig}(\hat{\Sig}+u^{2}I)^{-1}\bar{\x}\bar{\x}^{\top}\hat{\Sig}(\hat{\Sig}+v^{2}I)^{-1}\vvec\Big\}\\
 & \quad-\Big[\vvec^{\top}\Sig(\Sig+\kappa(u^{2})I)^{-1}\bar{\x}\Big]\ \Big[\bar{\x}^{\top}\Sig(\Sig+\kappa(v^{2})I)^{-1}\vvec\Big]\ \Big\}\!dudv\tag*{\textit{ using one point equivalence}}\\
 & \asymp\frac{4}{\pi^{2}}\int_{0}^{\infty}\int_{0}^{\infty}\Big\{\Tr\big[\vvec\vvec^{\top}T_{\Sig}\bar{\x}\bar{\x}^{\top}T_{\Sig}^{\prime}\big]+\frac{\kappa\kappa'}{n-\mathrm{df}_{2}(\kappa,\kappa')}\Tr\big[\vvec\vvec^{\top}G_{\Sig}\Sig G_{\Sig}^{\prime}\big]\Tr\big[\bar{\x}\bar{\x}^{\top}G_{\Sig}^{\prime}\Sig G_{\Sig}\big]\\
 & \quad-\Big[\vvec^{\top}\Sig(\Sig+\kappa(u^{2})I)^{-1}\bar{\x}\Big]\ \Big[\bar{\x}^{\top}\Sig(\Sig+\kappa(v^{2})I)^{-1}\vvec\Big]\ \Big\}\!dudv\tag*{\textit{ using two point equivalence}}\\
 & =\frac{4}{\pi^{2}}\int_{0}^{\infty}\int_{0}^{\infty}\Big\{\frac{\kappa\kappa'}{n-\mathrm{df}_{2}(\kappa,\kappa')}\Tr\big[\vvec\vvec^{\top}G_{\Sig}\Sig G_{\Sig}^{\prime}\big]\Tr\big[\bar{\x}\bar{\x}^{\top}G_{\Sig}^{\prime}\Sig G_{\Sig}\big]\!\Big\}\!dudv\tag*{\textit{ first trace cancels out.}}\\
 & =\frac{4}{\pi^{2}}\int_{0}^{\infty}\int_{0}^{\infty}\Big\{\frac{\kappa\kappa'}{n-\mathrm{df}_{2}(\kappa,\kappa')}\big[\vvec^{\top}G_{\Sig}\Sig G_{\Sig}^{\prime}\vvec\big]\!\big[\bar{\x}^{\top}G_{\Sig}^{\prime}\Sig G_{\Sig}\bar{\x}\big]\!\Big\}\!dudv\\
 & =\frac{4}{\pi^{2}}\int_{0}^{\infty}\int_{0}^{\infty}\Big\{\frac{\kappa\kappa'}{n-\mathrm{df}_{2}(\kappa,\kappa')}\big[\vvec^{\top}\Sig(\Sig+\kappa(v^{2})I)^{-1}(\Sig+\kappa(u^{2})I)^{-1}\vvec\big]\!\\
 & \quad\quad\times\big[\bar{\x}^{\top}\Sig(\Sig+\kappa(v^{2})I)^{-1}(\Sig+\kappa(u^{2})I)^{-1}\bar{\x}\big]\!\Big\}\!dudv\\
 & =\frac{4}{\pi^{2}}\int_{0}^{\infty}\int_{0}^{\infty}\Big\{\frac{\kappa\kappa'}{n-\mathrm{df}_{2}(\kappa,\kappa')}\big[\vvec^{\top}\Sig(\Sig+\kappa I)^{-1}(\Sig+\kappa^{\prime}I)^{-1}\vvec\big]\!\\
 & \quad\quad\times\big[\bar{\x}^{\top}\Sig(\Sig+\kappa I)^{-1}(\Sig+\kappa^{\prime}I)^{-1}\bar{\x}\big]\!\Big\}\!dudv
\end{align*}

Thus, we arrive at our result
\begin{align*}
Var[\vvec^{\top}\hat{\Sig}^{1/2}\bar{\x}] & \asymp\frac{4}{\pi^{2}}\int_{0}^{\infty}\int_{0}^{\infty}\Big\{\frac{\kappa\kappa'}{n-\mathrm{df}_{2}(\kappa,\kappa')}\big[\vvec^{\top}\Sig(\Sig+\kappa I)^{-1}(\Sig+\kappa^{\prime}I)^{-1}\vvec\big]\!\\
 & \quad\quad\times\big[\bar{\x}^{\top}\Sig(\Sig+\kappa I)^{-1}(\Sig+\kappa^{\prime}I)^{-1}\bar{\x}\big]\!\Big\}\!dudv
\end{align*}
\end{IEEEproof}
%
As in Remark~\ref{rmk:generic_vector}, this retains the generic, factorized contribution and discards the possible exchange/alignment term. 

\subsubsection{Interpretation and derivations}

\paragraph{Anisotropy: effect of the probe vector}

If we marginalize over $\bar{\x}$, assuming $\bar{\x}\sim\mathcal{N}(0,I)$
from white noise, and consider only the effect of probe direction
$\vvec$,
\begin{align*}
\mathbb{E}_{\bar{\x}}Var[\vvec^{\top}\hat{\Sig}^{1/2}\bar{\x}] & \asymp\frac{4}{\pi^{2}}\int_{0}^{\infty}\int_{0}^{\infty}\Big\{\frac{\kappa\kappa'}{n-\mathrm{df}_{2}(\kappa,\kappa')}\big[\vvec^{\top}\Sig(\Sig+\kappa I)^{-1}(\Sig+\kappa^{\prime}I)^{-1}\vvec\big]\!\\
 & \quad\quad\times\mathbb{E}_{\bar{\x}}\big[\bar{\x}^{\top}\Sig(\Sig+\kappa I)^{-1}(\Sig+\kappa^{\prime}I)^{-1}\bar{\x}\big]\!\Big\}\!dudv\\
 & \asymp\frac{4}{\pi^{2}}\int_{0}^{\infty}\int_{0}^{\infty}\Big\{\frac{\kappa\kappa^{\prime}\Tr[\Sig(\Sig+\kappa I)^{-1}(\Sig+\kappa^{\prime}I)^{-1}]}{n-\mathrm{df}_{2}(\kappa,\kappa^{\prime})}\big[\vvec^{\top}\Sig(\Sig+\kappa I)^{-1}(\Sig+\kappa^{\prime}I)^{-1}\vvec\big]\!\Big\}\!dudv\\
\end{align*}

The mixed resolvent term can be simplified,
\begin{align*}
\Tr[\Sig(\Sig+\kappa I)^{-1}(\Sig+\kappa^{\prime}I)^{-1}] & =\frac{1}{\kappa}\Tr[(\Sig+\kappa I-\Sig)\Sig(\Sig+\kappa I)^{-1}(\Sig+\kappa^{\prime}I)^{-1}]\\
 & =\frac{1}{\kappa}\Tr[\Sig(\Sig+\kappa^{\prime}I)^{-1}]-\frac{1}{\kappa}\Tr[\Sig^{2}(\Sig+\kappa I)^{-1}(\Sig+\kappa^{\prime}I)^{-1}]\\
 & =\frac{1}{\kappa}\mathrm{df}_{1}(\kappa^{\prime})-\frac{1}{\kappa}\mathrm{df}_{2}(\kappa,\kappa^{\prime})\\
 & =\frac{1}{\kappa^{\prime}}\mathrm{df}_{1}(\kappa)-\frac{1}{\kappa^{\prime}}\mathrm{df}_{2}(\kappa,\kappa^{\prime})
\end{align*}

Using this identity
\begin{align}
\mathbb{E}_{\bar{\x}}Var[\vvec^{\top}\hat{\Sig}^{1/2}\bar{\x}] & \asymp\frac{4}{\pi^{2}}\int_{0}^{\infty}\int_{0}^{\infty}\Big\{\frac{\kappa^{\prime}\big(\mathrm{df}_{1}(\kappa^{\prime})-\mathrm{df}_{2}(\kappa,\kappa^{\prime})\big)}{n-\mathrm{df}_{2}(\kappa,\kappa^{\prime})}\big[\vvec^{\top}\Sig(\Sig+\kappa I)^{-1}(\Sig+\kappa^{\prime}I)^{-1}\vvec\big]\!\Big\}\!dudv
\end{align}
Setting the direction $\vvec$ to the eigenvector $\uvec_k$, with
corresponding eigenvalue $\lambda_{k}$, the variance along the eigen direction reads,
\begin{equation}
\mathbb{E}_{\bar{\x}}Var[\uvec_k^{\top}\hat{\Sig}^{1/2}\bar{\x}]\asymp\frac{4}{\pi^{2}}\int_{0}^{\infty}\int_{0}^{\infty}\Big\{\frac{\kappa^{\prime}\big(\mathrm{df}_{1}(\kappa^{\prime})-\mathrm{df}_{2}(\kappa,\kappa^{\prime})\big)}{n-\mathrm{df}_{2}(\kappa,\kappa^{\prime})}\ \frac{\lambda_{k}}{(\lambda_{k}+\kappa)(\lambda_{k}+\kappa^{\prime})}\!\Big\}\!dudv\
\end{equation}

\paragraph{Inhomogeneity: effect of initial noise}

Since the variance is symmetric in $\bar{\x}$ and $\vvec$, we can also marginalize over $\vvec$ while keeping the $\bar{\x}$
dependency.
Note that we assume $\vvec$ is unit norm, so summation
over $\uvec_k$ eigenvectors (instead of expectation) is equivalent to trace.
\begin{equation}
    \sum_{k}Var[\mathbf{u}_{k}^{\top}\hat{\Sigma}^{1/2}\bar{\mathbf{x}}]	=\Tr \Var[\hat{\Sigma}^{1/2}\bar{\mathbf{x}}]\asymp\frac{4}{\pi^{2}}\int_{0}^{\infty}\int_{0}^{\infty}\Big\{\frac{\kappa^{\prime}\big(\mathrm{df}_{1}(\kappa^{\prime})-\mathrm{df}_{2}(\kappa,\kappa^{\prime})\big)}{n-\mathrm{df}_{2}(\kappa,\kappa^{\prime})}\big[\bar{\mathbf{x}}^{\top}\Sigma(\Sigma+\kappa I)^{-1}(\Sigma+\kappa^{\prime}I)^{-1}\bar{\mathbf{x}}\big]\!\Big\}\!dudv
\end{equation}
\paragraph{Scaling: effect of sample size}
Finally, marginalizing over both factors, we have the overall scaling.
\begin{equation}
\mathbb{E}_{\bar{\mathbf{x}}}\sum_{k}\Var[\mathbf{u}_{k}^{\top}\hat{\Sigma}^{1/2}\bar{\mathbf{x}}]=\mathbb{E}_{\bar{\mathbf{x}}}\Tr \Var[\hat{\Sigma}^{1/2}\bar{\mathbf{x}}]\asymp\frac{4}{\pi^{2}}\int_{0}^{\infty}\int_{0}^{\infty}\Big\{\frac{\big(\mathrm{df}_{1}(\kappa^{\prime})-\mathrm{df}_{2}(\kappa,\kappa^{\prime})\big)\!\big(\mathrm{df}_{1}(\kappa)-\mathrm{df}_{2}(\kappa,\kappa^{\prime})\big)}{n-\mathrm{df}_{2}(\kappa,\kappa^{\prime})}\Big\}\!dudv
\end{equation}

\clearpage
\section{Experimental Details}

\subsection{Numerical methods}\label{app:numerical_methods}
\paragraph{Numerical evaluation of renormalized Ridge $\kappa(z)$.}
We computed $\kappa(z)$ as the solution to the self-consistent Silverstein equation
\begin{equation}
\kappa(z) - z \;=\; \gamma \sum_{k=1}^p w_k \,\frac{\kappa(z)\,\lambda_k}{\kappa(z) + \lambda_k},
\end{equation}
where $\{\lambda_k\}$ are the eigenvalues of $\Sig$ and $\{w_k\}$ are their normalized weights.
For scalar $z$, we solved this nonlinear equation using Newton's method with analytical derivative. Using implicit derivative with respect to $\kappa$, we have
\[1-\frac {dz}{d\kappa}
=\gamma \sum_{k=1}^p w_k \,\frac {d}{d\kappa}\Big[\frac{\kappa(z)\,\lambda_k}{\kappa(z) + \lambda_k}\Big]\]
\[
\big(\kappa'(z)\big)^{-1} \;=\; 1 \;-\; \gamma \sum_{k=1}^p w_k \,\frac{\lambda_k^2}{\big(\kappa(z)+\lambda_k\big)^2},
\]
This falls back to a robust root-finder for purely real inputs.
For a sequence of $z$ values along a path, we used an ``analytic continuation'' procedure in which the solution at the previous $z$ served as the initial guess for the next, ensuring branch continuity and numerical stability, particularly for small $z$. Further, we generally start the path from $z$ with high norm and solve with continuation back to small $z$.
A caching mechanism stored previously computed $(z,\,\kappa)$ pairs, with nearest-neighbor retrieval for initial guesses, further accelerating repeated evaluations.
This approach yields accurate and smooth $\kappa(z)$ profiles suitable for downstream quadrature-based integration.

\paragraph{Numerical evaluation of the integral over deterministic equivalence}
The analytical results in Eqs.~\ref{eq:gen_sample_exp_DE},\ref{eq:gen_sample_var_DE}, which involve integrals to infinity, are nontrivial to evaluate. To avoid truncation error, we used the following scheme, which maps the integration variable onto a finite domain.

We approximated the double integral
\begin{equation}
\frac{4}{\pi^{2}}\int_{0}^{\infty}\!\!\int_{0}^{\infty}
\frac{\kappa\,\kappa'\,\Tr\!\big[\Sig(\Sig+\kappa I)^{-1}(\Sig+\kappa' I)^{-1}\big]}
{n-\df_{2}(\kappa,\kappa')}
\big[\vvec^{\top}\Sig(\Sig+\kappa I)^{-1}(\Sig+\kappa' I)^{-1}\vvec\big]\,du\,dv
\end{equation}
using a Gauss--Legendre quadrature scheme combined with the tangent mapping $u = \tan\theta$ to transform the semi-infinite domain $[0,\infty)$ to a finite interval $[0,\pi/2)$.

We first generated $n_{\mathrm{nodes}}$ Gauss--Legendre nodes $\theta_i$ and weights $w_i$ on $[0,\pi/2]$, then applied the transformation $u = \tan\theta$ with Jacobian $J(\theta) = 1/\cos^2\theta$ to obtain quadrature points on $[0,\infty)$. This was performed independently for $u$ and $v$, and their 2D tensor product provided the integration grid.

The $\kappa$ values were computed at each $u^2$ and $v^2$ using a numerically stable, vectorized evaluation of the spectral mapping function $\kappa(z)$ derived from the eigenspectrum of $\Sig$. The integrand was then assembled by evaluating the trace term, the scalar bilinear form $\vvec^{\top}(\cdot)\vvec$, and the denominator $n - \df_2(\kappa,\kappa')$ on the full 2D grid. Quadrature weights and Jacobians were applied multiplicatively, and the sum over all grid points yielded the numerical approximation to the integral.

A similar quadrature is used for the single-integral equivalence Eq.~\ref{eq:gen_sample_exp_DE}, where we integrate over a 1D grid.

This approach yields high accuracy while avoiding explicit truncation of the infinite domain, as the nonlinear mapping concentrates quadrature nodes where the integrand varies most rapidly.


\subsection{Linear denoiser experiments}\label{app:linear_denoiser_method}
To cross-validate against our theory and numerical scheme, we performed extensive validation using empirical linear denoisers.

We computed the empirical covariance of a dataset and then used the following functions implementing the linear one-step denoiser and the full sampling map (Wiener filter).

\begin{RoundedListing}
def dnoised_X(x, Xmean, sample_cov, sigma2,):
    # single step denoiser
    return x + sigma2 * (Xmean - x ) @ torch.inverse(sample_cov + torch.eye(sample_cov.shape[0], device=x.device) * sigma2)


def wiener_gen_X(x, Xmean, wiener_matrix, sigmaT,):
    if x.dim() == 1:
        # Single vector case
        return Xmean + wiener_matrix @ (x * sigmaT - Xmean)
    else:
        # Batched vector case - x should be shape (batch_size, ndim)
        return Xmean[None,:] + (x * sigmaT - Xmean[None,:]) @ wiener_matrix.T


def build_wiener_matrix(eigvals, eigvecs, sigmaT=80.0, sigma0=0.0, EPS=1E-16, clip=True):
    if clip:
        eigvals = torch.clamp(eigvals, min=EPS)
    scaling = ((eigvals + sigma0**2) / (eigvals + sigmaT**2)).sqrt()
    return eigvecs @ torch.diag(scaling) @ eigvecs.T
\end{RoundedListing}

We set $\sigma_0=0$ in theory. In practice, it is usually set to a small positive number, e.g., 0.002. We tested this in a few cases and report the results in the appendix. Generally, a positive $\sigma_0$ acts as a floor for generated variance, thereby reducing the overshrinking effect.

We found that when the dataset size is insufficient, e.g., when $\hat \Sig$ is rank deficient, the eigendecomposition can be unstable and sometimes produces negative eigenvalues, which affects the matrix square-root operation in the Wiener matrix.
Even if we clip them, numerical artifacts often remain in small eigenspaces.
As a remedy, using higher precision \texttt{float64} numbers yields results that closely match the theory.

\subsection{Deep neural network experiments}\label{app:DNN_method}

We used the following preconditioning scheme, inspired by \cite{karras2022elucidatingDesignSp}, for all architectures in our comparison.
\begin{RoundedListing}
class EDMPrecondWrapper(nn.Module):
    def __init__(self, model, sigma_data=0.5, sigma_min=0.002, sigma_max=80, rho=7.0):
        super().__init__()
        self.model = model
        self.sigma_data = sigma_data
        self.sigma_min = sigma_min
        self.sigma_max = sigma_max
        self.rho = rho

    def forward(self, X, sigma, cond=None, ):
        sigma[sigma == 0] = self.sigma_min
        ## edm preconditioning for input and output
        ## https://github.com/NVlabs/edm/blob/main/training/networks.py#L632
        # unsqueeze sigma to have same dimension as X (which may have 2-4 dim)
        sigma_vec = sigma.view([-1, ] + [1, ] * (X.ndim - 1))
        c_skip = self.sigma_data ** 2 / (sigma_vec ** 2 + self.sigma_data ** 2)
        c_out = sigma_vec * self.sigma_data / (sigma_vec ** 2 + self.sigma_data ** 2).sqrt()
        c_in = 1 / (self.sigma_data ** 2 + sigma_vec ** 2).sqrt()
        c_noise = sigma.log() / 4
        model_out = self.model(c_in * X, c_noise, cond=cond)
        return c_skip * X + c_out * model_out
\end{RoundedListing}

\paragraph{EDM Loss Function}
We employ the loss function \(\mathcal{L}_\text{EDM}\) introduced in the Elucidated Diffusion Model (EDM) paper \cite{karras2022elucidatingDesignSp}, which is one specific weighting scheme for training diffusion models with $x_0$ prediction with variance-exploding noise schedule.

For each data point $\mathbf{x}\in \mathbb R^d$, the loss is computed as follows. The noise level for each data point is sampled from a log-normal distribution with hyperparameters \( P_{\text{mean}} \) and \( P_{\text{std}} \) (e.g., \( P_{\text{mean}} = -1.2 \) and \( P_{\text{std}} = 1.2 \)). Specifically, the noise level \( \sigma \) is sampled via
\[
\sigma = \exp\big(P_{\text{mean}} + P_{\text{std}}\,\epsilon\big),\quad \epsilon \sim \mathcal{N}(0,1).
\]
The weighting function per noise scale is defined as:
\[
w(\sigma) = \frac{\sigma^2 + \sigma_{\text{data}}^2}{\left(\sigma\,\sigma_{\text{data}}\right)^2},
\]
with hyperparameter \( \sigma_{\text{data}} \) (e.g., \( \sigma_{\text{data}} = 0.5 \)).
The noisy input $\mathbf{y}$ is created by the following,
\[
\mathbf{y} = \mathbf{x} + \sigma \mathbf{n},\quad \mathbf{n} \sim \mathcal{N}\big(\mathbf{0},\, \mathbf{I}_d\big),
\]
Let \( \mathbf{D}_\theta(\mathbf{y}, \sigma, \text{labels}) \) denote the output of the denoising network when given the noisy input \( \mathbf{y} \), the noise level \( \sigma \), and optional conditioning labels. The EDM loss per data point can be computed as:
\[
\mathcal{L}(\mathbf{x}) = w(\sigma) \, \| \mathbf{D}_\theta(\mathbf{x}+\sigma \mathbf{n}, \sigma, \text{labels}) - \mathbf{x} \|^2.
\]
Taking expectation over the data points and noise scales, the overall loss reads
\begin{align}
    \mathcal{L}_{EDM}=\mathbb{E}_{\mathbf{x}\sim p_{data}}\mathbb{E}_{\mathbf{n}\sim \mathcal{N}(0,\mathbf{I}_d)}\mathbb{E}_{\sigma} \Big[w(\sigma) \, \| \mathbf{D}_\theta(\mathbf{x}+\sigma \mathbf{n}, \sigma, \text{labels}) - \mathbf{x} \|^2\Big]
\end{align}

\paragraph{Hyperparameter Settings: DiT}

All experiments use DiT backbones with consistent architectural and optimization settings unless otherwise specified. Key hyperparameters:
\begin{itemize}
    \item \textbf{Model architecture:} patch size 2 or 4 (used once for FFHQ64, discarded for worse performance), hidden size 384, depth 6 layers, 6 attention heads, MLP ratio 4.
    \item \textbf{Datasets:} FFHQ-32, AFHQ-32, CIFAR-32, and FFHQ-64; subsampled at varying sizes (300, 1k, 3k, 10k, 30k) with two non-overlapping splits per size.
    \item \textbf{Training objective:} Denoising Score Matching (DSM) under EDM parametrization.
    \item \textbf{Training schedule:} 50000 steps with batch size 256, Adam optimizer with learning rate $1\times 10^{-4}$.
    \item \textbf{Evaluation:} fixed-noise seed, sampling with 35 steps with Heun sampler; evaluation sample size 1000, batch size 512.
\end{itemize}

\paragraph{Hyperparameter Settings: UNet}

All CNN-UNet experiments follow consistent architectural and optimization settings unless noted. Key hyperparameters:
\begin{itemize}
    \item \textbf{Model architecture:} UNet with base channels $128$ ; channel multipliers $\{1,2,2,2\}$; self-attention at resolution $8$.
    \item \textbf{Datasets:} FFHQ-32/64, AFHQ-32, CIFAR10-32, CIFAR100-32, LSUN-church-32/64, LSUN-bedroom-32/64; subsampled at varying sizes (300, 1k, 3k, 10k, 30k) with two non-overlapping splits per size.
    \item \textbf{Training objective:} Denoising Score Matching (DSM) under EDM parametrization.
    \item \textbf{Training schedule:} 50000 steps, batch size $256$, Adam with learning rate $1\times10^{-4}$.
    \item \textbf{Evaluation:} fixed-noise seed, sampling with 35 steps with Heun sampler; evaluation sample size 1000, batch size 512.
\end{itemize}

\paragraph{Computation Cost}

All experiments were conducted on NVIDIA A100 or H100 GPUs. Training DiT and CNN models on $32\times 32$ resolution datasets typically required $5$--$8$ hours to complete. In contrast, DiT models trained on FFHQ64 were substantially more expensive, taking approximately $24$ hours per run.

\section{Usage of LLMs}
We used LLMs in three ways. First, as a research assistant, to look up tools related to deterministic equivalence and to point us toward integral identities for fractional matrix powers, which we then verified and derived independently.
Second, as a coding agent to help us generate plotting and analysis code for our results.
Third, as a writing aid, for polishing technical text and providing feedback on clarity and presentation of the whole paper.

\end{document}